\def\eqref#1{equation~\ref{#1}}
\def\1{\bm{1}}
\DeclareMathAlphabet{\mathsfit}{\encodingdefault}{\sfdefault}{m}{sl}
\SetMathAlphabet{\mathsfit}{bold}{\encodingdefault}{\sfdefault}{bx}{n}
\theoremstyle{plain}
\newtheorem{theorem}{Theorem}[section]
\newtheorem{lemma}[theorem]{Lemma}
\theoremstyle{definition}
\theoremstyle{remark}
\newcommand{\COMMENTOUT}[1]{}
\newcommand{\blank}{{-}}
\newcommand{\proposed}{$K$-LVR}
\newcommand{\mathcalV}{\mathcal{V}}
\newcommand{\mathcalVsub}{\mathcal{V}_\mathrm{sub}}
\newcommand{\mathcalA}{\mathcal{A}}
\newcommand{\mathcalT}{\mathcal{T}}
\newcommand{\mathcalM}{\mathcal{M}}
\newcommand{\cover}{C}
\title{Lossless Vocabulary Reduction \\ for Auto-Regressive Language Models}
\author{
\hspace{-11pt} \begin{tabular}{l}
  Daiki Chijiwa\textsuperscript{1}\thanks{Correspondence to: \texttt{daiki.chijiwa@ntt.com}} \ , Taku Hasegawa\textsuperscript{2}, Kyosuke Nishida\textsuperscript{2}, Shin'ya Yamaguchi\textsuperscript{1},
   \\ Tomoya Ohba\textsuperscript{1},
  Tamao Sakao\textsuperscript{1}, Susumu Takeuchi\textsuperscript{1}
  \end{tabular} 
  \vspace{5pt}
  \\
  \textsuperscript{1}NTT Computer and Data Science Laboratories, NTT Corporation \\
  \textsuperscript{2}NTT Human Informatics Laboratories, NTT Corporation
  \vspace{-15pt}
}
\begin{document}

\maketitle

\begin{abstract}
    Tokenization---the process of decomposing a given text into a sequence of subwords called {\it tokens}---is one of the key components in the development of language models.
    Particularly, auto-regressive language models generate texts token by token, i.e., by predicting the next-token distribution given the previous ones, and thus tokenization directly affects their efficiency in text generation.
    Since each language model has their own vocabulary as a set of possible tokens, they struggle to cooperate with each other at the level of next-token distributions such as model ensemble.
    In this paper, we establish a theoretical framework of lossless vocabulary reduction, which efficiently converts a given auto-regressive language model into the one with an arbitrarily small vocabulary without any loss in accuracy.
    This framework allows language models with different tokenization to cooperate with each other efficiently by reduction to their maximal common vocabulary.
    Specifically, we empirically demonstrate its applicability to model ensemble with different tokenization.
\end{abstract}

\section{Introduction}

Tokenization---the process of decomposing a given text into a sequence of subwords called {\it tokens}---plays an important role in modern language models~\citep{schuster2012wordpiece,sennrich2016nmt,kudo2018subword}, where tokens are the minimum unit for their input and output.
Particularly, tokenization largely affects the efficiency of text generation with auto-regressive language models~\citep{radford2019gpt2} which are trained to generate texts by first computing the next-token distribution given previous tokens and then sampling a token from it iteratively.
In other words, the longer tokens are sampled at each iteration on average, the less number of sampling iterations is required for text generation.

Each language model has its own {\it vocabulary}, the set of all possible tokens, which is generally constructed based on the statistics in their training data so that more plausible texts can be represented by less numbers of tokens.
As a result, given two (or more) language models that have been trained independently with distinct training data, their vocabularies do not match in general.
Due to the vocabulary mismatch, \textit{language models with different tokenizers or vocabularies struggle to cooperate with each other at the level of their next-token distributions}, such as ensemble~\citep{hinton1999products}, knowledge distillation~\citep{hinton2015distilling}, speculative decoding~\citep{leviathan2023speculative}, inference-time alignment~\citep{mitchell2024emulator}, etc.

To this problem, recent work~\citep{phan2025exact,vieira2025language} have proposed a theoretically-guaranteed approach that reduces a given next-token distribution to the corresponding next-\textit{byte} distribution, without changing the distribution of generated texts.
In other words, the resulting byte-level distribution is equivalent to the original token-level one as a probabilistic text generator, while its vocabulary being restricted to the set of all one-byte tokens, $\langle\mathtt{0x00}\rangle$ to $\langle\mathtt{0xFF}\rangle$.
This approach enables language models with different vocabularies to cooperate with each other, at the level of their next-byte distributions.
However, the byte-level cooperation leads to increased inference costs by its nature, since each model has to predict byte by byte instead of tokens with multiple bytes.

In this paper, beyond the byte-level reduction, we establish the first theoretical framework called \textbf{lossless vocabulary reduction} that reduces a given next-token distribution to the corresponding one over an \textit{arbitrary sub-vocabulary} without changing its behavior as a text generator (\Cref{fig:summary of lossless vocabulary reduction}), which is achieved by introducing the new notion of \textbf{nested tokenization}.
Then we derive an efficient approximated algorithm to compute it with negligibly small overhead.
As an application, we propose to cooperate language models with different vocabularies by lossless reduction to their \textbf{maximal common vocabulary} among them.

Finally, our contributions in this paper can be summarized as follows:
\begin{enumerate}
    \item We established the first theoretical framework of lossless vocabulary reduction and derived an efficient approximated algorithm that converts given next-token distributions over some vocabulary to the equivalent distributions over its arbitrary sub-vocabulary in inference-time. We also provided an illustrative example of how it actually works following our theory.
    \item Experimental results with several language models show that the derived algorithm is actually almost lossless as our theory suggests. Also, experimental results on ensemble show that ensemble over the maximal common vocabulary achieves comparable accuracy to the byte-level ensemble while the former is more efficient than the latter.
\end{enumerate}

\section{Preliminaries}

\begin{figure}[t]
    \centering
    \includegraphics[height=3.4cm,trim={0cm 0cm 0cm 0,0cm},clip]{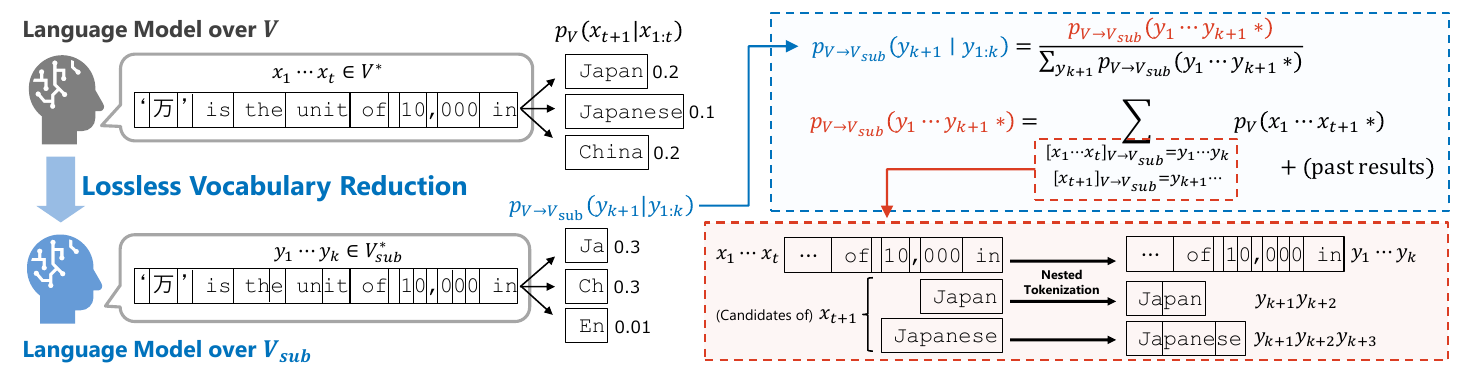}
    \vspace{-7pt}
    \caption{Overview of lossless vocabulary reduction. Instead of sampling tokens from the original next-token distribution over $\mathcalV$, we can inductively compute and sample from the equivalent distribution over the sub-vocabulary $\mathcalVsub$ while keeping its accuracy. See \Cref{sec:lossless vocabulary reduction} for notations and details.}
    \label{fig:summary of lossless vocabulary reduction}
\end{figure}

Throughout this paper, we assume that any text is a sequence of bytes obtained by some character encoding, typically by UTF-8~\citep{yergeau2003utf8}.
In this section, we briefly introduce the formal definitions and properties of texts, tokenization, and language models.

\subsection{Formulation of Tokens}

\paragraph{Texts.} Let $\mathcalA$ be a set of symbols, such as all alphabets or all characters.
Throughout this paper, $\mathcalA$ is considered as a set of all bytes, i.e.,  $8$-bit strings $b_1\cdots b_8$ with $b_i \in \{0, 1\}$.
Let $\mathcalA^*:= \bigcup_{k=1}^\infty A^k \cup \{\emptyset\} = \{ a_1\cdots a_N \mid a_i \in \mathcal{A}, N\in\mathbb{N}\}$\footnote{The construction of $\mathcalA^*$ is also known as the Kleene closure of $\mathcalA$, and $\emptyset$ means the empty string.} be the set of all finite sequences of bytes with an empty symbol $\emptyset$.
We often use the notation $a_{1:N} := a_1\cdots a_N$ for simplicity.
Here we briefly note that (1) $\mathcalA$ consists of only $256$ elements, obviously less than all symbols in the real world, (2) $\mathcalA^*$ consists of all possible texts in computers because they are represented by sequences of bytes.

\paragraph{Tokenization.} A tokenization scheme $\mathcal{T}$ is formally defined\footnote{Throughout this paper, we focus only on deterministic tokenization which is widely employed in modern language models.} as a triplet $(\mathcalV, [\blank]_\mathcalV, [\blank]_\mathcalA)$, with a finite set of vocabulary $\mathcalV$, an encoder $[\blank]_\mathcalV: \mathcalA^* \to \mathcalV^*$ and a decoder $[\blank]_\mathcalA: \mathcalV^* \to \mathcalA^*$ satisfying:
\begin{gather}
    [[a_{1:N}]_\mathcalV]_\mathcalA = a_{1:N} \text{ for all texts } a_{1:N} \in \mathcalA^*, \text{and } \label{eq:condition for encoder and decoder}\\
    [x_{1:T}]_{\mathcalA} = [x_1]_\mathcalA \cdots [x_T]_\mathcalA \text{ for all tokens } x_{1:T} \in \mathcalV^*.\nonumber
\end{gather}
Each $x\in \mathcalV$ is called a token in $\mathcalT$, and has its textual representation $[x]_\mathcalA \in \mathcalA^*$ given by the decoder.
The encoder $[\blank]_\mathcalV$ uniquely converts a given text $a_{1:N}\in\mathcalA^*$ to the corresponding tokens $x_{1:T} = [a_{1:N}]_\mathcalV$ with some length $T$.
Note that the condition~(\ref{eq:condition for encoder and decoder}) has first appeared in \citet{kudo2018sentencepiece} by treating texts as a sequence of Unicode bytes such as UTF-8~\citep{yergeau2003utf8}, which is now employed by most of modern language models after \citet{radford2019gpt2}.

\subsection{Formulation and Properties of Language Models}

\paragraph{Language model over tokens.} Let $p_{\mathrm{text}}(a_{1:N})$ be the true distribution of natural language texts, defined over sequences of $8$-bit binaries $a_{1:N} \in \mathcalA^*$.
A language model $p_\mathcalV(x_{1:T})$, with respect to the tokenization $\mathcalT=(\mathcalV, [\blank]_\mathcalV, [\blank]_\mathcalA)$, is defined as the probabilistic distribution over sequences of tokens $x_{1:T}\in\mathcalV^*$ that matches the distribution of the encoded sequences $[a_{1:N}]_\mathcalV$ of natural texts $a_{1:N} \sim p_\mathrm{text}(a_{1:N})$.
In other words, it is defined as
\begin{equation}\label{eq:language model associated to text distribution}
    p_\mathcalV(x_{1:T}) := \begin{cases}
        p_\mathrm{text}(a_{1:N}),  & \text{if } x_{1:T} = [a_{1:N}]_\mathcalV \text{ for some (unique) } a_{1:N} \in \mathcalA^*,\\
        0, & \text{otherwise.}
    \end{cases}
\end{equation}
Then we say that $p_\mathcalV(x_{1:T})$ is a token distribution associated with the true text distribution $p_\mathrm{text}(a_{1:N})$.

\paragraph{Next-token distribution.} From a computational perspective, it is infeasible to implement the token distribution $p_\mathcalV(x_{1:T})$ directly. Rather, it is standard to implement its \textit{next-token distribution} denoted as $p_\mathcalV(x_t \mid  x_{1:t-1})$, which we formally define here.
First of all, let us introduce $x_1\cdots x_t *$, or $x_{1:t}*$ in shorthand, a set of all tokens starting with $x_1\cdots x_t$:
\begin{equation}
    x_1\cdots x_t * := \{ x_1\cdots x_t x_{t+1}\cdots x_{T} \in\mathcalV^* \mid x_{t+1:T} \in \mathcalV^*, T\in\mathbb{N} \} \subset \mathcalV^*,
\end{equation}
which plays a central role here. We can consider the corresponding probability to this event:
\begin{equation}
    p_\mathcalV(x_1\cdots x_t *) = \sum_{x_{1:T}\in x_1\cdots x_t* } p_\mathcalV(x_1\cdots x_t x_{t+1} \cdots x_T)    
\end{equation}
Now the next-token distribution $p_\mathcalV(x_t \mid  x_{1:t-1})$ is defined as the conditional probability
\begin{equation}
    p_\mathcalV(x_t \mid  x_{1:t-1}) := p_\mathcalV(x_{1:t}* \mid  x_{1:t-1}*) = \frac{p_\mathcalV(x_{1:t}*)}{p_\mathcalV(x_{1:t-1}*)}.    
\end{equation}
We can sample a full sequence $x_{1:T} \in \mathcalV^*$ from the distribution $p_\mathcalV(x_{1:T})$ by recursively sampling $x_t$ from the next-token distribution $p_\mathcalV(x_t\mid x_{1:t-1})$ starting from the empty string $x_0 = \emptyset$.
Hence, it is sufficient to implement the next-token distribution $p_\mathcalV(x_t\mid x_{1:t-1})$ for the task of sequential text generation from $p_\mathcalV(x_{1:T})$.
Throughout this paper, we assume that the next-token distribution $p_\mathcalV(x_t\mid x_{1:t-1})$ can be computed for all $x_t \in \mathcalV$ in parallel by a single unit of computation, as an output of a neural network followed by the softmax layer over the vocabulary $\mathcalV$.

\paragraph{Valid tokens and minimal covering.}

Let $x_{1:T} \in \mathcalV^*$ be a sequence of tokens.
We call $x_{1:T}$ is a \emph{valid tokens} if it satisfies the following inverse relation of \Cref{eq:condition for encoder and decoder}:
\begin{align}\label{eq:condition for valid tokens}
    [[x_{1:T}]_\mathcalA]_\mathcalV = x_{1:T}.
\end{align}
In other words, $x_{1:T}$ is valid if and only if it can be obtained by tokenizing some text $a_{1:N}$, i.e., $x_{1:T} = [a_{1:N}]_\mathcalV$.
Indeed, in the latter case, we can see $x_{1:T}$ is valid since $[[x_{1:T}]_\mathcalA]_\mathcalV = [[[a_{1:N}]_\mathcalV]_\mathcalA]_\mathcalV = [a_{1:N}]_\mathcalV = x_{1:T}$ by \Cref{eq:condition for encoder and decoder}.
Obviously, if $x_{1:T}$ is a valid tokens, it is obtained by tokenizing $a_{1:N} := [x_{1:T}]_\mathcalA$, the stringification of $x_{1:T}$, clearly followed by \Cref{eq:condition for valid tokens}.
Note that \Cref{eq:condition for valid tokens} generally does not hold solely by \Cref{eq:condition for encoder and decoder}, and actually most sequences of tokens are invalid~\citep{phan2024understanding}.

It is easy to see that the language model $p_\mathcalV(x_{1:T})$ associated with the true text distribution $p_\mathrm{text}(a_{1:N})$ satisfies the following \emph{validity condition}~\citep{phan2024understanding}:
\begin{equation}\label{eq:validity condition}
p_\mathcalV(x_{1:T}) = 0, \ \ \ \text{ for any invalid tokens } x_{1:T}\in\mathcalV^*.
\end{equation}
Indeed, by the above discussion, the invalid tokens $x_{1:T}$ is never obtained by tokenizing any text $a_{1:N}$.
Thus the definition of $p_\mathcalV(x_{1:T})$ falls into the second case in \Cref{eq:language model associated to text distribution}.

Importantly, the validity condition leads to simplification of probability computation for language models.
To explain it, let us define the minimal cover $\cover_\mathcalV(a_{1:N})$ for given texts $a_{1:N}$ by
\begin{equation}
    \cover_\mathcalV(a_{1:N}) := \{ x_{1:T} \in \mathcalV^* \mid a_{1:N} \not\prec [x_{1:T-1}]_\mathcalA, a_{1:N} \prec [x_{1:T}]_\mathcalA, \text{and } x_{1:T} \text{ is valid.} \}.
\end{equation}
Note that this is not an empty set because $[a_{1:N}]_\mathcalV \in \cover_\mathcalV(a_{1:N})$ always holds.
Using this notion, \citet{phan2025exact} and \citet{vieira2025language} provided the following formula for computing the underlying distribution over alphabets from the language model over tokens:
\begin{lemma}[\citet{phan2025exact,vieira2025language}]\label{lemma:byte-level probability}
    Let $\mathcalV$ be any vocabulary and $p_\mathcalV$ be the token distribution associated with the text distribution $p_\mathrm{text}$. For any string $a_1\cdots a_N \in \mathcal{A}^N$, we have
    \begin{align}
        p_\mathrm{text}(a_1 \cdots a_N *) = \sum_{x_{1:T}\in \cover_\mathcalV(a_{1:N})} p_\mathcalV(x_1\cdots x_T *).
    \end{align}
\end{lemma}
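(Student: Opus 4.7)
My plan is to prove the identity by writing both sides as weighted sums indexed by text extensions $a'_{1:N'}$ satisfying $a_{1:N}\prec a'_{1:N'}$, and matching them term by term. The essential combinatorial content will be that, for each such extension, the canonical tokenization $[a'_{1:N'}]_\mathcalV$ has exactly one token prefix that belongs to $\cover_\mathcalV(a_{1:N})$.

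First I would unfold the left-hand side using the definition of the ``$*$''-event and \Cref{eq:language model associated to text distribution}:
\[
p_\mathrm{text}(a_{1:N}*) \;=\; \sum_{a_{1:N}\prec a'_{1:N'}} p_\mathrm{text}(a'_{1:N'}) \;=\; \sum_{a_{1:N}\prec a'_{1:N'}} p_\mathcalV([a'_{1:N'}]_\mathcalV).
\]
Next I would unfold the right-hand side: for each $x_{1:T}\in\cover_\mathcalV(a_{1:N})$, expand $p_\mathcalV(x_{1:T}*)$ as a sum over all token extensions $y_{1:S}\in x_{1:T}*$, restrict via \Cref{eq:validity condition} to valid $y_{1:S}$, and identify each such $y_{1:S}$ with the unique text $a'_{1:N'}=[y_{1:S}]_\mathcalA$ for which $p_\mathcalV(y_{1:S})=p_\mathrm{text}(a'_{1:N'})$. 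Swapping the order of summation then gives
\[
\sum_{x_{1:T}\in\cover_\mathcalV(a_{1:N})} p_\mathcalV(x_{1:T}*) \;=\; \sum_{a'_{1:N'}} p_\mathrm{text}(a'_{1:N'})\cdot m(a'_{1:N'}),
\]
where $m(a'_{1:N'})$ counts the cover elements that are token prefixes of $[a'_{1:N'}]_\mathcalV$. The proof then reduces to establishing $m(a'_{1:N'})=\mathbf{1}[a_{1:N}\prec a'_{1:N'}]$.

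The ``$m\le 1$'' and ``$m=0$'' halves are clean combinatorial consequences of the decoder identity $[x_{1:T}]_\mathcalA=[x_1]_\mathcalA\cdots[x_T]_\mathcalA$. For uniqueness, if two cover elements $x_{1:T}$ and $\tilde x_{1:\tilde T}$ with $T<\tilde T$ were both token prefixes of the same tokenization, then $[x_{1:T}]_\mathcalA$ would be a prefix of $[\tilde x_{1:\tilde T-1}]_\mathcalA$, yielding $a_{1:N}\prec[\tilde x_{1:\tilde T-1}]_\mathcalA$ and violating the minimality clause in the definition of the cover. For the ``$m=0$'' half, any token prefix $x_{1:T}$ of $[a'_{1:N'}]_\mathcalV$ satisfies $[x_{1:T}]_\mathcalA\prec a'_{1:N'}$, so a cover element in this position would force $a_{1:N}\prec[x_{1:T}]_\mathcalA\prec a'_{1:N'}$.

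The main obstacle will be existence: given $a'_{1:N'}$ with $a_{1:N}\prec a'_{1:N'}$ and canonical tokenization $y_{1:S}:=[a'_{1:N'}]_\mathcalV$, I must exhibit a cover element that is a token prefix of $y_{1:S}$. The natural candidate is $y_{1:T}$ where $T$ is the smallest index with $a_{1:N}\prec[y_{1:T}]_\mathcalA$, which automatically satisfies the two prefix conditions in the definition of $\cover_\mathcalV(a_{1:N})$. What is not automatic is the validity clause $[[y_{1:T}]_\mathcalA]_\mathcalV=y_{1:T}$, i.e.\ that prefixes of valid token sequences remain valid. I would handle this by a short auxiliary lemma leveraging the prefix-coherence of the standard deterministic tokenizers (BPE, WordPiece, Unigram) considered in the paper, a property implicitly invoked by \citet{phan2025exact,vieira2025language}. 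Granted this prefix-validity, $m(a'_{1:N'})=\mathbf{1}[a_{1:N}\prec a'_{1:N'}]$ holds universally, and combining with the unfolded left-hand side completes the proof.
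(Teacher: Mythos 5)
Your proof is correct modulo the prefix-validity lemma you explicitly flag, and it follows essentially the same route as the paper's appendix proof of Lemma~\ref{app:lem:decomposition of marginal distribution} (of which Lemma~\ref{lemma:byte-level probability} is the $\mathcalVsub=\mathcalA$ special case): both arguments partition the probability-carrying extensions according to the unique minimal-length prefix lying in the cover, and your multiplicity identity $m(a'_{1:N'})=\mathbf{1}[a_{1:N}\prec a'_{1:N'}]$ is just an explicit restatement of that partition. The one substantive point of comparison is precisely the prefix-validity property you single out (every prefix of a valid token sequence is valid). This is not a cosmetic nicety but a genuinely necessary hypothesis that the paper's own proof uses silently: in the step of Lemma~\ref{app:lem:decomposition of marginal distribution} where the sum over all $x_{1:t}\in\mathcalV^t$ satisfying the two boundary conditions is replaced by the sum over $\cover_{\mathcalV,\mathcalVsub}^{(t)}(y_{1:k})$, one must know that $p_\mathcalV(x_{1:t}*)=0$ for every invalid prefix $x_{1:t}$, and this requires that no valid sequence extend an invalid prefix. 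Conditions~(\ref{eq:condition for encoder and decoder}) and~(\ref{eq:validity condition}) do not imply this by themselves --- for instance, one can build a decoder-consistent tokenizer over $\{a\}$ with $\mathcalV=\{\langle\mathtt{a}\rangle,\langle\mathtt{aa}\rangle\}$, $[aa]_\mathcalV=\langle\mathtt{a}\rangle\langle\mathtt{a}\rangle$ but $[aaaa]_\mathcalV=\langle\mathtt{aa}\rangle\langle\mathtt{aa}\rangle$, making $\langle\mathtt{aa}\rangle$ invalid while $\langle\mathtt{aa}\rangle\langle\mathtt{aa}\rangle$ is valid, and the identity then fails --- so surfacing prefix-validity as an explicit assumption on the tokenizer is a genuine improvement in rigor over both the paper's appendix proof and the brief treatment here.
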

Since the minimal cover contains only valid tokens, the sum of the right-hand side will be computationally feasible, contrary to the case without the validity condition where the minimal cover may grow exponentially.
In particular, based on this formula, \citet{phan2025exact} derived an efficient algorithm to compute the next-byte distribution $p_\mathrm{text}(a_t \mid  a_{1:t-1})$ from the one over tokens.

\section{Lossless Vocabulary Reduction}\label{sec:lossless vocabulary reduction}

In this section, we establish a theory and derive an algorithm to restrict the token vocabulary $\mathcalV$ of a language model $p_\mathcalV(x_{1:T})$ to any given sub-vocabulary $\mathcalVsub$, without changing its behavior as a text generator.
The theory and algorithm in this section will be leveraged for cooperation of language models with different vocabularies in the later sections.

\subsection{Formulation of Vocabulary Reduction}

Suppose that we have a language model $p_\mathcalV(x_{1:T})$, a distribution over tokens in a vocabulary $\mathcalV$ associated with the true text distribution $p_{\mathrm{text}}(a_{1:N})$, and denote its tokenizer $\mathcalT_\mathcalV=(\mathcalV, [\blank]_\mathcalV, [\blank]_\mathcalA)$.

\paragraph{Nested tokenization.}
Let $\mathcalVsub \subset \mathcalV$ be a sub-vocabulary, i.e., an arbitrary subset of the given vocabulary $\mathcalV$, and $\mathcalT_{\mathcalVsub} = (\mathcalVsub, [\blank]_{\mathcalVsub}, [\blank]_\mathcalA)$ be some tokenization scheme with the sub-vocabulary.
Here we do not necessarily impose any relation between $[\blank]_\mathcalV$ and $[\blank]_{\mathcalVsub}$ except for the inclusion between the vocabularies.

Given such a tokenization scheme $\mathcalT_{\mathcalVsub}$ with the sub-vocabulary $\mathcalVsub$, we can define the \emph{nested tokenization} $\mathcalT_{\mathcalV\to\mathcalVsub} = (\mathcalVsub, [\blank]_{\mathcalV\to\mathcalVsub}, [\blank]_\mathcalA)$ that tokenizes text by applying $[\blank]_\mathcalV$ and $[\blank]_{\mathcalVsub}$:
\begin{gather}
    [a_{1:N}]_{\mathcalV\to\mathcalVsub} := [[a_{1:N}]_{\mathcalV}]_{\mathcalV\to\mathcalVsub} \text{ for } a_{1:N} \in \mathcalA^*, \text{ with }\\
    [x_{1:T}]_{\mathcalV\to\mathcalVsub} := [x_1]_{\mathcalV\to\mathcalVsub}\cdots[x_T]_{\mathcalV\to\mathcalVsub}, \ [x_t]_{\mathcalV\to\mathcalVsub} := [[x_t]_{\mathcalA}]_{\mathcalVsub} \text{ for } x_{1:T} \in\mathcalV^*, \nonumber
\end{gather}
where we abuse the notation $[\blank]_{\mathcalV\to\mathcalVsub}$ for denoting both $[\blank]_{\mathcalV\to\mathcalVsub}:\mathcalA^*\to\mathcalVsub^*$ and $[\blank]_{\mathcalV\to\mathcalVsub}:\mathcalV^*\to\mathcalVsub^*$, as well as $[\blank]_{\mathcalA}$, for simplicity.
The nested tokenization will play a central role in our vocabulary reduction.
If we consider the case of $\mathcalVsub = \mathcalA$ and $\mathcalT_{\mathcalA} = (\mathcalA, \mathrm{id}_\mathcalA, \mathrm{id}_\mathcalA)$, the nested tokenization $\mathcalT_{\mathcalV\to\mathcalA}$ is just the stringification of given tokens.

\paragraph{Vocabulary reduction.}
Here we introduce a new language model $p_{\mathcalV \to \mathcalVsub}(y_{1:K})$ over tokens in the sub-vocabulary $\mathcalVsub$, which is induced by the given language model $p_\mathcalV(x_{1:T})$ and the nested tokenization $\mathcalT_{\mathcalV\to\mathcalVsub}$ as follows:
\begin{align}\label{eq:definition of vocabulary reduction}
    p_{\mathcalV\to\mathcalVsub}(y_{1:K}) &:= \mathbb{E}_{x_{1:T}\sim p_\mathcalV(x_{1:T})} [ \mathds{1}\{y_{1:K} = [x_{1:T}]_{\mathcalV\to\mathcalVsub}\} ] 
    = \mathbb{P} ( y_{1:K} = [x_{1:T}]_{\mathcalV\to\mathcalVsub} ),
\end{align}
i.e., the probability that $y_{1:K}$ is obtained as a re-tokenization of $x_{1:T} \in \mathcalV^*$ by the sub-vocabulary $\mathcalVsub$.
We call $p_{\mathcalV\to\mathcalVsub}(y_{1:K})$ the {\it vocabulary reduction} of $p_\mathcalV(x_{1:T})$ onto $\mathcalVsub$.

For example, if we employ the above $\mathcalT_{\mathcalV\to\mathcalA}$ as the nested tokenization, the induced distribution $p_{\mathcalV\to\mathcalA}(a_{1:N})$ is nothing but the text distribution induced by the stringification of tokens $x_{1:T}$ from $p_\mathcalV(x_{1:T})$.
Particularly, if $p_\mathcalV(x_{1:T})$ is associated with the true text distribution $p_\mathrm{text}(a_{1:N})$, we have
\begin{align}
    p_{\mathcalV\to\mathcalA}(a_{1:N}) = \mathbb{P} ( a_{1:N} = [x_{1:T}]_{\mathcalA} ) = p_\mathcalV ( [a_{1:N}]_\mathcalV ) = p_\mathrm{text}(a_{1:N}),
\end{align}
by \Cref{eq:language model associated to text distribution}.
Thus our definition of vocabulary reduction involves the previous byte-level reduction~\citep{phan2025exact,vieira2025language} as its special case with $\mathcalVsub = \mathcalA$.

Moreover, the vocabulary reduction $p_{\mathcalV\to\mathcalVsub}$ is {\it lossless}.
To see this, we consider the induced text distribution $p_{\mathcalV\to\mathcalVsub\to\mathcalA}(a_{1:N})$, i.e., the text distribution obtained as the vocabulary reduction of $p_{\mathcalV\to\mathcalVsub}$ by another nested tokenization $\mathcalT_{\mathcalVsub\to\mathcalA}$.
By iteratively applying \Cref{eq:definition of vocabulary reduction}, we have
\begin{align}
    p_{\mathcalV\to\mathcalVsub\to\mathcalA}(a_{1:N}) &= \mathbb{E}_{y_{1:K}\sim p_{\mathcalV\to\mathcalVsub}(y_{1:K})} [ \mathds{1}\{a_{1:N} = [y_{1:K}]_\mathcalA \} ] \nonumber\\
    &= \mathbb{E}_{x_{1:T}\sim p_{\mathcalV}(x_{1:T})} [ \mathds{1}\{a_{1:N} = [[x_{1:T}]_{\mathcalV\to\mathcalVsub}]_{\mathcalA} \} ] \nonumber\\
    &= \mathbb{E}_{x_{1:T}\sim p_{\mathcalV}(x_{1:T})} [ \mathds{1}\{a_{1:N} = [x_{1:T}]_{\mathcalA} \} ] \nonumber\\
    &= p_{\mathcalV\to\mathcalA}(a_{1:N}) \nonumber
\end{align}
By combining these equalities, we have proved the lossless property of vocabulary reduction:
\begin{theorem}\label{thm:lossless reduction}
    Let $p_\mathrm{text}$ be the true text distribution underlying the language model $p_\mathcalV$.
    Let us denote the text distribution associated with $p_{\mathcalV\to\mathcalVsub}$ by $p_{\mathcalV\to\mathcalVsub\to\mathcalA}(a_{1:N})$.
    Then we have
    \begin{align}
        p_{\mathcalV\to\mathcalVsub\to\mathcalA}(a_{1:N}) = p_{\mathcalV \to \mathcalA}(a_{1:N}) = p_\mathrm{text}(a_{1:N}).
    \end{align}
\end{theorem}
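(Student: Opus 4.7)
The plan is to reduce the theorem to two elementary facts: (i) that the nested tokenization is itself a bona fide tokenization scheme whose decoder agrees with the original decoder on $\mathcalV^*$, and (ii) that expectations compose through the measurable map defining vocabulary reduction. The body of the excerpt has almost carried out this proof; my task is to make the chain of equalities rigorous by isolating the two non-trivial identities that make the intermediate steps valid.

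First, I would prove the key \emph{compatibility lemma}: for every $x_{1:T}\in\mathcalV^*$,
\begin{equation*}
    [[x_{1:T}]_{\mathcalV\to\mathcalVsub}]_\mathcalA \;=\; [x_{1:T}]_\mathcalA .
\end{equation*}
For a single token $x\in\mathcalV$, by definition $[x]_{\mathcalV\to\mathcalVsub} = [[x]_\mathcalA]_\mathcalVsub$, so applying $[\blank]_\mathcalA$ and using the encoder--decoder condition~(\ref{eq:condition for encoder and decoder}) for $\mathcalT_{\mathcalVsub}$ yields $[[x]_{\mathcalV\to\mathcalVsub}]_\mathcalA = [[[x]_\mathcalA]_\mathcalVsub]_\mathcalA = [x]_\mathcalA$. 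The sequence case then follows from the concatenation-distributivity of $[\blank]_\mathcalA$ on $\mathcalVsub^*$ (the second part of~(\ref{eq:condition for encoder and decoder})), since
\begin{equation*}
    [[x_{1:T}]_{\mathcalV\to\mathcalVsub}]_\mathcalA = \big[\,[x_1]_{\mathcalV\to\mathcalVsub}\cdots[x_T]_{\mathcalV\to\mathcalVsub}\,\big]_\mathcalA = \prod_{t=1}^T [[x_t]_{\mathcalV\to\mathcalVsub}]_\mathcalA = \prod_{t=1}^T [x_t]_\mathcalA = [x_{1:T}]_\mathcalA.
\end{equation*}

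With the lemma in hand, I would chain the definitions exactly as indicated in the paragraph preceding the statement. Applying~(\ref{eq:definition of vocabulary reduction}) twice gives
\begin{equation*}
    p_{\mathcalV\to\mathcalVsub\to\mathcalA}(a_{1:N}) = \mathbb{E}_{x_{1:T}\sim p_\mathcalV}\big[\mathds{1}\{a_{1:N}=[[x_{1:T}]_{\mathcalV\to\mathcalVsub}]_\mathcalA\}\big],
\end{equation*}
where the inner application is the definition of $p_{\mathcalV\to\mathcalVsub}$ under the sampling $y_{1:K}=[x_{1:T}]_{\mathcalV\to\mathcalVsub}$, and the outer one uses the nested tokenization $\mathcalT_{\mathcalVsub\to\mathcalA}$. (Strictly this is the law of the unconscious statistician, which is immediate since the relevant events are determined by deterministic functions of $x_{1:T}$.) The compatibility lemma now collapses the indicator to $\mathds{1}\{a_{1:N}=[x_{1:T}]_\mathcalA\}$, and the expectation on the right is exactly $p_{\mathcalV\to\mathcalA}(a_{1:N})$ by the definition of vocabulary reduction applied with $\mathcalT_{\mathcalV\to\mathcalA}$.

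Finally, the second equality $p_{\mathcalV\to\mathcalA}(a_{1:N}) = p_\mathrm{text}(a_{1:N})$ is already established in the excerpt via $p_\mathcalV([a_{1:N}]_\mathcalV)=p_\mathrm{text}(a_{1:N})$ from~(\ref{eq:language model associated to text distribution}), so I would simply invoke it. The only subtle step is the compatibility lemma: it is the one place where the hypothesis that $\mathcalT_{\mathcalVsub}$ satisfies the tokenization axioms~(\ref{eq:condition for encoder and decoder}) is used in an essential way, and it is also where one must be careful that the two meanings of $[\blank]_{\mathcalV\to\mathcalVsub}$ (on $\mathcalA^*$ and on $\mathcalV^*$) line up correctly so that decoding through $\mathcalVsub$ truly recovers the original byte string. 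Everything else is bookkeeping with indicator functions.
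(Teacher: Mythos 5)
Your proof is correct and takes essentially the same approach as the paper: the paper establishes the result by the very chain of expectations you write down (applying the definition of vocabulary reduction twice, collapsing the indicator via the identity $[[x_{1:T}]_{\mathcalV\to\mathcalVsub}]_\mathcalA=[x_{1:T}]_\mathcalA$, and then invoking the already-derived $p_{\mathcalV\to\mathcalA}=p_\mathrm{text}$). Your only addition is to isolate and explicitly prove the compatibility lemma behind the third equality, which the paper uses without comment; that step is a useful bit of rigor but not a different route.
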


\subsection{Computation of the Next-Token Distribution}\label{sec:computation of next-token distribution}

At first glance, based on the above formulation, the computation of vocabulary reduction appears to be simple and easy at the level of probabilities over sentences, i.e., over $\mathcalVsub^*$ and $\mathcalV^*$.
But in reality, the probabilities over sentences are computationally infeasible in the real world.
Instead, we have to compute the corresponding next-token distribution $p_{\mathcalV\to\mathcalVsub}(y_{k+1} \mid y_{1:k})$ over $\mathcalVsub$ using the original next-token distribution $p_\mathcalV(x_{t+1} \mid x_{1:t})$ over $\mathcalV$.
Since the next-token distribution $p_{\mathcalV\to\mathcalVsub}(y_{k+1}\mid y_{1:k})$ can be expressed as
\begin{equation}\label{eq:next-token distribution is normalization of marginal distribution}
p_{\mathcalV\to\mathcalVsub}(y_{k+1}\mid y_{1:k}) = \frac{p_{\mathcalV\to\mathcalVsub}(y_{1:k+1}*)}{p_{\mathcalV\to\mathcalVsub}(y_{1:k}*)} = \frac{p_{\mathcalV\to\mathcalVsub}(y_{1:k+1}*)}{\sum_{y_{k+1}\in\mathcalVsub} p_{\mathcalV\to\mathcalVsub}(y_{1:k+1}*)}
\end{equation}
by its definition, the problem is how to compute the marginal distributions $p_{\mathcalV\to\mathcalVsub}(y_{1:k}*)$.
To answer this problem, we introduce the {\it relative covering} $\cover_{\mathcalV,\mathcalVsub}(y_{1:k})$, generalizing the minimal covering for $\mathcalV$ considered in~\citet{phan2025exact,vieira2025language}, as follows:
\begin{align}
    \cover_{\mathcalV,\mathcalVsub}(y_{1:k}) := \{ x_{1:t}\in\mathcalV^* \mid x_{1:t} \text{ is valid and satisfies } &y_{1:k} \prec [x_{1:t}]_{\mathcalV\to\mathcalVsub} \nonumber\\ \text{and } &y_{1:k} \not\prec [x_{1:t-1}]_{\mathcalV\to\mathcalVsub}  \},
\end{align}

\begin{lemma}\label{lem:decomposition of marginal distribution}
    Let $p_\mathcalV$ and $p_{\mathcalV\to\mathcalVsub}$ be as above. For any tokens $y_{1:k} \in \mathcalV^*$, we have
    \begin{align}\label{eq:decomposition of marginal distribution}
        p_{\mathcalV\to\mathcalVsub}(y_{1:k} *) = \sum_{x_{1:t}\in \cover_{\mathcalV,\mathcalVsub}(y_{1:k})} p_\mathcalV(x_{1:t} *)
    \end{align}
\end{lemma}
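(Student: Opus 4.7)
The plan is to unfold $p_{\mathcalV\to\mathcalVsub}(y_{1:k}*)$ back to an expectation under $p_\mathcalV$, and then partition the resulting event according to the earliest prefix of $x_{1:T}$ that already ``covers'' $y_{1:k}$ under the nested tokenization. By \Cref{eq:definition of vocabulary reduction} together with linearity of expectation,
\begin{align*}
p_{\mathcalV\to\mathcalVsub}(y_{1:k}*)
&= \sum_{y_{1:K}\in y_{1:k}*} \mathbb{E}_{x_{1:T}\sim p_\mathcalV}\bigl[\mathds{1}\{y_{1:K}=[x_{1:T}]_{\mathcalV\to\mathcalVsub}\}\bigr] \\
&= \mathbb{P}_{x_{1:T}\sim p_\mathcalV}\bigl(y_{1:k}\prec [x_{1:T}]_{\mathcalV\to\mathcalVsub}\bigr),
\end{align*}
so the task reduces to re-expressing this probability as the claimed sum.

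The key step is to introduce the hitting time $t^\star(x_{1:T}) := \min\{t\le T : y_{1:k}\prec [x_{1:t}]_{\mathcalV\to\mathcalVsub}\}$, which is well-defined whenever $y_{1:k}\prec [x_{1:T}]_{\mathcalV\to\mathcalVsub}$. For $x_{1:T}$ in the support of $p_\mathcalV$ (hence valid, by \Cref{eq:validity condition}), the prefix $u := x_{1:t^\star(x_{1:T})}$ satisfies the two defining conditions of $\cover_{\mathcalV,\mathcalVsub}(y_{1:k})$ by construction. Conversely, for any $u\in\cover_{\mathcalV,\mathcalVsub}(y_{1:k})$ every extension $x_{1:T}\in u*$ inherits $y_{1:k}\prec[x_{1:T}]_{\mathcalV\to\mathcalVsub}$ and realizes $t^\star(x_{1:T})=|u|$, because $[u]_{\mathcalV\to\mathcalVsub}$ already contains $y_{1:k}$ as a prefix while no strictly shorter prefix does. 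The minimality in the definition of $t^\star$ further ensures that no element of the cover is a proper prefix of another, so the cylinder sets $\{u*\}_{u\in\cover_{\mathcalV,\mathcalVsub}(y_{1:k})}$ are pairwise disjoint. Combining these observations yields a partition of the event $\{y_{1:k}\prec [x_{1:T}]_{\mathcalV\to\mathcalVsub}\}$, and the identity $\mathbb{P}_{x_{1:T}\sim p_\mathcalV}(x_{1:T}\in u*) = p_\mathcalV(u*)$ completes the summation.

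The main technical subtlety is the implicit requirement that $u=x_{1:t^\star}$ be valid in order to lie in $\cover_{\mathcalV,\mathcalVsub}(y_{1:k})$. This amounts to prefix-closure of validity, a property of the tokenizer $\mathcalT_\mathcalV$ rather than a consequence of \Cref{eq:condition for encoder and decoder} alone. The cleanest route is to assume $\mathcalT_\mathcalV$ is prefix-closed (as holds for the BPE/SentencePiece-style schemes used in the experiments), which is also implicit in the minimal-covering arguments of \citet{phan2025exact,vieira2025language}; otherwise one must verify that $p_\mathcalV$ assigns zero mass to valid $x_{1:T}$ admitting some invalid prefix, so that restricting to prefix-closed-valid histories does not alter the probability. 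Once this point is settled, the remainder is routine bookkeeping with the prefix relation $\prec$.
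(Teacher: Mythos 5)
Your proof is essentially the paper's argument in probabilistic dress: the hitting time $t^\star$ is exactly the index $t$ by which the paper partitions the relative cover into the slices $\cover^{(t)}_{\mathcalV,\mathcalVsub}(y_{1:k})$, and your "cylinder sets are disjoint" observation corresponds to the paper's interchange of the sums over $y_{1:K}$ and $x_{1:T}$ followed by splitting off the first $t$ tokens. The one substantive point you raise — that membership in $\cover_{\mathcalV,\mathcalVsub}(y_{1:k})$ demands validity of $x_{1:t^\star}$, which requires either prefix-closure of validity or the observation that $p_\mathcalV(x_{1:t}*)=0$ whenever no valid extension exists — is also implicitly used by the paper when it replaces the sum over all $x_{1:t}$ satisfying the prefix conditions by the sum over $\cover^{(t)}$ (the penultimate line of its derivation); you are right that this is an unstated hypothesis on $\mathcalT_\mathcalV$, and you handle it more carefully than the paper does.
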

\begin{proof}
    This is a generalization of Lemma~\ref{lemma:byte-level probability}. See the proof of Lemma~\ref{app:lem:decomposition of marginal distribution}
\end{proof}

The next problem is: how to efficiently compute the right hand side of \Cref{eq:decomposition of marginal distribution}.
To answer this, we introduce the following subsets of the relative covering $\cover_{\mathcalV,\mathcalVsub}(y_{1:k})$:
\begin{gather}
\cover^{\mathrm{eq}}_{\mathcalV,\mathcalVsub}(y_{1:k}) := \{x_{1:t}\in\cover_{\mathcalV,\mathcalVsub}(y_{1:k}) \mid [x_{1:t}]_{\mathcalV\to\mathcalVsub} = y_{1:k} \}, \\
\cover^{(l)}_{\mathcalV,\mathcalVsub}(y_{1:k}) := \cover_{\mathcalV,\mathcalVsub}(y_{1:k}) \cap \mathcalV^{l} = \{ x_{1:t} \in \cover_{\mathcalV,\mathcalVsub}(y_{1:k}) \mid t = l \}.
\end{gather}
Then the summands in \Cref{eq:decomposition of marginal distribution} can be decomposed into the following two cases.
\begin{lemma}\label{lem:reccurance description of cover set}
    $x_{1:t} \in \cover_{\mathcalV,\mathcalVsub}(y_{1:k})$ if and only if $x_{1:t}$ satisfies either
    \begin{itemize}
        \item[(i)] $x_{1:t} \in \cover_{\mathcalV,\mathcalVsub}(y_{1:k-1})$ and $y_{1:k} \prec [x_{1:t}]_{\mathcalV\to\mathcalVsub}$,
        \item[or (ii)] $x_{1:t-1} \in \cover^{\mathrm{eq}}_{\mathcalV,\mathcalVsub}(y_{1:k-1})$ and $x_t \in \cover^{(l=1)}_{\mathcalV,\mathcalVsub}(y_k)$.
    \end{itemize}
\end{lemma}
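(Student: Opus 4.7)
The plan is a case analysis pivoting on whether the sub-vocabulary string $y_{1:k-1}$ has already been produced by $[x_{1:t-1}]_{\mathcalV\to\mathcalVsub}$ \emph{before} the final token $x_t$ is appended. Writing $s := [x_{1:t-1}]_{\mathcalV\to\mathcalVsub}$ and $s' := [x_{1:t}]_{\mathcalV\to\mathcalVsub}$, the definition of the nested tokenization gives the concatenation identity $s' = s\,[x_t]_{\mathcalV\to\mathcalVsub}$ in $\mathcalVsub^*$, and membership in $\cover_{\mathcalV,\mathcalVsub}(y_{1:k})$ records exactly the moment at which this sub-vocabulary string first acquires $y_{1:k}$ as a prefix.

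For the forward direction, assume $x_{1:t}\in\cover_{\mathcalV,\mathcalVsub}(y_{1:k})$. If $y_{1:k-1}\not\prec s$, then prefix monotonicity applied to $y_{1:k-1}\prec y_{1:k}$ upgrades $y_{1:k}\prec s'$ to $y_{1:k-1}\prec s'$ while leaving $y_{1:k-1}\not\prec s$ intact; combined with validity this yields $x_{1:t}\in\cover_{\mathcalV,\mathcalVsub}(y_{1:k-1})$, and case (i) follows since $y_{1:k}\prec[x_{1:t}]_{\mathcalV\to\mathcalVsub}$ is given. Otherwise $y_{1:k-1}\prec s$, and I would first argue that in fact $s=y_{1:k-1}$: if $s$ were strictly longer, its $k$-th $\mathcalVsub$-token would be inherited unchanged by $s'$, and $y_{1:k}\prec s'$ would force that token to equal $y_k$, contradicting $y_{1:k}\not\prec s$. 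From $s=y_{1:k-1}$ one gets $x_{1:t-1}\in\cover^{\mathrm{eq}}_{\mathcalV,\mathcalVsub}(y_{1:k-1})$ (minimality is automatic since $[x_{1:t-2}]_{\mathcalV\to\mathcalVsub}$ is a strictly shorter prefix of $y_{1:k-1}$ and so cannot contain it), and the factorization $s' = y_{1:k-1}\,[x_t]_{\mathcalV\to\mathcalVsub}$ together with $y_{1:k}\prec s'$ forces the first token of $[x_t]_{\mathcalV\to\mathcalVsub}$ to be $y_k$, placing $x_t$ in $\cover^{(l=1)}_{\mathcalV,\mathcalVsub}(y_k)$; this is case (ii).

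For the backward direction, case (i) is essentially immediate: the hypothesis $x_{1:t}\in\cover_{\mathcalV,\mathcalVsub}(y_{1:k-1})$ supplies validity and $y_{1:k-1}\not\prec s$, which by prefix monotonicity upgrades to $y_{1:k}\not\prec s$, while the extra assumption $y_{1:k}\prec s'$ closes the argument. For case (ii), $s = y_{1:k-1}$ combined with $y_k\prec[x_t]_{\mathcalV\to\mathcalVsub}$ yields $y_{1:k}\prec s'$ via the concatenation identity, and $y_{1:k}\not\prec s$ is automatic because $y_{1:k}$ is one token longer than $s = y_{1:k-1}$.

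The main obstacle I anticipate is tracking the validity condition $[[\,\cdot\,]_\mathcalA]_\mathcalV = (\cdot)$ as we pass between $x_{1:t}$, its prefix $x_{1:t-1}$, and the singleton $x_t$: the forward direction needs validity to descend to prefixes and singletons, while the backward direction needs it to re-assemble under concatenation. Prefix-closure of validity is typical of greedy schemes such as BPE and SentencePiece but is not forced by the abstract axiom in \Cref{eq:condition for encoder and decoder} alone, so my plan is to invoke it as a standing property of the tokenizers under consideration (or establish it as a short helper lemma) so that both halves of the case analysis above go through cleanly.
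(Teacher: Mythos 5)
Your proof pivots on the same dichotomy as the paper's: both establish that $s:=[x_{1:t-1}]_{\mathcalV\to\mathcalVsub}$, being a prefix of $s':=[x_{1:t}]_{\mathcalV\to\mathcalVsub}$ that fails to contain $y_{1:k}$ while $s'$ does, must be some strict prefix $y_{1:k'}$ of $y_{1:k}$, and then split on whether $k'=k-1$ (equivalently, your $y_{1:k-1}\prec s$). Your forward direction matches the paper's argument step for step, and you additionally spell out the converse direction, which the appendix proof of Lemma~\ref{app:lem:reccurance description of cover set} leaves implicit.

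The validity concern you flag is genuine, and it in fact points at a small discrepancy between the main-text statement and the version proved in the appendix. Lemma~\ref{app:lem:reccurance description of cover set} phrases case (ii) as ``$x_{1:t}$ is valid and $[x_{1:t-1}]_{\mathcalV\to\mathcalVsub}=y_{1:k-1}$ and $y_k\prec[x_t]_{\mathcalVsub}$,'' keeping validity attached to the \emph{whole} sequence $x_{1:t}$; under that phrasing the equivalence holds using only the axiom in \eqref{eq:condition for encoder and decoder}. The main-text phrasing via $\cover^{\mathrm{eq}}_{\mathcalV,\mathcalVsub}(y_{1:k-1})$ and $\cover^{(1)}_{\mathcalV,\mathcalVsub}(y_k)$ instead asserts validity of the prefix $x_{1:t-1}$ and of the singleton $x_t$ separately, which---as you observe---needs prefix-closure of validity for the forward direction and a reassembly property for the converse, and neither is forced by the abstract axioms alone. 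Your plan to postulate prefix-closure (together with closure under the relevant concatenation) is a legitimate repair and is consistent with the greedy/BPE tokenizers actually in scope; alternatively one can simply prove the appendix's reworded version, or note that the discrepancy is harmless for \Cref{thm:vocab reduction of marginal distribution} because any invalid $x_{1:t}$ contributes probability zero to the sum.
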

\begin{proof}
    See the proof of Lemma~\ref{app:lem:reccurance description of cover set}.
\end{proof}
Moreover, for the case (ii), we remark that the valid tokens $x_{1:t-1} \in \cover^{\mathrm{eq}}_{\mathcalV,\mathcalVsub}(y_{1:k-1})$ can be uniquely determined as $x_{1:t-1} = [[y_{1:k-1}]_\mathcalA]_{\mathcalV}$ for any valid\footnote{Here we consider the validity with respect to $\mathcalT_{\mathcalV\to\mathcalVsub}$, i.e., $[[y_{1:k-1}]_{\mathcalA}]_{\mathcalV\to\mathcalVsub} = y_{1:k-1}$.} tokens $y_{1:k-1}$.
Indeed, if $x_{1:t-1}$ satisfies $[x_{1:t-1}]_{\mathcalV\to\mathcalVsub} = y_{1:k-1}$, we have $[x_{1:t-1}]_\mathcalA = [y_{1:k-1}]_\mathcalA$ and then $x_{1:t-1} = [[x_{1:t-1}]_\mathcalA]_\mathcalV = [[y_{1:k-1}]_\mathcalA]_\mathcalV$ by the validity of $x_{1:t-1}$.
Conversely, if $x_{1:t-1} = [[y_{1:k-1}]_\mathcalA]_{\mathcalV} = y_{1:k-1}$ with the valid tokens $y_{1:k-1}$, we have $[x_{1:t-1}]_{\mathcalV\to\mathcalVsub} = [[[y_{1:k-1}]_\mathcalA]_{\mathcalV\to\mathcalVsub}] = y_{1:k-1}$ by its validity.

By combining the above lemmas, we obtain the recursive formula to compute the desired probabilities:
\begin{theorem}\label{thm:vocab reduction of marginal distribution}
    For any tokens $y_{1:k} \in \mathcalV^*$ that is valid with respect to the nested tokenization $\mathcalT_{\mathcalV\to\mathcalVsub}$, we have
    \begin{align}\label{eq:vocab reduction of marginal distribution}
        p_{\mathcalV\to\mathcalVsub}(y_{1:k} *) =
        \sum_{\substack{x_{1:t}\in \cover_{\mathcalV,\mathcalVsub}(y_{1:k-1}) \\ \text{\rm s.t.} \ y_{1:k} \prec [x_{1:t}]_{\mathcalV\to\mathcalVsub} }} p_\mathcalV(x_{1:t} *)
        +
        \sum_{\substack{x_t \in \cover^{(1)}_{\mathcalV,\mathcalVsub}(y_k), \\ \text{with } x_{1:t-1} := [[y_{1:k-1}]_\mathcalA]_\mathcalV }} p_\mathcalV(x_{1:t} *).
    \end{align}
\end{theorem}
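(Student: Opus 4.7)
The plan is to combine Lemma~\ref{lem:decomposition of marginal distribution} with the recurrence of Lemma~\ref{lem:reccurance description of cover set} and then use the validity hypothesis on $y_{1:k-1}$ to collapse one of the two resulting sums into its stated closed form. Concretely, I would first invoke Lemma~\ref{lem:decomposition of marginal distribution} to rewrite the left-hand side as $p_{\mathcalV\to\mathcalVsub}(y_{1:k}*) = \sum_{x_{1:t}\in \cover_{\mathcalV,\mathcalVsub}(y_{1:k})} p_\mathcalV(x_{1:t}*)$, reducing the statement to a purely combinatorial claim about how $\cover_{\mathcalV,\mathcalVsub}(y_{1:k})$ decomposes.

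Next I would apply Lemma~\ref{lem:reccurance description of cover set}, which describes $\cover_{\mathcalV,\mathcalVsub}(y_{1:k})$ as the set of $x_{1:t}$ satisfying either condition (i) or condition (ii). Before splitting the sum, I would check that (i) and (ii) are mutually exclusive: in case (i) the prefix $x_{1:t}$ already has $y_{1:k} \prec [x_{1:t}]_{\mathcalV\to\mathcalVsub}$, so $y_{1:k} \prec [x_{1:t-1}]_{\mathcalV\to\mathcalVsub}$ would be needed for it to lie in $\cover_{\mathcalV,\mathcalVsub}(y_{1:k-1})$ at length $t$; whereas in case (ii) the length-$(t-1)$ prefix lies in $\cover^{\mathrm{eq}}_{\mathcalV,\mathcalVsub}(y_{1:k-1})$, so $[x_{1:t-1}]_{\mathcalV\to\mathcalVsub} = y_{1:k-1}$ and thus $y_{1:k} \not\prec [x_{1:t-1}]_{\mathcalV\to\mathcalVsub}$. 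Consequently the sum splits as
\begin{equation*}
\sum_{x_{1:t}\in \cover_{\mathcalV,\mathcalVsub}(y_{1:k})} p_\mathcalV(x_{1:t}*)
= \sum_{\substack{x_{1:t}\in \cover_{\mathcalV,\mathcalVsub}(y_{1:k-1}) \\ y_{1:k} \prec [x_{1:t}]_{\mathcalV\to\mathcalVsub}}} p_\mathcalV(x_{1:t}*)
+ \sum_{\substack{x_{1:t-1}\in \cover^{\mathrm{eq}}_{\mathcalV,\mathcalVsub}(y_{1:k-1}) \\ x_t \in \cover^{(1)}_{\mathcalV,\mathcalVsub}(y_k)}} p_\mathcalV(x_{1:t}*),
\end{equation*}
which already matches the first summand of the theorem.

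The remaining step is to collapse the outer index $x_{1:t-1}\in \cover^{\mathrm{eq}}_{\mathcalV,\mathcalVsub}(y_{1:k-1})$ to the single element $[[y_{1:k-1}]_\mathcalA]_\mathcalV$. This is exactly the remark immediately after Lemma~\ref{lem:reccurance description of cover set}: the validity of $y_{1:k-1}$ with respect to $\mathcalT_{\mathcalV\to\mathcalVsub}$ (which is where the hypothesis of the theorem enters) forces $\cover^{\mathrm{eq}}_{\mathcalV,\mathcalVsub}(y_{1:k-1}) = \{[[y_{1:k-1}]_\mathcalA]_\mathcalV\}$ because any $x_{1:t-1}$ with $[x_{1:t-1}]_{\mathcalV\to\mathcalVsub} = y_{1:k-1}$ satisfies $[x_{1:t-1}]_\mathcalA = [y_{1:k-1}]_\mathcalA$, and the validity of $x_{1:t-1}$ in $\mathcalV$ then yields the claimed identification. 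Substituting this into the second sum delivers exactly the second term of \Cref{eq:vocab reduction of marginal distribution}.

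I expect the main subtlety to be the bookkeeping in the disjointness check and the correct handling of the boundary case $k=1$ (where $y_{1:k-1}$ is the empty string, $\cover^{\mathrm{eq}}_{\mathcalV,\mathcalVsub}(\emptyset)$ must be read as $\{\emptyset\}$, and $x_{1:t-1}$ degenerates so that only case (ii) with $x_t$ a single-token cover of $y_1$ survives). Everything else is mechanical once Lemmas~\ref{lem:decomposition of marginal distribution} and \ref{lem:reccurance description of cover set} are in place; the essential content of the theorem is that the validity assumption is precisely what singles out a unique $\mathcalV$-retokenization of the already-generated prefix.
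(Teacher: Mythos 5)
Your proof follows the paper's route exactly: invoke Lemma~\ref{lem:decomposition of marginal distribution} to expand $p_{\mathcalV\to\mathcalVsub}(y_{1:k}*)$ as a sum over $\cover_{\mathcalV,\mathcalVsub}(y_{1:k})$, split the index set via the dichotomy of Lemma~\ref{lem:reccurance description of cover set}, and use the remark following that lemma (which is where the validity of $y_{1:k-1}$ enters) to collapse $\cover^{\mathrm{eq}}_{\mathcalV,\mathcalVsub}(y_{1:k-1})$ to the singleton $\{[[y_{1:k-1}]_\mathcalA]_\mathcalV\}$.

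You go slightly beyond the paper by explicitly verifying that cases (i) and (ii) are disjoint (so the sum does not double-count) and by flagging the $k=1$ boundary case; both are correct instincts that the paper leaves implicit. However, the disjointness justification as written contains an index slip: membership in $\cover_{\mathcalV,\mathcalVsub}(y_{1:k-1})$ requires $y_{1:k-1}\not\prec[x_{1:t-1}]_{\mathcalV\to\mathcalVsub}$, not any condition on $y_{1:k}$. The correct observation is that in case (ii) one has $[x_{1:t-1}]_{\mathcalV\to\mathcalVsub}=y_{1:k-1}$, so $y_{1:k-1}\prec[x_{1:t-1}]_{\mathcalV\to\mathcalVsub}$ holds trivially, which is precisely the negation of the cover condition required by case (i). With this small repair the argument is complete and matches the paper's proof.
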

\begin{proof}
    By Lemma~\ref{lem:decomposition of marginal distribution}, the left-hand side is expanded as the sum of probabilities $p(x_{1:t}*)$ over the relative cover $x_{1:t}\in\cover_{\mathcalV,\mathcalVsub}(y_{1:k})$.
    Then we know that each summand $x_{1:t}$ satisfies either (i) or (ii) in Lemma~\ref{lem:reccurance description of cover set}, which leads to the decomposition shown in the right-hand side.
\end{proof}

\paragraph{An illustrative example (Fig~\ref{fig:illustrative example of LVR}).}

\begin{figure}[t]
    \centering
    \begin{overpic}[width=0.9\linewidth]{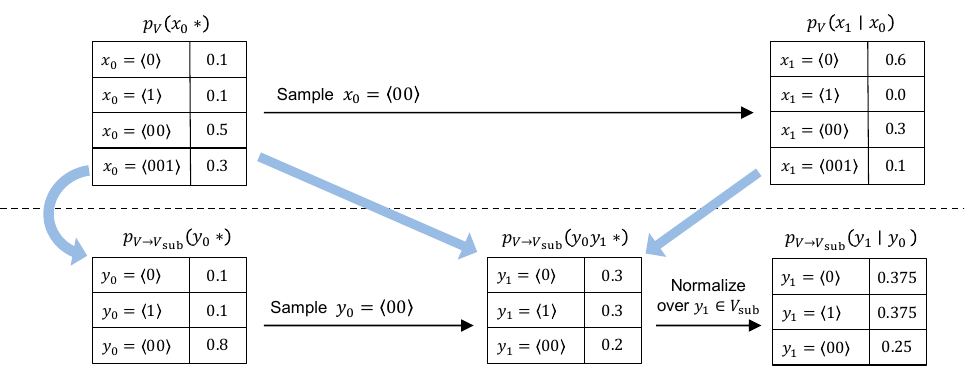}
    \put(2.3,25.5){{\footnotesize Eq.}}
    \put(-0.7,22.5){{\footnotesize (\ref{eq:example:first relative cover}), (\ref{eq:example:first marginal distribution})}}
    \put(36,20){{\footnotesize Eq.(\ref{eq:example:second marginal distribution})}}
    \put(68.5,20){{\footnotesize Eq.(\ref{eq:example:second marginal distribution})}}
    \end{overpic}
    \caption{Summary of calculation for our example with $\mathcalA = \{0,1\}$. The upper part shows the next-token distribution of the given language model $p_\mathcalV$ over $\mathcalV=\{ \langle 0 \rangle, \langle 1 \rangle, \langle 00 \rangle, \langle 001 \rangle \}$, and the lower part shows the derived vocabulary-reduced model $p_{\mathcalV\to\mathcalVsub}$ over $\mathcalVsub=\{ \langle 0 \rangle, \langle 1 \rangle, \langle 00 \rangle \}$.}
    \label{fig:illustrative example of LVR}
\end{figure}

For simplicity, here we assume that $\mathcalA = \{0,1\}$ instead of a set of bytes.
Let $\mathcalV := \{\langle 0 \rangle, \langle 1 \rangle, \langle 00 \rangle, \langle 001 \rangle \}$ and $\mathcalVsub := \{\langle 0 \rangle, \langle 1 \rangle, \langle 00 \rangle \}$, where each $\langle \blank \rangle$ denotes a token.
We suppose that the corresponding tokenizations are given by the greedy forward-matching tokenization, which maps each input bits $b_1\cdots b_N \in \{0, 1\}^N$ to the longest matching tokens in the vocabulary, $\mathcalV$ or $\mathcalVsub$, greedily from left to right.
Then we consider a language model $p_\mathcalV$ over $\mathcalV$ given by:
\begin{gather}
    p_\mathcalV(x_0*) = \begin{cases}
        0.1 & \text{if } x_0 = \langle 0 \rangle,\\
        0.1 & \text{if } x_0 = \langle 1 \rangle,\\
        0.5 & \text{if } x_0 = \langle 00 \rangle,\\
        0.3 & \text{if } x_0 = \langle 001 \rangle,
    \end{cases}
    \ \ \ \ \ \ \ \ 
    p_\mathcalV(x_1 \mid x_0=\langle 00 \rangle) = \begin{cases}
        0.6 & \text{if } x_1 = \langle 0 \rangle,\\
        0 & \text{if } x_1 = \langle 1 \rangle,\\
        0.3 & \text{if } x_1 = \langle 00 \rangle,\\
        0.1 & \text{if } x_1 = \langle 001 \rangle,
    \end{cases}
\end{gather}
Note that the tokens $\langle 00 \rangle \langle 1 \rangle$ are invalid since $001$ is tokenized as $\langle 001 \rangle$ in the greedy forward-matching tokenization, and thus the probability $p_\mathcalV(x_1=\langle 1 \rangle \mid x_0=\langle 00 \rangle)$ is set to $0$.

To compute $p_{\mathcalV\to\mathcalVsub}(y_{0})$, we first calculate the relative covers:
\begin{equation}\label{eq:example:first relative cover}
    \cover_{\mathcalV,\mathcalVsub}(\langle 0 \rangle) = \{ \langle 0 \rangle \}, \  \cover_{\mathcalV,\mathcalVsub}( \langle 1 \rangle) = \{ \langle 1 \rangle \}, \ \cover_{\mathcalV,\mathcalVsub}(\langle 00 \rangle) = \{\langle 00 \rangle, \langle 001 \rangle \},
\end{equation}
Then we can compute the marginal probabilities $p_{\mathcalV\to\mathcalVsub}(y_0*)$ as
\begin{gather}\label{eq:example:first marginal distribution}
    p_{\mathcalV\to\mathcalVsub}(y_0*) = \begin{cases}
        0.1 \ \ (= p_\mathcalV(\langle 0 \rangle*)) & \text{if } y_0 = \langle 0 \rangle,\\
        0.1 \ \ (= p_\mathcalV(\langle 1 \rangle*)) & \text{if } y_0 = \langle 1 \rangle,\\
        0.8 \ \ (= p_\mathcalV(\langle 00 \rangle*) + p_\mathcalV(\langle 001 \rangle*)) & \text{if } y_0 = \langle 00 \rangle,\\
    \end{cases}
\end{gather}
Now suppose that $y_0 = \langle 00 \rangle$ is sampled.
To derive the next-token probability $p_{\mathcalV\to\mathcalVsub}(y_1 \mid y_0=\langle 00 \rangle)$, we need to consider the following relative covers:
\begin{gather}
    \cover_{\mathcalV,\mathcalVsub}(\langle 00 \rangle \langle 0 \rangle) = \{ \langle 00 \rangle \langle 0 \rangle \}, \  \cover_{\mathcalV,\mathcalVsub}( \langle 00 \rangle \langle 1 \rangle) = \{ \langle 001 \rangle \}, \nonumber\\
    \cover_{\mathcalV,\mathcalVsub}(\langle 00 \rangle \langle 00 \rangle) = \{\langle 00 \rangle \langle 00 \rangle, \langle 00 \rangle \langle 001 \rangle \},\nonumber
\end{gather}
Then the marginal probabilities $p_{\mathcalV\to\mathcalVsub}(\langle 00 \rangle y_1 *)$ are obtained as follows:
\begin{gather}\label{eq:example:second marginal distribution}
    p_{\mathcalV\to\mathcalVsub}(\langle 00 \rangle y_1 *) = \begin{cases}
        0.3 \ (= p_\mathcalV(\langle 00 \rangle \langle 0 \rangle*)) & \text{if } y_1 = \langle 0 \rangle,\\
        0.3 \ (= p_\mathcalV(\langle 00 \rangle \langle 1 \rangle*)) & \text{if } y_1 = \langle 1 \rangle,\\
        0.2 \ (= p_\mathcalV(\langle 00 \rangle \langle 00 \rangle*) + p_\mathcalV(\langle 00 \rangle \langle 001 \rangle *)) & \text{if } y_1 = \langle 00 \rangle,\\
    \end{cases}    
\end{gather}
Finally, we obtain the next-token distribution by normalizing the above marginal probabilities:
\begin{gather}
    p_{\mathcalV\to\mathcalVsub}(y_1 \mid y_0=\langle 00 \rangle) = \begin{cases}
        0.375 & \text{if } y_1 = \langle 0 \rangle,\\
        0.375 & \text{if } y_1 = \langle 1 \rangle,\\
        0.25 & \text{if } y_1 = \langle 00 \rangle,\\
    \end{cases}
\end{gather}
We can easily check that the probability of the output text starts from "000" is $0.5$ in both models $p_\mathcalV$ and $p_{\mathcalV\to\mathcalVsub}$, as expected by the lossless property (\Cref{thm:lossless reduction}).
For more details, see \Cref{app:sec:example of LVR}.

\begin{figure}[t]
    \scalebox{0.85}{
    \begin{minipage}[t]{0.54\textwidth}
            \begin{algorithm}[H]
                \caption{Lossless Vocabulary Reduction.}
                \label{alg:lossless vocabulary reduction}
                \begin{algorithmic}[1] %
                    \Statex {\bf Inputs:} Previously sampled tokens $y_{1:k} \in \mathcalVsub^{k}$,
                    \Statex \ \ \ \ \ a global cache $\mathcal{P}$ for computed probabilities,
                    \Statex \ \ \ \ \ a global cache $\mathcal{C}$ for relative minimal covers.
                    \Statex {\bf Outputs:} $p_{\mathcalV \to \mathcalVsub}(y_{k+1} \mid y_{1:k} )$ for $y_{k+1} \in \mathcalVsub$.
                    \State Tokenize $y_{1:k}$ in $\mathcalV$, i.e., $x_{1:t} := [[y_{1:k}]_\mathcalA]_\mathcalV$.
                    \State Compute $p_\mathcalV( x_{1:t} x * )$ for all $x\in\mathcalV$ by a single forward computation, and store them in $\mathcal{P}$.
                    \State Fetch $\cover_{\mathcalV, \mathcalVsub}(y_{1:k})$ from the cache $\mathcal{C}$.
                    \For{$y_{k+1} \in \mathcalVsub$}
                            \State Initialize $\mathcal{S}[y_{k+1}] \gets \emptyset$.  \Comment{for collecting probs.}
                            \State Initialize $ \cover_{\mathcalV, \mathcalVsub}(y_{1:k+1}) \gets \emptyset$.
                            \For{$x'_{1:t'}\in\cover_{\mathcalV, \mathcalVsub}(y_{1:k})$}
                                \If{ $y_{1:k+1} \prec [x'_{1:t'}]_{\mathcalV\to\mathcalVsub}$ }
                                    \State Fetch $p_{\mathcalV}(x'_{1:t'}*)$ from the cache $\mathcal{P}$.
                                    \State Add $x'_{1:t'}$ to $\cover_{\mathcalV,\mathcalVsub}(y_{1:k+1})$.
                                    \State Add $p_{\mathcalV}(x'_{1:t'}*)$ to $\mathcal{S}[y_{k+1}]$.
                                \EndIf
                            \EndFor
                            \For{$x_{t+1} \in \mathcalV$}
                                \If{ $[x_{t+1}]_{\mathcalVsub} = y_{k+1} \cdots \in \mathcalVsub^*$ }  %
                                    \State Add $x_{1:t+1}$ to $\cover_{\mathcalV,\mathcalVsub}(y_{1:k+1})$.
                                    \State Add $p_\mathcalV(x_{1:t}x_{t+1}*)$ to $\mathcal{S}[y_{k+1}]$.
                                \EndIf
                            \EndFor
                            \State Store $\cover_{\mathcalV, \mathcalVsub}(y_{1:k+1})$ in the cache $\mathcal{C}$.
                            \State Set $\tilde{p}(y_{k+1}) \gets \sum_{p\in\mathcal{S}[y_{k+1}]}p.$ \Comment{$p_{\mathcalV,\mathcalVsub}( y_{1:k+1}* )$}.
                    \EndFor
                    \State Set $p_{\mathcalV,\mathcalVsub}(y_{k+1}\mid y_{1:k}) \gets \tilde{p}(y_{k+1}) / \sum_{y} \tilde{p}(y)$ for all $y_{k+1}\in\mathcalVsub$. \Comment{Marginalization.}
                    \State {\bf return} $p_{\mathcalV,\mathcalVsub}(\cdot\mid y_{1:k})$.
                \end{algorithmic}
            \end{algorithm}
    \end{minipage}
    \begin{minipage}[t]{0.05\textwidth}
    \hfill
    \end{minipage}
    \begin{minipage}[t]{0.54\textwidth}
            \begin{algorithm}[H]
                \caption{Top-$K$ Approximation. (\proposed{})}
                \label{alg:efficient version}
                \begin{algorithmic}[1] %
                    \Statex {\bf Inputs:} Same as \Cref{alg:lossless vocabulary reduction}.
                    \Statex {\bf Outputs:} Same as \Cref{alg:lossless vocabulary reduction}.
                    \State Tokenize $y_{1:k}$ in $\mathcalV$, i.e., $x_{1:t} := [[y_{1:k}]_\mathcalA]_\mathcalV$.
                    \State Compute $p_\mathcalV( x_{1:t} x * )$ for all $x\in\mathcalV$ by a single forward computation, and store them in $\mathcal{P}$.
                    \State Fetch $\cover_{\mathcalV, \mathcalVsub}(y_{1:k})$ from the cache $\mathcal{C}$. 
                    \For{$y_{k+1} \in \mathcalVsub$}
                            \State Initialize $\mathcal{S}[y_{k+1}] \gets \emptyset$. \Comment{for collecting probs.}
                            \State Initialize $ \cover_{\mathcalV, \mathcalVsub}(y_{1:k+1}) \gets \emptyset$.
                    \EndFor
                    \textcolor{MidnightBlue}{
                    \For{$x'_{1:t'}\in\cover_{\mathcalV, \mathcalVsub}(y_{1:k})$} %
                        \State $y_{k+1} \gets $ the $(k+1)$-th sub-token of $x'_{1:t'}$
                        \State Fetch $p_{\mathcalV}(x'_{1:t'}*)$ from the cache $\mathcal{P}$. %
                        \State Add $x'_{1:t'}$ to $\cover_{\mathcalV,\mathcalVsub}(y_{1:k+1})$.
                        \State Add $p_{\mathcalV}(x'_{1:t'}*)$ to $\mathcal{S}[y_{k+1}]$.
                    \EndFor
                    }
                    \textcolor{BrickRed}{ 
                    \State $\mathcalV^{(K)} \gets \{x\in\mathcalV \mid p_\mathcalV(x_{1:t}x*) \text{ within the top-K.} \}$
                    \For{$x_{t+1} \in \mathcalV^{(K)}$}
                        \State $y_{k+1} \gets \text{the first sub-token of } [x_{t+1}]_{\mathcalVsub}$.
                        \State Add $x_{1:t+1}$ to $\cover_{\mathcalV,\mathcalVsub}(y_{1:k+1})$.
                        \State Add $p_\mathcalV(x_{1:t}x_{t+1}*)$ to $\mathcal{S}[y_{k+1}]$.
                    \EndFor
                    }
                    \For{$y_{k+1} \in \mathcalVsub$}
                        \State Store $\cover_{\mathcalV, \mathcalVsub}(y_{1:k+1})$ in the cache $\mathcal{C}$.
                        \State Set $\tilde{p}(y_{k+1}) \gets \sum_{p\in\mathcal{S}[y_{k+1}]}p.$ \Comment{$p_{\mathcalV,\mathcalVsub}( y_{1:k+1}* )$}.
                    \EndFor
                    \State Set $p_{\mathcalV,\mathcalVsub}(y_{k+1}\mid y_{1:k}) \gets \tilde{p}(y_{k+1}) / \sum_{y} \tilde{p}(y)$ for all $y_{k+1}\in\mathcalVsub$. \Comment{Marginalization.}
                    \State {\bf return} $p_{\mathcalV,\mathcalVsub}(\cdot\mid y_{1:k})$.
                \end{algorithmic}
            \end{algorithm}
    \end{minipage}
    }
    \vspace{-7pt}
\end{figure}

\subsection{Algorithm}
Based on our theory of lossless vocabulary reduction given in the previous section, we derive an exact but computationally inefficient algorithm (\Cref{alg:lossless vocabulary reduction}) and its efficient approximation  (\Cref{alg:efficient version}) for practical autoregressive language models.

\paragraph{Naive implementation.}
Based on the theoretical results in previous sections, we can derive an algorithm (\Cref{alg:lossless vocabulary reduction}) to recursively compute and sample from next-token distributions $p_{\mathcalV\to\mathcalVsub}$ over $\mathcalVsub$, given only access to some autoregressive language model with the vocabulary $\mathcalV$, i.e., next-token distributions $p_\mathcalV$ over $\mathcalV$.
Due to its recursive nature, we can suppose that sub-tokens $y_{1:k}\in\mathcalVsub$ have been sampled in previous iterations by this algorithm, and its intermediate outcomes are properly cached.
Under the circumstances, \Cref{alg:lossless vocabulary reduction} computes the next-token distribution $p_{\mathcalV\to\mathcalVsub}(y_{k+1}\mid y_{1:k})$ for all $y_{k+1}\in\mathcalVsub$ and sample a next token $y_{k+1}$ from it.

Now we explain how \Cref{alg:lossless vocabulary reduction} produces the desired next-token distribution.
First of all, the given previous output $y_{1:k}\in\mathcalVsub$ is re-tokenized in $\mathcalV$, denoted by $x_{1:t} := [[y_{1:k}]_\mathcalA]_\mathcalV$, and then the next-token distribution $p_\mathcalV(x_{t+1}\mid x_{1:t})$  is computed by the given autoregressive language model (lines 1-2).
Note that here is the only part that requires access to the given language model in \Cref{alg:lossless vocabulary reduction}, and the results are cached in the global memory $\mathcal{P}$ so that they can be reused in future iterations.
Then, in lines 3-17, the marginal probability $p_{\mathcalV\to\mathcalVsub}(y_{1:k}y_{k+1}*)$ is computed for each $y_{k+1}\in\mathcalVsub$ by following \Cref{thm:vocab reduction of marginal distribution}.
More precisely, lines 7-11 implement the first term in \Cref{eq:vocab reduction of marginal distribution}, and lines 12-15 implement the second term.
Finally, in line 18, the sum is normalized over $y_{k+1}\in\mathcalVsub$ to obtain the desired next-token distribution according to \Cref{eq:next-token distribution is normalization of marginal distribution}.

\vspace{-5pt}
\paragraph{Efficient implementation.}

\Cref{alg:lossless vocabulary reduction} has naively implemented the vocabulary reduction based on \Cref{eq:decomposition of marginal distribution}.
Although the implementation is straightforward from the theoretical perspective,
it is computationally infeasible especially due to the two nested loops, lines 7-11 and lines 12-15.
Here we discuss how to deform \Cref{alg:lossless vocabulary reduction} into a more efficiently approximated one, \Cref{alg:efficient version}.

First of all, we consider the first nested loop (lines 7-11 in \Cref{alg:lossless vocabulary reduction}), which requires $|\mathcalVsub| \times |\cover_{\mathcalV\to\mathcalVsub}(y_{1:k})|$ iterations.
This part can be separated from the outer loop over $y_{k+1}\in\mathcalVsub$, because the $y_{k+1}$ satisfying the condition in the line 8 is uniquely determined as the $(k+1)$-th sub-token of $[x'_{1:t'}]_{\mathcalV\to\mathcalVsub}$.
Thus we can deform the lines 7-11 in \Cref{alg:lossless vocabulary reduction} into the \textcolor{MidnightBlue}{lines 7-11} in \Cref{alg:efficient version}, which only requires $|\cover_{\mathcalV\to\mathcalVsub}(y_{1:k})|$ iterations.

Then we consider the second nested loop (lines 12-15 in \Cref{alg:lossless vocabulary reduction}), which requires $|\mathcalVsub|\times|\mathcalV|$ iterations.
This part can also be separated from the outer loop, because the $y_{k+1}$ satisfying the condition in the line 13 is uniquely determined as the first sub-token of $[x_{t+1}]_{\mathcalV\to\mathcalVsub}$.
Thus the nested loop can be deformed into a single loop over $x_{t+1}\in\mathcalV$.
Moreover, we observe that the token $x_{t+1}$ with a low probability can be ignored as it will not contribute to the final summation.
Therefore we can restrict the single loop over $\mathcalV$ to the one over $\mathcalV^{(K)}$, the top-$K$ tokens with high probabilities.
Based on these observations, we can deform the lines 12-15 in \Cref{alg:lossless vocabulary reduction} into the \textcolor{BrickRed}{lines 12-16} in \Cref{alg:efficient version}, which only requires $K$ iterations.
Here $K$ is an arbitrary hyperparameter and we will set it to be a small number $K=300$ in our experiments.
For ablation study with varying $K$, see \Cref{app:sec:ablation study on K}.

\section{Application: Ensemble via Maximal Common Vocabulary}\label{sec:maximal common vocabulary}

Suppose that we have $N$ language models $p_{\mathcalV_1}, \ldots, p_{\mathcalV_N}$, each defined over its own vocabulary $\mathcalV_i$.
We assume that each $p_i$ satisfies the validity condition (\ref{eq:validity condition}) with respect to a given tokenizer $\mathcalT_i=(\mathcalV_i, [\blank]_{\mathcalV_i}, [\blank]_{\mathcalA})$.
For simplicity, we further assume that each tokenizer is given by Byte-Pair Encoding (BPE;~\citet{gage1994bpe,sennrich2016nmt}) with a set of token pairs $\mathcal{M}_i\subset \mathcalV_i^2$.
Recall that, in BPE tokenization, a given text $a_1\cdots a_n\in\mathcalA^n$ is first viewed as a sequence of byte tokens $x_1\cdots x_n\in\mathcalV_i^n$, and then iteratively each adjacent tokens $x_t x_{t+1}$ satisfying $(x_t,x_{t+1}) \in \mathcal{M}_i$ are merged, i.e., replaced by the new token $x'_t := x_t x_{t+1} \in \mathcalV_i$.

To ensemble language models with different vocabularies, we introduce the \emph{maximal common vocabulary} $\mathcalV_{\cap}$ and an associated tokenizer $\mathcalT_{\cap} = (\mathcalV_{\cap}, [\blank]_{\mathcalV_\cap}, [\blank]_\mathcalA)$, so that it can be used in the nested tokenizer for lossless vocabulary reduction described in the previous section.
Although there is some arbitrariness in the construction of such an associated tokenizer, we consider a BPE tokenizer constructed from the first tokenizer $\mathcalT_1$, with the vocabulary $\mathcalV_{\cap}$ and token pairs $\mathcalM_{\cap}$ given by:

\vspace{-14pt}
\begin{equation}
    \mathcalV_{\cap} := \bigcap_{i=1}^N \mathcalV_i, \ \ \ \mathcalM_{\cap} := \{ (y_1,y_2) \in \mathcalM_1 \mid y_1 y_2 \in \mathcalV_{\cap} \}.
\end{equation}
\vspace{-10pt}

Then the encoder $[\blank]_{\mathcalV_\cap}$ for $\mathcalT_{\cap}$ is defined by the iterative merging process of BPE, as explained above, with the token pairs $\mathcalM_{\cap}$.
Although here we focused on a specific construction based on the BPE algorithm, the construction of $\mathcalT_{\cap}$ is not limited to BPE; other deterministic tokenizations such as the Unigram tokenization~\citep{kudo2018subword} can also be employed with appropriate modifications.

Let $p_{\mathcalV_i \to \mathcalV_{\cap}}$ denote the vocabulary reduced model of $p_{\mathcalV_i}$ with the nested tokenizer $\mathcalT_{\mathcalV\to\mathcalV_{\cap}}$ with respect to $\mathcalT_{\cap}$, i.e., $[x_t]_{\mathcalV_i\to\mathcalV_\cap} := [[x_t]_\mathcalA]_{\mathcalV_\cap}$ for each token $x_t \in \mathcalV_i$.
Each $p_{\mathcalV_i \to \mathcalV_{\cap}}$ is an independent language model over the common vocabulary $\mathcalV_{\cap}$ but exactly shares the quality of generated texts with the original model $p_{\mathcalV_i}$ by \Cref{thm:lossless reduction}.
Then we can define the ensemble\footnote{In this paper, we mainly consider the ensemble by products of experts~\citep{hinton1999products}, which is also known as logit-level ensemble for neural networks. See \Cref{app:sec:poe and moe} for more detailed discussion.} of the given $N$ language models $\{p_{\mathcalV_i}\}_{i=1,\cdots,N}$ as that of the vocabulary reduced models $\{p_{\mathcalV_i\to\mathcalV_{\cap}}\}_{i=1,\cdots,N}$:

\vspace{-12pt}
\begin{equation}
    p_{\mathrm{ens}}(y_{t+1} \mid y_{1:t}) \propto \prod_{i=1}^N p_{\mathcalV_i \to \mathcalV_{\cap}}(y_{t+1} \mid y_{1:t}), \ \text{ where } y_1,\cdots,y_{t+1} \in \mathcalV_{\cap}.
\end{equation}
\vspace{-8pt}

Compared to the previous work of the byte-level ensemble~\citep{phan2025exact,vieira2025language}, our approach with the maximal common vocabulary enables faster generation because it can generate multiple bytes by a single inference of the ensemble model, with almost the same inference cost of each vocabulary reduced model as the corresponding byte-level model.

\section{Experiments}\label{sec:experiments}

In this section, we conduct experiments to validate (i) whether our proposed method can reduce given vocabularies to various sub-vocabularies without loss in accuracy and (ii) whether our proposed method can be effectively applied to token-level ensemble.
See Appendix~\ref{app:sec:experimental details} for experimental details.

\subsection{Experiments on Vocabulary Reduction}

Given a language model with a vocabulary $\mathcalV$, we consider the $N$-bytes sub-vocabulary $\mathcalV_{\leq N} := \{ v \in\mathcalV \mid \mathtt{len}([v]_\mathcalA) \leq N \}$\footnote{Here we define $\mathtt{len}(a_1 \cdots a_N) := N$ for $a_1\cdots a_N \in\mathcalA^*$.} and assess the capability of vocabulary reduction from $\mathcalV$ to $\mathcalV_{\leq N}$ with $N=1,2,4,8$.
As a baseline for vocabulary reduction, we consider \textbf{Naive Restriction} that simply puts zeros to the probabilities of excluded tokens $\mathcalV \setminus \mathcalV_{\leq N}$ and renormalizes the remaining probabilities over $\mathcalV_{\leq N}$.
In \Cref{tab:results of vocabulary reduction}, we evaluated the accuracy of vocabulary-reduced models by the above baseline (Naive) and our algorithm (\proposed{}) on the benchmark GSM8K. The results show that our algorithm (\proposed{}) overall achieves almost same accuracy as the original model over the full vocabulary.
In Figure~\ref{fig:example of text generation with vocabulary reduction}, we show an example of greedy decoding for \proposed{} models with the sub-vocabularies $\mathcalV_{\leq N}$, and observe the consistent results except for the 1-byte case.
In the 1-byte case, the generated text deviates from the original one after the first sentence due to the nature of greedy decoding\footnote{See \Cref{app:sec:example of LVR} for detailed discussion on the inconsistency occured in greedy decoding.}, but the final answer is still consistent with the original one as \Cref{thm:lossless reduction} implied.

\subsection{Experiments on Ensemble via Maximal Common Vocabulary (MCV)}

In \Cref{tab:results of ensemble}, as an application of the vocabulary reduction, we conducted experiments with ensemble of two language models of similar accuracy: Qwen2.5-3B and Falcon3-7B. The former model has 151,665 tokens and the latter has 131,072 tokens, whose maximal common vocabulary consists of 63,552 tokens. \textbf{Union} and \textbf{Naive (MCV)} refer to the heuristic baselines described in \Cref{app:sec:baselines}.
From the results, we can see that (i) ensemble with \proposed{} over MCV overall achieves comparable accuracy to the byte-level ensemble~\citep{phan2025exact} and other baselines, and (ii) the heuristic baselines catastrophically fail in some cases, while our \proposed{} consistently works well even in such cases.
Moreover, as shown in \Cref{app:tab:inference speed} in Appendix, we observed that the ensemble over MCV is actually faster than the byte-level ensemble as we expected at first, which highlights the benefit of generalization to arbitrary sub-vocabularies beyond one-bytes.

\begin{figure}[t!]
\centering
\resizebox{1.0\linewidth}{!}{
\begin{tabular}{rl}
Input & \texttt{Question: Janet’s ducks lay 16 eggs per day. She eats three for breakfast every morning and bakes muffins } \\
& \texttt{for her friends every day with four. She sells the remainder at the farmers' market daily for \$2 per fresh } \\
& \texttt{duck egg. How much in dollars does she make every day at the farmers' market?\textbackslash{}nAnswer:}
\vspace{6pt}
\\
Original & \texttt{\sethlcolor{red!30!white}\hl{\,\,\,Janet}\sethlcolor{blue!30!white}\hl{\,\,\,sells}\sethlcolor{orange!30!white}\hl{\,\,\,}\sethlcolor{green!30!white}\hl{16}\sethlcolor{yellow!30!white}\hl{\,\,\,-}\sethlcolor{gray!30!white}\hl{\,\,\,}\sethlcolor{red!30!white}\hl{3}\sethlcolor{blue!30!white}\hl{\,\,\,-}\sethlcolor{orange!30!white}\hl{\,\,\,}\sethlcolor{green!30!white}\hl{4}\sethlcolor{yellow!30!white}\hl{\,\,\,=}\sethlcolor{gray!30!white}\hl{\,\,\,<{}<{}}\sethlcolor{red!30!white}\hl{16}\sethlcolor{blue!30!white}\hl{-}\sethlcolor{orange!30!white}\hl{3}\sethlcolor{green!30!white}\hl{-}\sethlcolor{yellow!30!white}\hl{4}\sethlcolor{gray!30!white}\hl{=}\sethlcolor{red!30!white}\hl{9}\sethlcolor{blue!30!white}\hl{>{}>{}}\sethlcolor{orange!30!white}\hl{9}\sethlcolor{green!30!white}\hl{\,\,\,eggs}\sethlcolor{yellow!30!white}\hl{\,\,\,per}\sethlcolor{gray!30!white}\hl{\,\,\,day}\sethlcolor{red!30!white}\hl{.\textbackslash{}n}\sethlcolor{blue!30!white}\hl{She}\sethlcolor{orange!30!white}\hl{\,\,\,makes}\sethlcolor{green!30!white}\hl{\,\,\,\$}\sethlcolor{yellow!30!white}\hl{2}\sethlcolor{gray!30!white}\hl{\,\,\,x}\sethlcolor{red!30!white}\hl{\,\,\,}\sethlcolor{blue!30!white}\hl{9}\sethlcolor{orange!30!white}\hl{\,\,\,=}\sethlcolor{green!30!white}\hl{\,\,\,<{}<{}}\sethlcolor{yellow!30!white}\hl{2}\sethlcolor{gray!30!white}\hl{*}\sethlcolor{red!30!white}\hl{9}\sethlcolor{blue!30!white}\hl{=}\sethlcolor{orange!30!white}\hl{18}\sethlcolor{green!30!white}\hl{>{}>{}}\sethlcolor{yellow!30!white}\hl{18}\sethlcolor{gray!30!white}\hl{\,\,\,dollars}\sethlcolor{red!30!white}\hl{\,\,\,per}\sethlcolor{blue!30!white}\hl{\,\,\,day}\sethlcolor{orange!30!white}\hl{.\textbackslash{}n}\sethlcolor{green!30!white}\hl{\#\#\#\#}\sethlcolor{yellow!30!white}\hl{\,\,\,}\sethlcolor{gray!30!white}\hl{18}}
\vspace{5pt}
\\
$1$-Bytes & \texttt{\sethlcolor{red!30!white}\hl{\,\,\,}\sethlcolor{blue!30!white}\hl{J}\sethlcolor{orange!30!white}\hl{a}\sethlcolor{green!30!white}\hl{n}\sethlcolor{yellow!30!white}\hl{e}\sethlcolor{gray!30!white}\hl{t}\sethlcolor{red!30!white}\hl{\,\,\,}\sethlcolor{blue!30!white}\hl{s}\sethlcolor{orange!30!white}\hl{e}\sethlcolor{green!30!white}\hl{l}\sethlcolor{yellow!30!white}\hl{l}\sethlcolor{gray!30!white}\hl{s}\sethlcolor{red!30!white}\hl{\,\,\,}\sethlcolor{blue!30!white}\hl{1}\sethlcolor{orange!30!white}\hl{6}\sethlcolor{green!30!white}\hl{\,\,\,}\sethlcolor{yellow!30!white}\hl{-}\sethlcolor{gray!30!white}\hl{\,\,\,}\sethlcolor{red!30!white}\hl{3}\sethlcolor{blue!30!white}\hl{\,\,\,}\sethlcolor{orange!30!white}\hl{-}\sethlcolor{green!30!white}\hl{\,\,\,}\sethlcolor{yellow!30!white}\hl{4}\sethlcolor{gray!30!white}\hl{\,\,\,}\sethlcolor{red!30!white}\hl{=}\sethlcolor{blue!30!white}\hl{\,\,\,}\sethlcolor{orange!30!white}\hl{<{}}\sethlcolor{green!30!white}\hl{<{}}\sethlcolor{yellow!30!white}\hl{1}\sethlcolor{gray!30!white}\hl{6}\sethlcolor{red!30!white}\hl{-}\sethlcolor{blue!30!white}\hl{3}\sethlcolor{orange!30!white}\hl{-}\sethlcolor{green!30!white}\hl{4}\sethlcolor{yellow!30!white}\hl{=}\sethlcolor{gray!30!white}\hl{9}\sethlcolor{red!30!white}\hl{>{}}\sethlcolor{blue!30!white}\hl{>{}}\sethlcolor{orange!30!white}\hl{9}\sethlcolor{green!30!white}\hl{\,\,\,}\sethlcolor{yellow!30!white}\hl{e}\sethlcolor{gray!30!white}\hl{g}\sethlcolor{red!30!white}\hl{g}\sethlcolor{blue!30!white}\hl{s}\sethlcolor{orange!30!white}\hl{\,\,\,}\sethlcolor{green!30!white}\hl{p}\sethlcolor{yellow!30!white}\hl{e}\sethlcolor{gray!30!white}\hl{r}\sethlcolor{red!30!white}\hl{\,\,\,}\sethlcolor{blue!30!white}\hl{d}\sethlcolor{orange!30!white}\hl{a}\sethlcolor{green!30!white}\hl{y}\sethlcolor{yellow!30!white}\hl{.}\sethlcolor{gray!30!white}\hl{\textbackslash{}n}\sethlcolor{red!30!white}\hl{T}\sethlcolor{blue!30!white}\hl{h}\sethlcolor{orange!30!white}\hl{e}\sethlcolor{green!30!white}\hl{\,\,\,}\sethlcolor{yellow!30!white}\hl{p}\sethlcolor{gray!30!white}\hl{r}\sethlcolor{red!30!white}\hl{i}\sethlcolor{blue!30!white}\hl{c}\sethlcolor{orange!30!white}\hl{e}\sethlcolor{green!30!white}\hl{\,\,\,}\sethlcolor{yellow!30!white}\hl{o}\sethlcolor{gray!30!white}\hl{f}\sethlcolor{red!30!white}\hl{\,\,\,}\sethlcolor{blue!30!white}\hl{e}\sethlcolor{orange!30!white}\hl{a}\sethlcolor{green!30!white}\hl{c}\sethlcolor{yellow!30!white}\hl{h}\sethlcolor{gray!30!white}\hl{\,\,\,}\sethlcolor{red!30!white}\hl{e}\sethlcolor{blue!30!white}\hl{g}\sethlcolor{orange!30!white}\hl{g}\sethlcolor{green!30!white}\hl{\,\,\,}\sethlcolor{yellow!30!white}\hl{i}\sethlcolor{gray!30!white}\hl{s}\sethlcolor{red!30!white}\hl{\,\,\,}\sethlcolor{blue!30!white}\hl{\$}\sethlcolor{orange!30!white}\hl{2}\sethlcolor{green!30!white}\hl{,}\sethlcolor{yellow!30!white}\hl{\,\,\,}\sethlcolor{gray!30!white}\hl{s}\sethlcolor{red!30!white}\hl{o}\sethlcolor{blue!30!white}\hl{\,\,\,}\sethlcolor{orange!30!white}\hl{s}\sethlcolor{green!30!white}\hl{h}\sethlcolor{yellow!30!white}\hl{e}\sethlcolor{gray!30!white}\hl{\,\,\,}\sethlcolor{red!30!white}\hl{m}\sethlcolor{blue!30!white}\hl{a}\sethlcolor{orange!30!white}\hl{k}\sethlcolor{green!30!white}\hl{e}\sethlcolor{yellow!30!white}\hl{s}\sethlcolor{gray!30!white}\hl{\,\,\,}\sethlcolor{red!30!white}\hl{\$}\sethlcolor{blue!30!white}\hl{2}\sethlcolor{orange!30!white}\hl{\,\,\,}\sethlcolor{green!30!white}\hl{x}\sethlcolor{yellow!30!white}\hl{\,\,\,}\sethlcolor{gray!30!white}\hl{9}\sethlcolor{red!30!white}\hl{\,\,\,}\sethlcolor{blue!30!white}\hl{=}\sethlcolor{orange!30!white}\hl{\,\,\,}\sethlcolor{green!30!white}\hl{<{}}\sethlcolor{yellow!30!white}\hl{<{}}\sethlcolor{gray!30!white}\hl{2}\sethlcolor{red!30!white}\hl{*}\sethlcolor{blue!30!white}\hl{9}\sethlcolor{orange!30!white}\hl{=}\sethlcolor{green!30!white}\hl{1}}\\
&\texttt{\sethlcolor{yellow!30!white}\hl{8}\sethlcolor{gray!30!white}\hl{>{}}\sethlcolor{red!30!white}\hl{>{}}\sethlcolor{blue!30!white}\hl{1}\sethlcolor{orange!30!white}\hl{8}\sethlcolor{green!30!white}\hl{\,\,\,}\sethlcolor{yellow!30!white}\hl{d}\sethlcolor{gray!30!white}\hl{o}\sethlcolor{red!30!white}\hl{l}\sethlcolor{blue!30!white}\hl{l}\sethlcolor{orange!30!white}\hl{a}\sethlcolor{green!30!white}\hl{r}\sethlcolor{yellow!30!white}\hl{s}\sethlcolor{gray!30!white}\hl{\,\,\,}\sethlcolor{red!30!white}\hl{p}\sethlcolor{blue!30!white}\hl{e}\sethlcolor{orange!30!white}\hl{r}\sethlcolor{green!30!white}\hl{\,\,\,}\sethlcolor{yellow!30!white}\hl{d}\sethlcolor{gray!30!white}\hl{a}\sethlcolor{red!30!white}\hl{y}\sethlcolor{blue!30!white}\hl{.}\sethlcolor{orange!30!white}\hl{\textbackslash{}n}\sethlcolor{green!30!white}\hl{\#}\sethlcolor{yellow!30!white}\hl{\#}\sethlcolor{gray!30!white}\hl{\#}\sethlcolor{red!30!white}\hl{\#}\sethlcolor{blue!30!white}\hl{\,\,\,}\sethlcolor{orange!30!white}\hl{1}\sethlcolor{green!30!white}\hl{8}}
\vspace{5pt}
\\
$2$-Bytes & \texttt{\sethlcolor{red!30!white}\hl{\,\,\,J}\sethlcolor{blue!30!white}\hl{an}\sethlcolor{orange!30!white}\hl{et}\sethlcolor{green!30!white}\hl{\,\,\,s}\sethlcolor{yellow!30!white}\hl{el}\sethlcolor{gray!30!white}\hl{ls}\sethlcolor{red!30!white}\hl{\,\,\,}\sethlcolor{blue!30!white}\hl{16}\sethlcolor{orange!30!white}\hl{\,\,\,-}\sethlcolor{green!30!white}\hl{\,\,\,}\sethlcolor{yellow!30!white}\hl{3}\sethlcolor{gray!30!white}\hl{\,\,\,-}\sethlcolor{red!30!white}\hl{\,\,\,}\sethlcolor{blue!30!white}\hl{4}\sethlcolor{orange!30!white}\hl{\,\,\,=}\sethlcolor{green!30!white}\hl{\,\,\,<{}}\sethlcolor{yellow!30!white}\hl{<{}}\sethlcolor{gray!30!white}\hl{16}\sethlcolor{red!30!white}\hl{-}\sethlcolor{blue!30!white}\hl{3}\sethlcolor{orange!30!white}\hl{-}\sethlcolor{green!30!white}\hl{4}\sethlcolor{yellow!30!white}\hl{=}\sethlcolor{gray!30!white}\hl{9}\sethlcolor{red!30!white}\hl{>{}>{}}\sethlcolor{blue!30!white}\hl{9}\sethlcolor{orange!30!white}\hl{\,\,\,e}\sethlcolor{green!30!white}\hl{g}\sethlcolor{yellow!30!white}\hl{gs}\sethlcolor{gray!30!white}\hl{\,\,\,p}\sethlcolor{red!30!white}\hl{er}\sethlcolor{blue!30!white}\hl{\,\,\,d}\sethlcolor{orange!30!white}\hl{ay}\sethlcolor{green!30!white}\hl{.\textbackslash{}n}\sethlcolor{yellow!30!white}\hl{S}\sethlcolor{gray!30!white}\hl{he}\sethlcolor{red!30!white}\hl{\,\,\,m}\sethlcolor{blue!30!white}\hl{ak}\sethlcolor{orange!30!white}\hl{es}\sethlcolor{green!30!white}\hl{\,\,\,\$}\sethlcolor{yellow!30!white}\hl{2}\sethlcolor{gray!30!white}\hl{\,\,\,x}\sethlcolor{red!30!white}\hl{\,\,\,}\sethlcolor{blue!30!white}\hl{9}\sethlcolor{orange!30!white}\hl{\,\,\,=}\sethlcolor{green!30!white}\hl{\,\,\,<{}}\sethlcolor{yellow!30!white}\hl{<{}}\sethlcolor{gray!30!white}\hl{2}\sethlcolor{red!30!white}\hl{*}\sethlcolor{blue!30!white}\hl{9}\sethlcolor{orange!30!white}\hl{=}\sethlcolor{green!30!white}\hl{18}\sethlcolor{yellow!30!white}\hl{>{}>{}}\sethlcolor{gray!30!white}\hl{18}\sethlcolor{red!30!white}\hl{\,\,\,d}\sethlcolor{blue!30!white}\hl{ol}\sethlcolor{orange!30!white}\hl{l}\sethlcolor{green!30!white}\hl{ar}\sethlcolor{yellow!30!white}\hl{s}\sethlcolor{gray!30!white}\hl{\,\,\,p}\sethlcolor{red!30!white}\hl{er}\sethlcolor{blue!30!white}\hl{\,\,\,d}\sethlcolor{orange!30!white}\hl{ay}\sethlcolor{green!30!white}\hl{.\textbackslash{}n}\sethlcolor{yellow!30!white}\hl{\#\#}\sethlcolor{gray!30!white}\hl{\#\#}\sethlcolor{red!30!white}\hl{\,\,\,}\sethlcolor{blue!30!white}\hl{18}}
\vspace{5pt}
 \\
$4$-Bytes & \texttt{\sethlcolor{red!30!white}\hl{\,\,\,Jan}\sethlcolor{blue!30!white}\hl{et}\sethlcolor{orange!30!white}\hl{\,\,\,s}\sethlcolor{green!30!white}\hl{ells}\sethlcolor{yellow!30!white}\hl{\,\,\,}\sethlcolor{gray!30!white}\hl{16}\sethlcolor{red!30!white}\hl{\,\,\,-}\sethlcolor{blue!30!white}\hl{\,\,\,}\sethlcolor{orange!30!white}\hl{3}\sethlcolor{green!30!white}\hl{\,\,\,-}\sethlcolor{yellow!30!white}\hl{\,\,\,}\sethlcolor{gray!30!white}\hl{4}\sethlcolor{red!30!white}\hl{\,\,\,=}\sethlcolor{blue!30!white}\hl{\,\,\,<{}<{}}\sethlcolor{orange!30!white}\hl{16}\sethlcolor{green!30!white}\hl{-}\sethlcolor{yellow!30!white}\hl{3}\sethlcolor{gray!30!white}\hl{-}\sethlcolor{red!30!white}\hl{4}\sethlcolor{blue!30!white}\hl{=}\sethlcolor{orange!30!white}\hl{9}\sethlcolor{green!30!white}\hl{>{}>{}}\sethlcolor{yellow!30!white}\hl{9}\sethlcolor{gray!30!white}\hl{\,\,\,eg}\sethlcolor{red!30!white}\hl{gs}\sethlcolor{blue!30!white}\hl{\,\,\,per}\sethlcolor{orange!30!white}\hl{\,\,\,day}\sethlcolor{green!30!white}\hl{.\textbackslash{}n}\sethlcolor{yellow!30!white}\hl{She}\sethlcolor{gray!30!white}\hl{\,\,\,m}\sethlcolor{red!30!white}\hl{akes}\sethlcolor{blue!30!white}\hl{\,\,\,\$}\sethlcolor{orange!30!white}\hl{2}\sethlcolor{green!30!white}\hl{\,\,\,x}\sethlcolor{yellow!30!white}\hl{\,\,\,}\sethlcolor{gray!30!white}\hl{9}\sethlcolor{red!30!white}\hl{\,\,\,=}\sethlcolor{blue!30!white}\hl{\,\,\,<{}<{}}\sethlcolor{orange!30!white}\hl{2}\sethlcolor{green!30!white}\hl{*}\sethlcolor{yellow!30!white}\hl{9}\sethlcolor{gray!30!white}\hl{=}\sethlcolor{red!30!white}\hl{18}\sethlcolor{blue!30!white}\hl{>{}>{}}\sethlcolor{orange!30!white}\hl{18}\sethlcolor{green!30!white}\hl{\,\,\,d}\sethlcolor{yellow!30!white}\hl{oll}\sethlcolor{gray!30!white}\hl{ars}\sethlcolor{red!30!white}\hl{\,\,\,per}\sethlcolor{blue!30!white}\hl{\,\,\,day}\sethlcolor{orange!30!white}\hl{.\textbackslash{}n}\sethlcolor{green!30!white}\hl{\#\#\#\#}\sethlcolor{yellow!30!white}\hl{\,\,\,}\sethlcolor{gray!30!white}\hl{18}}
\vspace{5pt}
 \\
$8$-Bytes & \texttt{\sethlcolor{red!30!white}\hl{\,\,\,Janet}\sethlcolor{blue!30!white}\hl{\,\,\,sells}\sethlcolor{orange!30!white}\hl{\,\,\,}\sethlcolor{green!30!white}\hl{16}\sethlcolor{yellow!30!white}\hl{\,\,\,-}\sethlcolor{gray!30!white}\hl{\,\,\,}\sethlcolor{red!30!white}\hl{3}\sethlcolor{blue!30!white}\hl{\,\,\,-}\sethlcolor{orange!30!white}\hl{\,\,\,}\sethlcolor{green!30!white}\hl{4}\sethlcolor{yellow!30!white}\hl{\,\,\,=}\sethlcolor{gray!30!white}\hl{\,\,\,<{}<{}}\sethlcolor{red!30!white}\hl{16}\sethlcolor{blue!30!white}\hl{-}\sethlcolor{orange!30!white}\hl{3}\sethlcolor{green!30!white}\hl{-}\sethlcolor{yellow!30!white}\hl{4}\sethlcolor{gray!30!white}\hl{=}\sethlcolor{red!30!white}\hl{9}\sethlcolor{blue!30!white}\hl{>{}>{}}\sethlcolor{orange!30!white}\hl{9}\sethlcolor{green!30!white}\hl{\,\,\,eggs}\sethlcolor{yellow!30!white}\hl{\,\,\,per}\sethlcolor{gray!30!white}\hl{\,\,\,day}\sethlcolor{red!30!white}\hl{.\textbackslash{}n}\sethlcolor{blue!30!white}\hl{She}\sethlcolor{orange!30!white}\hl{\,\,\,makes}\sethlcolor{green!30!white}\hl{\,\,\,\$}\sethlcolor{yellow!30!white}\hl{2}\sethlcolor{gray!30!white}\hl{\,\,\,x}\sethlcolor{red!30!white}\hl{\,\,\,}\sethlcolor{blue!30!white}\hl{9}\sethlcolor{orange!30!white}\hl{\,\,\,=}\sethlcolor{green!30!white}\hl{\,\,\,<{}<{}}\sethlcolor{yellow!30!white}\hl{2}\sethlcolor{gray!30!white}\hl{*}\sethlcolor{red!30!white}\hl{9}\sethlcolor{blue!30!white}\hl{=}\sethlcolor{orange!30!white}\hl{18}\sethlcolor{green!30!white}\hl{>{}>{}}\sethlcolor{yellow!30!white}\hl{18}\sethlcolor{gray!30!white}\hl{\,\,\,dollars}\sethlcolor{red!30!white}\hl{\,\,\,per}\sethlcolor{blue!30!white}\hl{\,\,\,day}\sethlcolor{orange!30!white}\hl{.\textbackslash{}n}\sethlcolor{green!30!white}\hl{\#\#\#\#}\sethlcolor{yellow!30!white}\hl{\,\,\,}\sethlcolor{gray!30!white}\hl{18}}
\\
\end{tabular}
}
\vspace{-8pt}
\caption{ \footnotesize{An example of text generation by greedy decoding of the original model (Llama3.2-3B) and the vocabulary-reduced models with varying maximal token lengths from $1$ to $8$ bytes. Each token is colored periodically for visibility. See \Cref{app:sec:additional examples of vocabulary reduction} for other examples. } }
\label{fig:example of text generation with vocabulary reduction}
\end{figure}

\begin{table}[t!]
\begin{minipage}[t]{0.505\textwidth}
\hfill
\resizebox{1.0\linewidth}{!}{
\begin{tabular}{@{}cc|c|cccc@{}}\toprule
Models & Methods & Full & $1$ Bytes &$\leq2$ Bytes & $\leq4$ Bytes & $\leq8$ Bytes \\
\midrule
\multirow{2}{*}{OLMo2-1B} & Naive & \multirow{2}{*}{30.40} & 0.00 & 0.00 & 7.28 & 30.55 \\
 & \proposed{} & & \bf{30.40} & \bf{31.46} & \bf{31.39} & \bf{31.77} \\
\midrule
\multirow{2}{*}{Llama3.2-3B} & Naive &  \multirow{2}{*}{26.00} & 0.00 & 0.00 & 11.52 & \bf{26.31} \\
 & \proposed{} &  & \bf{26.31} & \bf{26.23} & \bf{25.93} & 26.16 \\
\midrule
\multirow{2}{*}{Qwen2.5-3B} & Naive &  \multirow{2}{*}{71.27} & 0.00 & 0.00 & 23.65 & 65.96 \\
 & \proposed{} &  & \textbf{71.19} & \textbf{70.43} & \textbf{71.42} & \textbf{72.18} \\
\midrule
\multirow{2}{*}{Falcon3-7B} & Naive &  \multirow{2}{*}{76.65} & 0.00 & 0.00 & 34.72 & 72.40 \\
 & \proposed{} &  & \textbf{78.92} & \textbf{79.38} & \textbf{79.30} & \textbf{79.23} \\
 \bottomrule
\end{tabular}
}
\vspace{-9pt}
\caption{\footnotesize{Results of vocabulary reduction on GSM8K, with varying maximal token lengths from $1$ to $8$ bytes. \textbf{Full} refers to the original models, \textbf{Naive} is the baseline of naive restriction and \textbf{\proposed{}} is our algorithm.}}
\label{tab:results of vocabulary reduction}
\end{minipage}
\hfill
\begin{minipage}[t]{0.475\textwidth}
\hfill
\resizebox{1.0\linewidth}{!}{
\begin{tabular}{@{}lcccc@{}}\toprule
\it{Single}  & GSM8K & MATH & ACP & MMLU-Pro  \\
\midrule
Qwen2.5-3B & 71.27 &  \bf{27.86} & \bf{36.71} &  37.71 \\
Falcon3-7B &  \bf{76.65} &  26.14  & 36.29 & \bf{42.71}  \\
\midrule
\textit{Ensemble (PoE)} & & & & \\
\midrule
Union~\footnotesize{\citep{yao2025determinethenensemble}}         & 27.60 & 19.57 & 25.86 & 1.93 \\
Naive (MCV)   & 24.94 &  21.71 & 25.71 & 2.07  \\
\proposed{} (1-Bytes) & \bf{82.49} & \bf{30.71} & \bf{35.43} & 41.21  \\
\proposed{} (MCV)     & 81.12 & 30.29 & 34.71 & \bf{42.00}  \\
\bottomrule
\end{tabular}
}
\hfill
\vspace{-9pt}
\caption{ \footnotesize{Results of ensemble by product of experts (PoE;~\citet{hinton1999products}). \textbf{Union} refers to the baseline of the union vocabulary and \textbf{Naive (MCV)} refers to the baseline of naive restriction to the MCV. (\Cref{app:sec:baselines})} }
\label{tab:results of ensemble}
\end{minipage}
\vspace{-10pt}
\end{table}

\section{Related Work}

\paragraph{Vocabulary reduction.}

There has been a line of research on vocabulary reduction, specifically aiming for compressing the size of language models, for e.g., ~\citet{ataman2017linguistically,gee2022fast_vocabulary_transfer,ushio2023vocabularytrimming,bogoychev2024ups-and-downs,chizhov2024pickybpe,nozaki2025efficient_vocabulary_reduction}, while their motivation differs from ours.
Some of them proposed methods like naive restriction with heuristics of vocabulary selection~\citep{ushio2023vocabularytrimming,bogoychev2024ups-and-downs}, and others proposed to modify embedding vectors with additional training to keep accuracy.
Recently, apart from this line of research, several work~\citep{phan2025exact,vieira2025language,hayase2025sampling} have proposed methods to convert token-level language models into the equivalent byte-level language models in inference-time.
\citet{phan2025exact} derived the first efficient method for the byte-level reduction under the validity assumption, and empirically confirmed its applicability to ensemble with different vocabularies.
\citet{vieira2025language} derived the byte-level reduction under general condition, while it requires more computation for accurate conversion.
Our framework can be seen as a generalization of \citet{phan2025exact} from bytes to arbitrary sub-vocabularies under the validity assumption.

\paragraph{Ensemble with different vocabularies.}

Specifically for ensemble of language models with different vocabularies, previous work have taken several heuristic approaches based on (i) partial matching between tokens as strings~\citep{jiang2023llm_blender,wan2024knowledge_fusion,liu2025cool_fusion}, (ii) similarity between tokens in the shared embedding space~\citep{xu2024bridging_gap,huang2024ensemble}, and (iii) the union vocabulary~\citep{yu2024breaking_ceiling,yao2025determinethenensemble}.
Although the first and second approaches seem to work well in experiments, they have no theoretical guarantee for their success and it is also difficult for them to be applied beyond ensemble.
The second approach also requires models to share the same embedding space.
The third approach, which simply extends next-token distributions by putting zero probabilityies to out-of-vocabulary tokens, has been reported to beat the previous heuristic methods despite of its simplicity~\citep{yao2025determinethenensemble}.
However, by its nature, it possibly struggles to capture the mutual relations between out-of-vocabulary tokens since it completely ignores any partial match as strings, which may lead to the perfomance drop observed in our experiments.

\section{Conclusion}
In this paper, we established the first theoretical framework of lossless vocabulary reduction, which reduces a given next-token distribution to the one with an arbitrary sub-vocabulary while preserving the generation quality.
Compared to the previous byte-level reduction, our framework enables more flexible and efficient cooperation between different language models through their common vocabularies.
We hope that our work will open up a new research direction for efficient lossless cooperation between language models with different vocabularies in a principled way.

\section*{Reproducibility statement}
The experimental details are provided in \Cref{app:sec:experimental details}.
All proofs are provided in Sections~\ref{sec:lossless vocabulary reduction} and \ref{app:sec:proofs}.

\bibliography{main}
\bibliographystyle{iclr2026_conference} %

\newpage

\appendix

\section{Proofs for \Cref{sec:lossless vocabulary reduction}}\label{app:sec:proofs}

\begin{lemma}\label{app:lem:decomposition of marginal distribution}
    Let $p_\mathcalV$ and $p_{\mathcalV\to\mathcalVsub}$ be as in Lemma~\ref{lem:decomposition of marginal distribution}. For any tokens $y_{1:k} \in \mathcalV^*$, we have
    \begin{align}\label{app:eq:decomposition of marginal distribution}
        p_{\mathcalV\to\mathcalVsub}(y_{1:k} *) = \sum_{x_{1:t}\in \cover_{\mathcalV,\mathcalVsub}(y_{1:k})} p_\mathcalV(x_{1:t} *).
    \end{align}
\end{lemma}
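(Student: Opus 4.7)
The plan is to expand both sides of \Cref{app:eq:decomposition of marginal distribution} as expectations over $x_{1:T}\sim p_\mathcalV$ and then reduce the identity to a pointwise partitioning statement about each sampled sequence. Starting from the definition of $p_{\mathcalV\to\mathcalVsub}$ in \Cref{eq:definition of vocabulary reduction} and marginalizing over $y_{k+1:K}$, I would first write
\begin{align*}
p_{\mathcalV\to\mathcalVsub}(y_{1:k}*)
= \mathbb{E}_{x_{1:T}\sim p_\mathcalV}\bigl[\mathds{1}\{y_{1:k} \prec [x_{1:T}]_{\mathcalV\to\mathcalVsub}\}\bigr].
\end{align*}
Symmetrically, for the right-hand side I would use $p_\mathcalV(x_{1:t}*) = \mathbb{E}_{x'_{1:T}\sim p_\mathcalV}[\mathds{1}\{x_{1:t}\prec x'_{1:T}\}]$ and exchange the finite sum with the expectation to obtain
\begin{align*}
\sum_{x_{1:t}\in \cover_{\mathcalV,\mathcalVsub}(y_{1:k})} p_\mathcalV(x_{1:t}*)
= \mathbb{E}_{x'_{1:T}\sim p_\mathcalV}\Bigl[\,\textstyle\sum_{x_{1:t}\in \cover_{\mathcalV,\mathcalVsub}(y_{1:k})}\mathds{1}\{x_{1:t}\prec x'_{1:T}\}\Bigr].
\end{align*}

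The lemma then reduces to the pointwise identity: for every $x'_{1:T}\in\mathcalV^*$ that lies in the support of $p_\mathcalV$, the inner sum equals $\mathds{1}\{y_{1:k}\prec[x'_{1:T}]_{\mathcalV\to\mathcalVsub}\}$. The validity condition \eqref{eq:validity condition} lets me restrict attention to valid $x'_{1:T}$. If $y_{1:k} \not\prec [x'_{1:T}]_{\mathcalV\to\mathcalVsub}$ then no prefix $x_{1:t}\prec x'_{1:T}$ can satisfy $y_{1:k}\prec [x_{1:t}]_{\mathcalV\to\mathcalVsub}$, since extending a sequence can only extend its nested tokenization; so the inner sum vanishes. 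Conversely, if $y_{1:k}\prec [x'_{1:T}]_{\mathcalV\to\mathcalVsub}$, I would take the smallest index $t^\star\le T$ for which $y_{1:k}\prec[x'_{1:t^\star}]_{\mathcalV\to\mathcalVsub}$; by minimality $y_{1:k}\not\prec[x'_{1:t^\star-1}]_{\mathcalV\to\mathcalVsub}$. Any $x_{1:t}\in\cover_{\mathcalV,\mathcalVsub}(y_{1:k})$ that is a prefix of $x'_{1:T}$ must satisfy these same two prefix conditions along $x'_{1:T}$, and hence must coincide with $x'_{1:t^\star}$, giving exactly one contribution to the sum.

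The main obstacle I anticipate is the validity requirement baked into the cover: I still need $x'_{1:t^\star}$ itself to be valid in order to conclude $x'_{1:t^\star}\in\cover_{\mathcalV,\mathcalVsub}(y_{1:k})$. The natural route is to invoke a prefix-closedness property of the tokenizer $\mathcalT_\mathcalV$, namely that any prefix of a valid token sequence is valid; for the greedy/BPE-style encoders considered in the paper this is standard (the encoder processes the text left to right, so $[[x'_{1:t^\star}]_\mathcalA]_\mathcalV$ agrees with the first $t^\star$ tokens of $[[x'_{1:T}]_\mathcalA]_\mathcalV = x'_{1:T}$), and I would state it explicitly as the only structural fact about $\mathcalT_\mathcalV$ used beyond \Cref{eq:condition for encoder and decoder}. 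Once this is in hand, the partition is established and the two expectations agree, finishing the proof.
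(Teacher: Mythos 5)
Your proof is correct and, at its core, mirrors the paper's argument: both decompose by the first index $t$ at which $y_{1:k}$ becomes a prefix of $[x_{1:t}]_{\mathcalV\to\mathcalVsub}$, with your pointwise-indicator formulation being a transparent repackaging of the paper's direct sum reindexing (the paper expands $p_{\mathcalV\to\mathcalVsub}(y_{1:k}*) = \sum_{x_{1:T}:\,[x_{1:T}]_{\mathcalV\to\mathcalVsub}\in y_{1:k}*} p_\mathcalV(x_{1:T})$, splits by the first such $t$, and regroups into $\sum_{x_{1:t}\in\cover^{(t)}_{\mathcalV,\mathcalVsub}(y_{1:k})} p_\mathcalV(x_{1:t}*)$). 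You also correctly identify that both arguments tacitly rely on prefix-closedness of validity---the paper uses it silently when it replaces the sum over all prefix-condition-satisfying $x_{1:t}$ by the sum over the validity-restricted $\cover^{(t)}_{\mathcalV,\mathcalVsub}(y_{1:k})$, i.e.\ when it treats $p_\mathcalV(x_{1:t}*)=0$ for invalid $x_{1:t}$---so flagging it explicitly, as you do, is a genuine improvement in rigor rather than a deviation in method.
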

\begin{proof}
Here we introduce the following notation:
\begin{align}
    \cover_{\mathcalV,\mathcalVsub}^{(t)}(y_{1:k}) &:= \cover_{\mathcalV,\mathcalVsub}(y_{1:k}) \cap \mathcalV^t,
\end{align}
Then we have the following decomposition for the relative covering set:
\begin{equation*}
    \cover_{\mathcalV,\mathcalVsub}(y_{1:k}) = \bigsqcup_{t\in\mathbb{N}} \cover_{\mathcalV,\mathcalVsub}^{(t)}(y_{1:k})    
\end{equation*}

Using these notation, we have
\begin{align*}
    p_{\mathcalV\to\mathcalVsub}(y_{1:k}*) &= \sum_{y_{1:K} \in y_{1:k}*} p_{\mathcalV\to\mathcalVsub}(y_{1:K}) \\
    &= \sum_{y_{1:K} \in y_{1:k}*} \sum_{\substack{x_{1:T}\in\mathcalV^*, \\ [x_{1:T}]_{\mathcalV\to\mathcalVsub} = y_{1:K}}} p_{\mathcalV}(x_{1:T}) \\
    &= \sum_{\substack{x_{1:T}\in\mathcalV^*,\\ [x_{1:T}]_{\mathcalV\to\mathcalVsub}\in y_{1:k}*}} p_{\mathcalV}(x_{1:T}) \\
    &= \sum_{\substack{x_{1:T}\in\mathcalV^*,\\ [x_1]_{\mathcalV\to\mathcalVsub}\cdots[x_T]_{\mathcalV\to\mathcalVsub}\in y_{1:k}*}} p_{\mathcalV}(x_{1:T}) \\
    &= \sum_{t\in\mathbb{N}} \sum_{\substack{x_{1:T}\in\mathcalV^*,\\ [x_1]_{\mathcalV\to\mathcalVsub}\cdots[x_t]_{\mathcalV\to\mathcalVsub}\in y_{1:k}*,\\ [x_1]_{\mathcalV\to\mathcalVsub}\cdots[x_t-1]_{\mathcalV\to\mathcalVsub}\not\in y_{1:k}*}} p_{\mathcalV}(x_{1:T}) \\
    &= \sum_{t\in\mathbb{N}} \sum_{\substack{x_{1:t}\in\mathcalV^t,\\ [x_1]_{\mathcalV\to\mathcalVsub}\cdots[x_t]_{\mathcalV\to\mathcalVsub}\in y_{1:k}*,\\ [x_1]_{\mathcalV\to\mathcalVsub}\cdots[x_t-1]_{\mathcalV\to\mathcalVsub}\not\in y_{1:k}*}} \sum_{x_{t+1:T} \in\mathcalV^*} p_{\mathcalV}(x_{1:T}) \\
    &= \sum_{t\in\mathbb{N}} \sum_{x_{1:t}\in\cover_{\mathcalV,\mathcalVsub}^{(t)}(y_{1:k})} p_{\mathcalV}(x_{1:t}*) \\
    &= \sum_{x_{1:t}\in\cover_{\mathcalV,\mathcalVsub}(y_{1:k})} p_{\mathcalV}(x_{1:t}*)
\end{align*}
\end{proof}

\begin{lemma}\label{app:lem:reccurance description of cover set}
    Under the notation in \Cref{sec:lossless vocabulary reduction},
    $x_{1:t} \in \cover_{\mathcalV,\mathcalVsub}(y_{1:k})$ if and only if $x_{1:t}$ satisfies either
    \begin{itemize}
        \item[(i)] $x_{1:t} \in \cover_{\mathcalV,\mathcalVsub}(y_{1:k-1})$ and $y_{1:k} \prec [x_{1:t}]_{\mathcalV\to\mathcalVsub}$,
        \item[or (ii)] $x_{1:t}$ is valid and satisfies $[x_{1:t-1}]_{\mathcalV\to\mathcalVsub} = y_{1:k-1}$ and $y_k \prec [x_{t}]_{\mathcalVsub}$.
    \end{itemize}

\end{lemma}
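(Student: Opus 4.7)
The plan is to prove the equivalence by a straightforward forward/backward case analysis, splitting on how $[x_{1:t-1}]_{\mathcalV\to\mathcalVsub}$ relates to $y_{1:k-1}$ as a string. The reverse direction is a direct verification of the three defining conditions of $\cover_{\mathcalV,\mathcalVsub}(y_{1:k})$; the forward direction is where the real content lies and where a dichotomy needs a small argument to line up cleanly with case (ii).

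For the forward direction, I would start from $x_{1:t}\in\cover_{\mathcalV,\mathcalVsub}(y_{1:k})$ and unpack its three defining properties: $x_{1:t}$ is valid, $y_{1:k}\prec[x_{1:t}]_{\mathcalV\to\mathcalVsub}$, and $y_{1:k}\not\prec[x_{1:t-1}]_{\mathcalV\to\mathcalVsub}$. I would then split on whether $y_{1:k-1}\prec[x_{1:t-1}]_{\mathcalV\to\mathcalVsub}$. If no, all three conditions in the definition of $\cover_{\mathcalV,\mathcalVsub}(y_{1:k-1})$ are satisfied by $x_{1:t}$ (validity, $y_{1:k-1}\prec[x_{1:t}]_{\mathcalV\to\mathcalVsub}$ follows from $y_{1:k}\prec[x_{1:t}]_{\mathcalV\to\mathcalVsub}$, and $y_{1:k-1}\not\prec[x_{1:t-1}]_{\mathcalV\to\mathcalVsub}$ by the case assumption), giving case (i).

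The subtle step, and I expect the main obstacle, is the other branch: if $y_{1:k-1}\prec[x_{1:t-1}]_{\mathcalV\to\mathcalVsub}$, I need to upgrade this prefix relation to an equality $[x_{1:t-1}]_{\mathcalV\to\mathcalVsub}=y_{1:k-1}$ in order to conclude case (ii). I would argue by contradiction. Suppose the prefix is strict, so $[x_{1:t-1}]_{\mathcalV\to\mathcalVsub}$ has at least $k$ sub-tokens; denote its $k$-th sub-token by $z_k$. Either $z_k=y_k$, in which case $y_{1:k}\prec[x_{1:t-1}]_{\mathcalV\to\mathcalVsub}$ directly contradicts the minimality condition of the cover, or $z_k\neq y_k$, in which case $[x_{1:t}]_{\mathcalV\to\mathcalVsub}$ (an extension of $[x_{1:t-1}]_{\mathcalV\to\mathcalVsub}$) also has $z_k$ as its $k$-th sub-token, so $y_{1:k}\not\prec[x_{1:t}]_{\mathcalV\to\mathcalVsub}$, contradicting the cover condition. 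Hence $[x_{1:t-1}]_{\mathcalV\to\mathcalVsub}=y_{1:k-1}$, and then $y_{1:k}\prec[x_{1:t}]_{\mathcalV\to\mathcalVsub}=y_{1:k-1}\cdot[x_t]_{\mathcalVsub}$ forces $y_k\prec[x_t]_{\mathcalVsub}$, giving case (ii).

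For the reverse direction, I would verify each case directly. Under (i), validity is inherited from $\cover_{\mathcalV,\mathcalVsub}(y_{1:k-1})$, the condition $y_{1:k}\prec[x_{1:t}]_{\mathcalV\to\mathcalVsub}$ is assumed, and $y_{1:k}\not\prec[x_{1:t-1}]_{\mathcalV\to\mathcalVsub}$ follows because $y_{1:k-1}$ is already not a prefix. Under (ii), validity is assumed, $[x_{1:t}]_{\mathcalV\to\mathcalVsub}=y_{1:k-1}\cdot[x_t]_{\mathcalVsub}$ and $y_k\prec[x_t]_{\mathcalVsub}$ give $y_{1:k}\prec[x_{1:t}]_{\mathcalV\to\mathcalVsub}$, while $[x_{1:t-1}]_{\mathcalV\to\mathcalVsub}=y_{1:k-1}$ has length $k-1$ and thus cannot have $y_{1:k}$ as a prefix. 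This completes the equivalence; no further machinery beyond the definitions of $\prec$ and $[\blank]_{\mathcalV\to\mathcalVsub}$ should be needed.
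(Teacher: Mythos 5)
Your proof is correct and follows essentially the same route as the paper's. The paper first observes that the cover conditions force $[x_{1:t-1}]_{\mathcalV\to\mathcalVsub}$ to be a strict prefix of $y_{1:k}$, say $y_{1:k'}$ with $k'<k$, and then splits on $k'=k-1$ versus $k'<k-1$; your split on whether $y_{1:k-1}\prec[x_{1:t-1}]_{\mathcalV\to\mathcalVsub}$ is logically the same dichotomy, and your contradiction argument in the ``yes'' branch is just a more explicit version of the paper's prefix observation. You additionally spell out the converse implication, which the paper's proof leaves implicit; that is a reasonable extra check but not a different method.
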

\begin{proof}
    First of all, we note that $y_{1:k'} = [x_{1:t-1}]_{\mathcalV\to\mathcalVsub}$ for some $k' < k$ by the assumption $x_{1:t} \in \cover_{\mathcalV,\mathcalVsub}(y_{1:k})$.
    Then the last token $x_t$ can be expanded as $[x_t]_{\mathcalV\to\mathcalVsub} = y_{k'+1:k}\cdots \in y_{k'+1:k}*$.
    In the case of $k'=k-1$, it follows that $y_k \prec [x_t]_{\mathcalV\to\mathcalVsub}$ and thus corresponds to the case (ii).
    On the other hand, if $k'<k-1$, we have $x_{1:t} \in \cover_{\mathcalV,\mathcalVsub}(y_{1:k-1})$ because $[x_{1:t}]_{\mathcalV\to\mathcalVsub}\in y_{1:k-1}*$ and $[x_{1:t-1}]_{\mathcalV\to\mathcalVsub}=y_{1:k'}\not\in y_{1:k-1}*$, which corresponds to the case (i).
\end{proof}

\section{An Example of Lossless Vocabulary Reduction}\label{app:sec:example of LVR}

Here we assume that $\mathcalA = \{0,1\}$ instead of a set of bytes.
Let $\mathcalV := \{\langle 0 \rangle, \langle 1 \rangle, \langle 00 \rangle, \langle 001 \rangle \}$ and $\mathcalVsub := \{\langle 0 \rangle, \langle 1 \rangle, \langle 00 \rangle \}$, where each $\langle \blank \rangle$ denotes a token.
We suppose that the corresponding tokenizations are given by the greedy forward-matching tokenization, which maps each input bits $b_1\cdots b_N \in \{0, 1\}^N$ to the longest matching tokens in the vocabulary, $\mathcalV$ or $\mathcalVsub$, from left to right.
Then we consider a language model $p_\mathcalV$ over $\mathcalV$ given by:
\begin{gather}\label{app:eq:example of p_V}
    p_\mathcalV(x_0*) = \begin{cases}
        0.1 & \text{if } x_0 = \langle 0 \rangle,\\
        0.1 & \text{if } x_0 = \langle 1 \rangle,\\
        0.5 & \text{if } x_0 = \langle 00 \rangle,\\
        0.3 & \text{if } x_0 = \langle 001 \rangle,
    \end{cases}
    \ \ \ \ \ \ \ \ 
    p_\mathcalV(x_1 \mid x_0=\langle 00 \rangle) = \begin{cases}
        0.6 & \text{if } x_1 = \langle 0 \rangle,\\
        0 & \text{if } x_1 = \langle 1 \rangle,\\
        0.3 & \text{if } x_1 = \langle 00 \rangle,\\
        0.1 & \text{if } x_1 = \langle 001 \rangle,
    \end{cases}
\end{gather}
Note that the tokens $\langle 00 \rangle \langle 1 \rangle$ are invalid since $001$ is tokenized as $\langle 001 \rangle$ in the greedy forward-matching tokenization, and thus the probability $p_\mathcalV(x_1=\langle 1 \rangle \mid x_0=\langle 00 \rangle)$ is set to $0$.

To compute $p_{\mathcalV\to\mathcalVsub}(y_{0})$, we first calculate the relative covers:
\begin{equation}
    \cover_{\mathcalV,\mathcalVsub}(\langle 0 \rangle) = \{ \langle 0 \rangle \}, \  \cover_{\mathcalV,\mathcalVsub}( \langle 1 \rangle) = \{ \langle 1 \rangle \}, \ \cover_{\mathcalV,\mathcalVsub}(\langle 00 \rangle) = \{\langle 00 \rangle, \langle 001 \rangle \},
\end{equation}
Then we can compute the marginal probabilities $p_{\mathcalV\to\mathcalVsub}(y_0*)$ as
\begin{gather}\label{app:eq:first token prob for the reduced model}
    p_{\mathcalV\to\mathcalVsub}(y_0*) = \begin{cases}
        0.1 \ \ (= p_\mathcalV(\langle 0 \rangle*)) & \text{if } y_0 = \langle 0 \rangle,\\
        0.1 \ \ (= p_\mathcalV(\langle 1 \rangle*)) & \text{if } y_0 = \langle 1 \rangle,\\
        0.8 \ \ (= p_\mathcalV(\langle 00 \rangle*) + p_\mathcalV(\langle 001 \rangle*)) & \text{if } y_0 = \langle 00 \rangle,\\
    \end{cases}
\end{gather}
Now suppose that $y_0 = \langle 00 \rangle$ is sampled.
To derive the next-token probability $p_{\mathcalV\to\mathcalVsub}(y_1 \mid y_0=\langle 00 \rangle)$, we need to consider the following relative covers:
\begin{gather}
    \cover_{\mathcalV,\mathcalVsub}(\langle 00 \rangle \langle 0 \rangle) = \{ \langle 00 \rangle \langle 0 \rangle \}, \  \cover_{\mathcalV,\mathcalVsub}( \langle 00 \rangle \langle 1 \rangle) = \{ \langle 001 \rangle \}, \nonumber\\
    \cover_{\mathcalV,\mathcalVsub}(\langle 00 \rangle \langle 00 \rangle) = \{\langle 00 \rangle \langle 00 \rangle, \langle 00 \rangle \langle 001 \rangle \},\nonumber
\end{gather}
Then the marginal probabilities $p_{\mathcalV\to\mathcalVsub}(\langle 00 \rangle y_1 *)$ is obtained as follows:
\begin{gather}\label{app:eq:second token prob for the reduced model}
    p_{\mathcalV\to\mathcalVsub}(\langle 00 \rangle y_1 *) = \begin{cases}
        0.3 \ (= p_\mathcalV(\langle 00 \rangle \langle 0 \rangle*)) & \text{if } y_1 = \langle 0 \rangle,\\
        0.3 \ (= p_\mathcalV(\langle 00 \rangle \langle 1 \rangle*)) & \text{if } y_1 = \langle 1 \rangle,\\
        0.2 \ (= p_\mathcalV(\langle 00 \rangle \langle 00 \rangle*) + p_\mathcalV(\langle 00 \rangle \langle 001 \rangle *)) & \text{if } y_1 = \langle 00 \rangle,\\
    \end{cases}    
\end{gather}
Finally, we obtain the next-token distribution by normalizing the above marginal probabilities:
\begin{gather}
    p_{\mathcalV\to\mathcalVsub}(y_1 \mid y_0=\langle 00 \rangle) = \begin{cases}
        0.375 & \text{if } y_1 = \langle 0 \rangle,\\
        0.375 & \text{if } y_1 = \langle 1 \rangle,\\
        0.25 & \text{if } y_1 = \langle 00 \rangle,\\
    \end{cases}
\end{gather}

Now we compare the corresponding text distributions for $p_\mathcalV$ and $p_{\mathcalV\to\mathcalVsub}$.
Especially, we calculate the probability that the output text starts with "000".
For the case of $p_\mathcalV$, such a probability is nothing but the probability of $x_0 = \langle 00 \rangle$, which is $0.5$, since the text "$000\cdots$" can be tokenized as either $\langle 00 \rangle \langle 0 \rangle \cdots$ or $\langle 00 \rangle \langle 00 \rangle$  or $\langle 00 \rangle \langle 001 \rangle \cdots$, by the greedy forward-matching tokenization.
On the other hand, for the case of $p_{\mathcalV\to\mathcalVsub}$, the text "$000\cdots$" can be tokenized as either $\langle 00 \rangle \langle 0 \rangle \cdots$ or $\langle 00 \rangle \langle 00 \rangle \cdots$, and thus the corresponding probability is given by $p_{\mathcalV\to\mathcalVsub}(\langle 00 \rangle \langle 0 \rangle *) + p_{\mathcalV\to\mathcalVsub}(\langle 00 \rangle \langle 00 \rangle *) = 0.3 + 0.2 = 0.5$.
Therefore, the two distributions $p_\mathcalV$ and $p_{\mathcalV\to\mathcalVsub}$ are equally plausible to output the text starting with "000".

Also, here we can observe the following fact: even if two token-distributions are equivalent at the level of their byte-level distributions, \textit{the texts obtained by greedy decoding may be different from each other in general}.
Indeed, on the one hand, the greedy decoding for the original distribution $p_\mathcalV$ generates the first token $x_0 = \langle 00 \rangle$ and the second token $x_1 = \langle 0 \rangle$ according to equation~(\ref{app:eq:example of p_V}).
On the other hand, the greedy decoding for the reduced distribution $p_{\mathcalV\to\mathcalVsub}$ may generate\footnote{This behavior is dependent on the implementation of greedy decoding sicne there are two tokens ($y_1=\langle 0\rangle$ and $y_1=\langle 0 \rangle$) achieving the maximal conditional probability.} $y_0=\langle 00 \rangle$ and $y_1=\langle 1\rangle$ according to equation (\ref{app:eq:first token prob for the reduced model}) and (\ref{app:eq:second token prob for the reduced model}), resulting in a different text $001$ from the former one $000$, even though both distributions have the same text distribution by \Cref{thm:lossless reduction}.

\section{Experimental Details}\label{app:sec:experimental details}

\subsection{Setups}

\paragraph{Models.}
\begin{itemize}
\item \textbf{Qwen 2.5}~\citep{yang2024qwen2.5}: A family of large language models developed by Qwen Team, ranging from 0.5B to 72B parameters. The tokenizer is implemented by the byte-level BPE with the vocabulary consisting of 151{,}665 tokens.
\item \textbf{OLMo 2}~\citep{olmo20242olmo2furious}: A family of large language models developed by Allen Institute for AI, ranging from 1B to 32B parameters. The tokenizer is implemented by the byte-level BPE with the vocabulary consisting of 100{,}278 tokens.
\item \textbf{Llama 3.1 and 3.2}~\citep{meta2024llama3.1,meta2024llama3.2}: A family of large language models developed by Meta AI, ranging from 1B to 90B parameters. The tokenizer is implemented by the byte-level BPE with the vocabulary consisting of 128{,}256 tokens.
\item \textbf{Falcon 3}~\citep{Falcon3}: A family of large language models developed by Technology Innovation Institute (TII), ranging from 1B to 10B parameters. The tokenizer is implemented by the byte-level BPE with the vocabulary consisting of  131{,}072 tokens.
\item \textbf{Phi 2}~\citep{javaheripi2023phi}: A large language model with 3B parameters, developed by Microsoft. The tokenizer is implemented by the byte-level BPE with the vocabulary consisting of 50{,}295 tokens.
\item \textbf{Yi 1.5}~\citep{young2024yi}: A family of large language models developed by 01.AI, ranging from 6B to 34B parameters. The tokenizer is implemented by the byte-level BPE with the vocabulary consisting of 63{,}992 tokens.
\end{itemize}

\paragraph{Datasets.}
We used the following datasets for evaluating language models through the lm-evaluation-harness library~\citep{eval-harness} with default options.
\begin{itemize}
    \item \textbf{GSM8K}~\citep{cobbe2021training}: A dataset consisting of grade-school math questions. Each question has an example of chain-of-thought argument followed by an open-ended answer. For each question, 5 pairs of a question and its answer are provided for language models by default. We reported the percentage of strictly-extracted correct answers.
    \item \textbf{MATH}~\citep{hendrycks2021math}: A dataset of mathematical problem-solving tasks in 7 categories that require numerical and logical reasoning skills. Each question has an example of chain-of-thought argument followed by an open-ended answer. For each question, 4 pairs of a question and its answer are provided for language models, following the setting by \citet{lewkowycz2022solving}. We reported the percentage of correct answers after which processed symbolically, with randomly sampled 100 questions from each category.
    \item \textbf{ACPBench}~\citep{kokel2024acp}: A dataset of tasks in 7 categories, evaluating the reasoning ability about action, change, and planning, with multiple answer choices. Each question has an example of chain-of-thought argument followed by the correct answer. For each question, 2 pairs of a question and its answer are provided for language models by default. We reported the percentage of correct answers for the questions with multiple choices, with randomly sampled 100 questions from each category.
    \item \textbf{MMLU-Pro}~\citep{wang2024mmlupro}: A dataset of question-answering tasks in 14 categories, enhancing MMLU~\citep{hendrycks2021mmlu} by more challenging questions that require reasoning skills with multiple answer choices. Each question has an example of chain-of-thought argument followed by the correct answer. For each question, 5 pairs of a question and its answer are provided for language models by default. We reported the percentage of correct answers, with randomly sampled 100 questions from each category.
\end{itemize}

\paragraph{Decoding method.} Unless otherwise stated, we employed the greedy decoding for sampling tokens, i.e., iteratively sampling tokens with the highest probability, for both simpliticty and reproducibility. It is noteworthy that, even if two token distributions have the same text distribution, the resulting texts by greedy decoding for these models may be different from each other, as discussed in \Cref{app:sec:example of LVR}.

\subsection{Baselines}\label{app:sec:baselines}

\paragraph{Baselines for vocabulary reduction.}

\begin{itemize}
    \item \textbf{Naive Restriction}: Naive restriction is the most straightforward method for vocabulary reduction, but with no theoretical guarantee for accuracy. Let $p_\mathcalV(x_{t+1} | x_1 \cdots x_t)$ be a next-token distribution over $\mathcalV$, and let $\mathcalVsub$ be a subset of $\mathcalV$. The naive restriction $p_{\mathcalVsub}(y_{k+1} | y_1 \cdots y_k)$ is defined as follows: Given previously sampled tokens $y_1 \cdots y_k$, we first retokenize it in $\mathcalV$ by $x_1 \cdots x_t := [[y_1\cdots y_k]_\mathcalA]_\mathcalV$. Then we compute the next-token distribution $p_\mathcalV(x_{t+1} | x_1\cdots x_t)$, forcibly replace $p_\mathcalV(x_{t+1} | x_1\cdots x_t)$ by zero for all $x_{t+1} \not\in\mathcalVsub$, and renormalize it. Finally, we sample a next token $y_{k+1}=x_{t+1}\in\mathcalVsub$ from the renormalized distribution.
    \item \textbf{Byte-Level Reduction}~\citep{phan2025exact}: Since our lossless vocabulary reduction (LVR) can be seen as a generalization of the byte-level reduction proposed in \citet{phan2025exact}, LVR with 1-byte tokens is actually equivalent to the byte-level reduction.
\end{itemize}

\paragraph{Baselines for ensemble with different vocabularies.}

\begin{itemize}
    \item \textbf{Ensemble over Union Vocabulary}~\citep{yu2024breaking_ceiling,yao2025determinethenensemble}: Given language models $p_{\mathcalV_i}$ with different vocabularies $\mathcalV_i$, we can simply extend each next-token distribution $p_{\mathcalV_i}(x_{t+1}|x_1\cdots x_t)$ to the one over the union vocabulary $\mathcalV_\cup := \bigcup_{i} \mathcalV_i$ by putting zero probabilities for $x_{t+1}\in \mathcalV_\cup \setminus \mathcalV_i$.
    More specifically, to extend next-token distributions of the $i$-th model, given previous samples $y_1\cdots y_k\in\mathcalV_\cup^*$, we first take the retokenization $x_1\cdots x_t := [[y_1\cdots y_k]_\mathcalA]_{\mathcalV_i}$ and then compute the next-token distribution of the $i$-th model, following the procedure in \citet{yao2025determinethenensemble}.
    \item \textbf{Ensemble with Naive Restriction over MCV}: We just apply the naive restriction with $\mathcalVsub := \mathcalV_\cap$ (as defined in \Cref{sec:maximal common vocabulary}) to each language model, and then compute their ensemble distribution. 
\end{itemize}

\newpage

\section{Additional Experimental Results}

\subsection{Inference Efficiency}

\begin{table}[H]
\centering
\resizebox{0.8\linewidth}{!}{
\begin{tabular}{@{}lcc@{}}\toprule
  & OLMo2-1B \& Qwen2.5-0.5B & Qwen2.5-3B \& Falcon3-7B  \\
\midrule
Ensemble (1-Byte)  & $25.72 \pm 1.78$ (bytes/sec) &  $17.03 \pm 0.70$ (bytes/sec) \\
Ensemble (MCV)  & $46.78 \pm 9.29$ (bytes/sec) &  $33.50 \pm 5.41$ (bytes/sec) \\
\bottomrule
\end{tabular}
}
\caption{Inference speed over 100 questions on GSM8K. Since the average bytes of tokens in maximal common vocabulary (MCV) is obviously greater than 1 byte, the ensemble over MCV can generate more bytes per second than the byte-level ensemble~\citep{phan2025exact}.}
\label{app:tab:inference speed}
\end{table}

\subsection{Extended Results on Vocabulary Reduction}

\begin{table}[H]
\centering
\resizebox{0.6\linewidth}{!}{
\begin{tabular}{@{}cc|c|cccc@{}}\toprule
Models & Methods & Full & $1$ Bytes &$\leq2$ Bytes & $\leq4$ Bytes & $\leq8$ Bytes \\
\midrule
\multirow{2}{*}{Qwen2.5-0.5B} & Naive & \multirow{2}{*}{34.27} & 0.00 & 0.00 & 6.90 & 30.86 \\
 & \proposed{} & & \bf{34.95} & \bf{33.43} & \bf{34.34} & \bf{34.34} \\
\midrule
\multirow{2}{*}{OLMo2-1B} & Naive & \multirow{2}{*}{30.40} & 0.00 & 0.00 & 7.28 & 30.55 \\
 & \proposed{} & & \bf{30.40} & \bf{31.46} & \bf{31.39} & \bf{31.77} \\
\midrule
\multirow{2}{*}{Llama3.2-3B} & Naive &  \multirow{2}{*}{26.00} & 0.00 & 0.00 & 11.52 & \bf{26.31} \\
 & \proposed{} &  & \bf{26.31} & \bf{26.23} & \bf{25.93} & 26.16 \\
\midrule
\multirow{2}{*}{Qwen2.5-3B} & Naive &  \multirow{2}{*}{71.27} & 0.00 & 0.00 & 23.65 & 65.96 \\
 & \proposed{} &  & \textbf{71.19} & \textbf{70.43} & \textbf{71.42} & \textbf{72.18} \\
\midrule
\multirow{2}{*}{Falcon3-7B} & Naive &  \multirow{2}{*}{76.65} & 0.00 & 0.00 & 34.72 & 72.40 \\
 & \proposed{} &  & \textbf{78.92} & \textbf{79.38} & \textbf{79.30} & \textbf{79.23} \\
 \bottomrule
\end{tabular}
}
\vspace{-5pt}
\caption{Quantitative evaluations of vocabulary reduction on GSM8K, with varying maximal token lengths from $1$ to $8$ bytes. \textbf{Full} refers to the original models, \textbf{Naive} is the baseline of naive restriction (\Cref{app:sec:experimental details}) and \textbf{\proposed{}} is our algorithm.}
\end{table}

\subsection{Products of Experts and Mixtures of Experts}\label{app:sec:poe and moe}

There are two straightforward definitions for ensemble of probability distributions $p_1(x), \cdots, p_n(x)$ over the same domain $X$:
\begin{itemize}
    \item Products of Experts (PoE;~\citet{hinton1999products}) with uniform weights:
          \begin{equation*}
            p_{\mathrm{ens}}(x) \propto \prod_{i=1}^n p_i(x)
          \end{equation*}
    \item Mixtures of Experts (MoE;~\citet{jordan1994hierarchical}) with uniform weights:
          \begin{equation*}
            p_{\mathrm{ens}}(x) \propto \sum_{i=1}^n  p_i(x)
          \end{equation*}
\end{itemize}
Intuitively, PoE corresponds to the probability of sampling the same $x$ simultaneously from all distributions $p_i(x)$, and MoE corresponds to  the probability of sampling $x$ from some randomly-chosen distribution $p_i(x)$.
As argued in \citet{hinton1999products}, the former ensemble is more suitable to high-dimensional probabilities, including next-token distributions over the vocabulary of more than hundreds or thousands tokens, than the latter ensemble.
Indeed, in the case of next-token distributions, a sampled token from the MoE model is preferred by some next-token distribution $p_{i_0}(x)$ but may be unfavored by the other distributions, which causes the distributional shift in the next sampling phase.

In addition to the results in the main paper, we performed experiments of both PoE and MoE with small models (\Cref{app:tab:ensemble result with small models}) and larger models (\Cref{app:tab:ensemble result with large models}).
Interestingly, while the heuristic baselines (Union and Naive) overall worked well with the MoE ensemble, they catastrophically failed in some cases with the PoE ensemble.
On the other hand, our approach of lossless vocabulary reduction works well in both cases, which suggests the broader applicability of our approach than the heuristic ones.

\begin{table}[H]
\centering
\hfill
\begin{minipage}[h]{0.48\textwidth}
\resizebox{1.0\linewidth}{!}{
\begin{tabular}{@{}lcccc@{}}\toprule
\it{Single}  & GSM8K & MATH & ACP & MMLU-Pro  \\
\midrule
Qwen2.5-0.5B & \bf{34.27} &  \bf{13.14} & \bf{27.71} & \bf{15.21} \\
OLMo2-1B     & 33.51 &  5.86  & 21.86 & 13.04  \\
\midrule
\it{Ensemble (PoE)} & & & & \\
\midrule
Union         &  2.58 & 5.00 & 25.57 &  15.00 \\
Naive (MCV)   &  6.37 &  4.43 & 25.57  & 14.93  \\
\proposed{} (1-Bytes) &  39.12 & \bf{10.00} & 25.29 & \bf{16.64}  \\
\proposed{} (MCV)     & \bf{39.27} & 9.71 & \bf{26.14} & 16.21  \\
\midrule
\it{Ensemble (MoE)} & & & & \\
\midrule
Union         & 35.03 & \bf{10.71} & 25.71 & 15.57 \\
Naive (MCV)   & 29.80 & 10.29 & 25.71 & 14.86  \\
\proposed{} (1-Bytes) & 36.69 & 9.00 & 25.86 & \bf{15.86}  \\
\proposed{} (MCV)     & \bf{38.13} & 10.14 & \bf{26.57} & 15.40  \\
\bottomrule
\end{tabular}
}
\end{minipage}
\hfill
\caption{ Results of ensemble by the product of experts (PoE) and the mixture of experts (MoE) with small models. }
\label{app:tab:ensemble result with small models}
\end{table}

\begin{table}[H]
\centering
\hfill
\begin{minipage}[h]{0.48\textwidth}
\resizebox{1.0\linewidth}{!}{
\begin{tabular}{@{}lcccc@{}}\toprule
\it{Single}  & GSM8K & MATH & ACP & MMLU-Pro  \\
\midrule
Qwen2.5-3B & 71.27 &  \bf{27.86} & \bf{36.71} &  37.71 \\
Falcon3-7B &  \bf{76.65} &  26.14  & 36.29 & \bf{42.71}  \\
\midrule
\it{Ensemble (PoE)} & & & & \\
\midrule
Union         & 27.60 & 19.57 & 25.86 & 1.93 \\
Naive (MCV)   & 24.94 &  21.71 & 25.71 & 2.07  \\
\proposed{} (1-Bytes) & \bf{82.49} & \bf{30.71} & \bf{35.43} & 41.21  \\
\proposed{} (MCV)     & 81.12 & 30.29 & 34.71 & \bf{42.00}  \\
\midrule
\it{Ensemble (MoE)} & & & & \\
\midrule
Union         & 80.74 & 30.86 &  25.71 & \bf{42.71} \\
Naive (MCV)   & 75.51 & 28.29 & 25.00 & 39.14  \\
\proposed{} (1-Bytes) & \bf{81.96} & 30.43 & 36.43 & 42.21  \\
\proposed{} (MCV)     & 81.88 & \bf{31.57} & \bf{36.71} & 42.57  \\
\bottomrule
\end{tabular}
}
\end{minipage}
\hfill
\caption{ Results of ensemble by the product of experts (PoE) and  the mixture of experts (MoE) with large models. }
\label{app:tab:ensemble result with large models}
\end{table}

\subsection{Additional Examples of Vocabulary Reduction}\label{app:sec:additional examples of vocabulary reduction}

\begin{table}[H]
\resizebox{1.0\linewidth}{!}{
    \begin{tabular}{rl}\toprule
         Input & \texttt{Question: Josh decides to try flipping a house.  He buys a house for \$80,000 and then puts in \$50,000} \\
         & \texttt{in repairs.  This increased the value of the house by 150\%.  How much profit did he make?\textbackslash{}nAnswer:}  \vspace{5pt} \\
         1-Bytes  & \texttt{\sethlcolor{red!30!white}\hl{\,\,\,}\sethlcolor{blue!30!white}\hl{T}\sethlcolor{orange!30!white}\hl{h}\sethlcolor{green!30!white}\hl{e}\sethlcolor{yellow!30!white}\hl{\,\,\,}\sethlcolor{gray!30!white}\hl{h}\sethlcolor{red!30!white}\hl{o}\sethlcolor{blue!30!white}\hl{u}\sethlcolor{orange!30!white}\hl{s}\sethlcolor{green!30!white}\hl{e}\sethlcolor{yellow!30!white}\hl{\,\,\,}\sethlcolor{gray!30!white}\hl{w}\sethlcolor{red!30!white}\hl{a}\sethlcolor{blue!30!white}\hl{s}\sethlcolor{orange!30!white}\hl{\,\,\,}\sethlcolor{green!30!white}\hl{i}\sethlcolor{yellow!30!white}\hl{n}\sethlcolor{gray!30!white}\hl{c}\sethlcolor{red!30!white}\hl{r}\sethlcolor{blue!30!white}\hl{e}\sethlcolor{orange!30!white}\hl{a}\sethlcolor{green!30!white}\hl{s}\sethlcolor{yellow!30!white}\hl{e}\sethlcolor{gray!30!white}\hl{d}\sethlcolor{red!30!white}\hl{\,\,\,}\sethlcolor{blue!30!white}\hl{b}\sethlcolor{orange!30!white}\hl{y}\sethlcolor{green!30!white}\hl{\,\,\,}\sethlcolor{yellow!30!white}\hl{1}\sethlcolor{gray!30!white}\hl{5}\sethlcolor{red!30!white}\hl{0}\sethlcolor{blue!30!white}\hl{\%}\sethlcolor{orange!30!white}\hl{,}\sethlcolor{green!30!white}\hl{\,\,\,}\sethlcolor{yellow!30!white}\hl{w}\sethlcolor{gray!30!white}\hl{h}\sethlcolor{red!30!white}\hl{i}\sethlcolor{blue!30!white}\hl{c}\sethlcolor{orange!30!white}\hl{h}\sethlcolor{green!30!white}\hl{\,\,\,}\sethlcolor{yellow!30!white}\hl{m}\sethlcolor{gray!30!white}\hl{e}\sethlcolor{red!30!white}\hl{a}\sethlcolor{blue!30!white}\hl{n}\sethlcolor{orange!30!white}\hl{s}\sethlcolor{green!30!white}\hl{\,\,\,}\sethlcolor{yellow!30!white}\hl{i}\sethlcolor{gray!30!white}\hl{t}\sethlcolor{red!30!white}\hl{\,\,\,}\sethlcolor{blue!30!white}\hl{w}\sethlcolor{orange!30!white}\hl{a}\sethlcolor{green!30!white}\hl{s}\sethlcolor{yellow!30!white}\hl{\,\,\,}\sethlcolor{gray!30!white}\hl{i}\sethlcolor{red!30!white}\hl{n}\sethlcolor{blue!30!white}\hl{c}\sethlcolor{orange!30!white}\hl{r}\sethlcolor{green!30!white}\hl{e}\sethlcolor{yellow!30!white}\hl{a}\sethlcolor{gray!30!white}\hl{s}\sethlcolor{red!30!white}\hl{e}\sethlcolor{blue!30!white}\hl{d}\sethlcolor{orange!30!white}\hl{\,\,\,}\sethlcolor{green!30!white}\hl{b}\sethlcolor{yellow!30!white}\hl{y}\sethlcolor{gray!30!white}\hl{\,\,\,}\sethlcolor{red!30!white}\hl{1}\sethlcolor{blue!30!white}\hl{5}\sethlcolor{orange!30!white}\hl{0}\sethlcolor{green!30!white}\hl{/}\sethlcolor{yellow!30!white}\hl{1}\sethlcolor{gray!30!white}\hl{0}\sethlcolor{red!30!white}\hl{0}\sethlcolor{blue!30!white}\hl{\,\,\,}\sethlcolor{orange!30!white}\hl{=}\sethlcolor{green!30!white}\hl{\,\,\,}\sethlcolor{yellow!30!white}\hl{<{}}\sethlcolor{gray!30!white}\hl{<{}}\sethlcolor{red!30!white}\hl{1}\sethlcolor{blue!30!white}\hl{5}\sethlcolor{orange!30!white}\hl{0}\sethlcolor{green!30!white}\hl{/}\sethlcolor{yellow!30!white}\hl{1}\sethlcolor{gray!30!white}\hl{0}\sethlcolor{red!30!white}\hl{0}\sethlcolor{blue!30!white}\hl{=}\sethlcolor{orange!30!white}\hl{1}\sethlcolor{green!30!white}\hl{.}\sethlcolor{yellow!30!white}\hl{5}\sethlcolor{gray!30!white}\hl{>{}}\sethlcolor{red!30!white}\hl{>{}}\sethlcolor{blue!30!white}\hl{1}\sethlcolor{orange!30!white}\hl{.}\sethlcolor{green!30!white}\hl{5}\sethlcolor{yellow!30!white}\hl{\,\,\,}\sethlcolor{gray!30!white}\hl{t}\sethlcolor{red!30!white}\hl{i}\sethlcolor{blue!30!white}\hl{m}\sethlcolor{orange!30!white}\hl{e}\sethlcolor{green!30!white}\hl{s}} \\ 
 & \texttt{\sethlcolor{yellow!30!white}\hl{\,\,\,}\sethlcolor{gray!30!white}\hl{i}\sethlcolor{red!30!white}\hl{t}\sethlcolor{blue!30!white}\hl{s}\sethlcolor{orange!30!white}\hl{\,\,\,}\sethlcolor{green!30!white}\hl{o}\sethlcolor{yellow!30!white}\hl{r}\sethlcolor{gray!30!white}\hl{i}\sethlcolor{red!30!white}\hl{g}\sethlcolor{blue!30!white}\hl{i}\sethlcolor{orange!30!white}\hl{n}\sethlcolor{green!30!white}\hl{a}\sethlcolor{yellow!30!white}\hl{l}\sethlcolor{gray!30!white}\hl{\,\,\,}\sethlcolor{red!30!white}\hl{v}\sethlcolor{blue!30!white}\hl{a}\sethlcolor{orange!30!white}\hl{l}\sethlcolor{green!30!white}\hl{u}\sethlcolor{yellow!30!white}\hl{e}\sethlcolor{gray!30!white}\hl{.}\sethlcolor{red!30!white}\hl{\textbackslash{}n}\sethlcolor{blue!30!white}\hl{S}\sethlcolor{orange!30!white}\hl{o}\sethlcolor{green!30!white}\hl{\,\,\,}\sethlcolor{yellow!30!white}\hl{t}\sethlcolor{gray!30!white}\hl{h}\sethlcolor{red!30!white}\hl{e}\sethlcolor{blue!30!white}\hl{\,\,\,}\sethlcolor{orange!30!white}\hl{h}\sethlcolor{green!30!white}\hl{o}\sethlcolor{yellow!30!white}\hl{u}\sethlcolor{gray!30!white}\hl{s}\sethlcolor{red!30!white}\hl{e}\sethlcolor{blue!30!white}\hl{\,\,\,}\sethlcolor{orange!30!white}\hl{w}\sethlcolor{green!30!white}\hl{a}\sethlcolor{yellow!30!white}\hl{s}\sethlcolor{gray!30!white}\hl{\,\,\,}\sethlcolor{red!30!white}\hl{i}\sethlcolor{blue!30!white}\hl{n}\sethlcolor{orange!30!white}\hl{c}\sethlcolor{green!30!white}\hl{r}\sethlcolor{yellow!30!white}\hl{e}\sethlcolor{gray!30!white}\hl{a}\sethlcolor{red!30!white}\hl{s}\sethlcolor{blue!30!white}\hl{e}\sethlcolor{orange!30!white}\hl{d}\sethlcolor{green!30!white}\hl{\,\,\,}\sethlcolor{yellow!30!white}\hl{b}\sethlcolor{gray!30!white}\hl{y}\sethlcolor{red!30!white}\hl{\,\,\,}\sethlcolor{blue!30!white}\hl{1}\sethlcolor{orange!30!white}\hl{.}\sethlcolor{green!30!white}\hl{5}\sethlcolor{yellow!30!white}\hl{\,\,\,}\sethlcolor{gray!30!white}\hl{*}\sethlcolor{red!30!white}\hl{\,\,\,}\sethlcolor{blue!30!white}\hl{\$}\sethlcolor{orange!30!white}\hl{8}\sethlcolor{green!30!white}\hl{0}\sethlcolor{yellow!30!white}\hl{,}\sethlcolor{gray!30!white}\hl{0}\sethlcolor{red!30!white}\hl{0}\sethlcolor{blue!30!white}\hl{0}\sethlcolor{orange!30!white}\hl{\,\,\,}\sethlcolor{green!30!white}\hl{=}\sethlcolor{yellow!30!white}\hl{\,\,\,}\sethlcolor{gray!30!white}\hl{\$}\sethlcolor{red!30!white}\hl{<{}}\sethlcolor{blue!30!white}\hl{<{}}\sethlcolor{orange!30!white}\hl{1}\sethlcolor{green!30!white}\hl{.}\sethlcolor{yellow!30!white}\hl{5}\sethlcolor{gray!30!white}\hl{*}\sethlcolor{red!30!white}\hl{8}\sethlcolor{blue!30!white}\hl{0}\sethlcolor{orange!30!white}\hl{0}\sethlcolor{green!30!white}\hl{0}\sethlcolor{yellow!30!white}\hl{0}\sethlcolor{gray!30!white}\hl{=}\sethlcolor{red!30!white}\hl{1}\sethlcolor{blue!30!white}\hl{2}\sethlcolor{orange!30!white}\hl{0}\sethlcolor{green!30!white}\hl{0}\sethlcolor{yellow!30!white}\hl{0}\sethlcolor{gray!30!white}\hl{0}\sethlcolor{red!30!white}\hl{>{}}\sethlcolor{blue!30!white}\hl{>{}}\sethlcolor{orange!30!white}\hl{1}\sethlcolor{green!30!white}\hl{2}\sethlcolor{yellow!30!white}\hl{0}\sethlcolor{gray!30!white}\hl{,}\sethlcolor{red!30!white}\hl{0}\sethlcolor{blue!30!white}\hl{0}\sethlcolor{orange!30!white}\hl{0}\sethlcolor{green!30!white}\hl{.}\sethlcolor{yellow!30!white}\hl{\textbackslash{}n}\sethlcolor{gray!30!white}\hl{T}\sethlcolor{red!30!white}\hl{h}\sethlcolor{blue!30!white}\hl{e}} \\ 
 & \texttt{\sethlcolor{orange!30!white}\hl{\,\,\,}\sethlcolor{green!30!white}\hl{t}\sethlcolor{yellow!30!white}\hl{o}\sethlcolor{gray!30!white}\hl{t}\sethlcolor{red!30!white}\hl{a}\sethlcolor{blue!30!white}\hl{l}\sethlcolor{orange!30!white}\hl{\,\,\,}\sethlcolor{green!30!white}\hl{v}\sethlcolor{yellow!30!white}\hl{a}\sethlcolor{gray!30!white}\hl{l}\sethlcolor{red!30!white}\hl{u}\sethlcolor{blue!30!white}\hl{e}\sethlcolor{orange!30!white}\hl{\,\,\,}\sethlcolor{green!30!white}\hl{o}\sethlcolor{yellow!30!white}\hl{f}\sethlcolor{gray!30!white}\hl{\,\,\,}\sethlcolor{red!30!white}\hl{t}\sethlcolor{blue!30!white}\hl{h}\sethlcolor{orange!30!white}\hl{e}\sethlcolor{green!30!white}\hl{\,\,\,}\sethlcolor{yellow!30!white}\hl{h}\sethlcolor{gray!30!white}\hl{o}\sethlcolor{red!30!white}\hl{u}\sethlcolor{blue!30!white}\hl{s}\sethlcolor{orange!30!white}\hl{e}\sethlcolor{green!30!white}\hl{\,\,\,}\sethlcolor{yellow!30!white}\hl{a}\sethlcolor{gray!30!white}\hl{f}\sethlcolor{red!30!white}\hl{t}\sethlcolor{blue!30!white}\hl{e}\sethlcolor{orange!30!white}\hl{r}\sethlcolor{green!30!white}\hl{\,\,\,}\sethlcolor{yellow!30!white}\hl{r}\sethlcolor{gray!30!white}\hl{e}\sethlcolor{red!30!white}\hl{p}\sethlcolor{blue!30!white}\hl{a}\sethlcolor{orange!30!white}\hl{i}\sethlcolor{green!30!white}\hl{r}\sethlcolor{yellow!30!white}\hl{s}\sethlcolor{gray!30!white}\hl{\,\,\,}\sethlcolor{red!30!white}\hl{i}\sethlcolor{blue!30!white}\hl{s}\sethlcolor{orange!30!white}\hl{\,\,\,}\sethlcolor{green!30!white}\hl{\$}\sethlcolor{yellow!30!white}\hl{8}\sethlcolor{gray!30!white}\hl{0}\sethlcolor{red!30!white}\hl{,}\sethlcolor{blue!30!white}\hl{0}\sethlcolor{orange!30!white}\hl{0}\sethlcolor{green!30!white}\hl{0}\sethlcolor{yellow!30!white}\hl{\,\,\,}\sethlcolor{gray!30!white}\hl{+}\sethlcolor{red!30!white}\hl{\,\,\,}\sethlcolor{blue!30!white}\hl{\$}\sethlcolor{orange!30!white}\hl{5}\sethlcolor{green!30!white}\hl{0}\sethlcolor{yellow!30!white}\hl{,}\sethlcolor{gray!30!white}\hl{0}\sethlcolor{red!30!white}\hl{0}\sethlcolor{blue!30!white}\hl{0}\sethlcolor{orange!30!white}\hl{\,\,\,}\sethlcolor{green!30!white}\hl{+}\sethlcolor{yellow!30!white}\hl{\,\,\,}\sethlcolor{gray!30!white}\hl{\$}\sethlcolor{red!30!white}\hl{1}\sethlcolor{blue!30!white}\hl{2}\sethlcolor{orange!30!white}\hl{0}\sethlcolor{green!30!white}\hl{,}\sethlcolor{yellow!30!white}\hl{0}\sethlcolor{gray!30!white}\hl{0}\sethlcolor{red!30!white}\hl{0}\sethlcolor{blue!30!white}\hl{\,\,\,}\sethlcolor{orange!30!white}\hl{=}\sethlcolor{green!30!white}\hl{\,\,\,}\sethlcolor{yellow!30!white}\hl{\$}\sethlcolor{gray!30!white}\hl{<{}}\sethlcolor{red!30!white}\hl{<{}}\sethlcolor{blue!30!white}\hl{8}\sethlcolor{orange!30!white}\hl{0}\sethlcolor{green!30!white}\hl{0}\sethlcolor{yellow!30!white}\hl{0}\sethlcolor{gray!30!white}\hl{0}\sethlcolor{red!30!white}\hl{+}\sethlcolor{blue!30!white}\hl{5}\sethlcolor{orange!30!white}\hl{0}\sethlcolor{green!30!white}\hl{0}\sethlcolor{yellow!30!white}\hl{0}\sethlcolor{gray!30!white}\hl{0}\sethlcolor{red!30!white}\hl{+}\sethlcolor{blue!30!white}\hl{1}\sethlcolor{orange!30!white}\hl{2}\sethlcolor{green!30!white}\hl{0}\sethlcolor{yellow!30!white}\hl{0}\sethlcolor{gray!30!white}\hl{0}\sethlcolor{red!30!white}\hl{0}\sethlcolor{blue!30!white}\hl{=}\sethlcolor{orange!30!white}\hl{2}\sethlcolor{green!30!white}\hl{5}\sethlcolor{yellow!30!white}\hl{0}\sethlcolor{gray!30!white}\hl{0}} \\ 
 & \texttt{\sethlcolor{red!30!white}\hl{0}\sethlcolor{blue!30!white}\hl{0}\sethlcolor{orange!30!white}\hl{>{}}\sethlcolor{green!30!white}\hl{>{}}\sethlcolor{yellow!30!white}\hl{2}\sethlcolor{gray!30!white}\hl{5}\sethlcolor{red!30!white}\hl{0}\sethlcolor{blue!30!white}\hl{,}\sethlcolor{orange!30!white}\hl{0}\sethlcolor{green!30!white}\hl{0}\sethlcolor{yellow!30!white}\hl{0}\sethlcolor{gray!30!white}\hl{.}\sethlcolor{red!30!white}\hl{\textbackslash{}n}\sethlcolor{blue!30!white}\hl{T}\sethlcolor{orange!30!white}\hl{h}\sethlcolor{green!30!white}\hl{e}\sethlcolor{yellow!30!white}\hl{\,\,\,}\sethlcolor{gray!30!white}\hl{p}\sethlcolor{red!30!white}\hl{r}\sethlcolor{blue!30!white}\hl{o}\sethlcolor{orange!30!white}\hl{f}\sethlcolor{green!30!white}\hl{i}\sethlcolor{yellow!30!white}\hl{t}\sethlcolor{gray!30!white}\hl{\,\,\,}\sethlcolor{red!30!white}\hl{J}\sethlcolor{blue!30!white}\hl{o}\sethlcolor{orange!30!white}\hl{s}\sethlcolor{green!30!white}\hl{h}\sethlcolor{yellow!30!white}\hl{\,\,\,}\sethlcolor{gray!30!white}\hl{m}\sethlcolor{red!30!white}\hl{a}\sethlcolor{blue!30!white}\hl{d}\sethlcolor{orange!30!white}\hl{e}\sethlcolor{green!30!white}\hl{\,\,\,}\sethlcolor{yellow!30!white}\hl{i}\sethlcolor{gray!30!white}\hl{s}\sethlcolor{red!30!white}\hl{\,\,\,}\sethlcolor{blue!30!white}\hl{\$}\sethlcolor{orange!30!white}\hl{2}\sethlcolor{green!30!white}\hl{5}\sethlcolor{yellow!30!white}\hl{0}\sethlcolor{gray!30!white}\hl{,}\sethlcolor{red!30!white}\hl{0}\sethlcolor{blue!30!white}\hl{0}\sethlcolor{orange!30!white}\hl{0}\sethlcolor{green!30!white}\hl{\,\,\,}\sethlcolor{yellow!30!white}\hl{-}\sethlcolor{gray!30!white}\hl{\,\,\,}\sethlcolor{red!30!white}\hl{\$}\sethlcolor{blue!30!white}\hl{8}\sethlcolor{orange!30!white}\hl{0}\sethlcolor{green!30!white}\hl{,}\sethlcolor{yellow!30!white}\hl{0}\sethlcolor{gray!30!white}\hl{0}\sethlcolor{red!30!white}\hl{0}\sethlcolor{blue!30!white}\hl{\,\,\,}\sethlcolor{orange!30!white}\hl{-}\sethlcolor{green!30!white}\hl{\,\,\,}\sethlcolor{yellow!30!white}\hl{\$}\sethlcolor{gray!30!white}\hl{5}\sethlcolor{red!30!white}\hl{0}\sethlcolor{blue!30!white}\hl{,}\sethlcolor{orange!30!white}\hl{0}\sethlcolor{green!30!white}\hl{0}\sethlcolor{yellow!30!white}\hl{0}\sethlcolor{gray!30!white}\hl{\,\,\,}\sethlcolor{red!30!white}\hl{=}\sethlcolor{blue!30!white}\hl{\,\,\,}\sethlcolor{orange!30!white}\hl{\$}\sethlcolor{green!30!white}\hl{<{}}\sethlcolor{yellow!30!white}\hl{<{}}\sethlcolor{gray!30!white}\hl{2}\sethlcolor{red!30!white}\hl{5}\sethlcolor{blue!30!white}\hl{0}\sethlcolor{orange!30!white}\hl{0}\sethlcolor{green!30!white}\hl{0}\sethlcolor{yellow!30!white}\hl{0}\sethlcolor{gray!30!white}\hl{-}\sethlcolor{red!30!white}\hl{8}\sethlcolor{blue!30!white}\hl{0}\sethlcolor{orange!30!white}\hl{0}\sethlcolor{green!30!white}\hl{0}\sethlcolor{yellow!30!white}\hl{0}\sethlcolor{gray!30!white}\hl{-}\sethlcolor{red!30!white}\hl{5}\sethlcolor{blue!30!white}\hl{0}\sethlcolor{orange!30!white}\hl{0}\sethlcolor{green!30!white}\hl{0}\sethlcolor{yellow!30!white}\hl{0}\sethlcolor{gray!30!white}\hl{=}\sethlcolor{red!30!white}\hl{1}\sethlcolor{blue!30!white}\hl{2}\sethlcolor{orange!30!white}\hl{0}\sethlcolor{green!30!white}\hl{0}\sethlcolor{yellow!30!white}\hl{0}\sethlcolor{gray!30!white}\hl{0}\sethlcolor{red!30!white}\hl{>{}}\sethlcolor{blue!30!white}\hl{>{}}\sethlcolor{orange!30!white}\hl{1}\sethlcolor{green!30!white}\hl{2}} \\ 
 & \texttt{\sethlcolor{yellow!30!white}\hl{0}\sethlcolor{gray!30!white}\hl{,}\sethlcolor{red!30!white}\hl{0}\sethlcolor{blue!30!white}\hl{0}\sethlcolor{orange!30!white}\hl{0}\sethlcolor{green!30!white}\hl{.}\sethlcolor{yellow!30!white}\hl{\textbackslash{}n}\sethlcolor{gray!30!white}\hl{\#}\sethlcolor{red!30!white}\hl{\#}\sethlcolor{blue!30!white}\hl{\#}\sethlcolor{orange!30!white}\hl{\#}\sethlcolor{green!30!white}\hl{\,\,\,}\sethlcolor{yellow!30!white}\hl{1}\sethlcolor{gray!30!white}\hl{2}\sethlcolor{red!30!white}\hl{0}\sethlcolor{blue!30!white}\hl{0}\sethlcolor{orange!30!white}\hl{0}\sethlcolor{green!30!white}\hl{0}\sethlcolor{yellow!30!white}\hl{\textbackslash{}n}\sethlcolor{gray!30!white}\hl{\textbackslash{}n}\sethlcolor{red!30!white}\hl{Q}\sethlcolor{blue!30!white}\hl{u}\sethlcolor{orange!30!white}\hl{e}\sethlcolor{green!30!white}\hl{s}\sethlcolor{yellow!30!white}\hl{t}\sethlcolor{gray!30!white}\hl{i}\sethlcolor{red!30!white}\hl{o}\sethlcolor{blue!30!white}\hl{n}\sethlcolor{orange!30!white}\hl{:}} \vspace{5pt} \\ 
2-Bytes  & \texttt{\sethlcolor{red!30!white}\hl{\,\,\,T}\sethlcolor{blue!30!white}\hl{he}\sethlcolor{orange!30!white}\hl{\,\,\,h}\sethlcolor{green!30!white}\hl{ou}\sethlcolor{yellow!30!white}\hl{se}\sethlcolor{gray!30!white}\hl{\,\,\,w}\sethlcolor{red!30!white}\hl{as}\sethlcolor{blue!30!white}\hl{\,\,\,}\sethlcolor{orange!30!white}\hl{in}\sethlcolor{green!30!white}\hl{c}\sethlcolor{yellow!30!white}\hl{re}\sethlcolor{gray!30!white}\hl{as}\sethlcolor{red!30!white}\hl{ed}\sethlcolor{blue!30!white}\hl{\,\,\,b}\sethlcolor{orange!30!white}\hl{y}\sethlcolor{green!30!white}\hl{\,\,\,}\sethlcolor{yellow!30!white}\hl{1}\sethlcolor{gray!30!white}\hl{5}\sethlcolor{red!30!white}\hl{0}\sethlcolor{blue!30!white}\hl{\%}\sethlcolor{orange!30!white}\hl{\,\,\,o}\sethlcolor{green!30!white}\hl{f}\sethlcolor{yellow!30!white}\hl{\,\,\,}\sethlcolor{gray!30!white}\hl{\$}\sethlcolor{red!30!white}\hl{8}\sethlcolor{blue!30!white}\hl{0}\sethlcolor{orange!30!white}\hl{,}\sethlcolor{green!30!white}\hl{0}\sethlcolor{yellow!30!white}\hl{0}\sethlcolor{gray!30!white}\hl{0}\sethlcolor{red!30!white}\hl{,}\sethlcolor{blue!30!white}\hl{\,\,\,w}\sethlcolor{orange!30!white}\hl{h}\sethlcolor{green!30!white}\hl{ic}\sethlcolor{yellow!30!white}\hl{h}\sethlcolor{gray!30!white}\hl{\,\,\,}\sethlcolor{red!30!white}\hl{is}\sethlcolor{blue!30!white}\hl{\,\,\,}\sethlcolor{orange!30!white}\hl{\$}\sethlcolor{green!30!white}\hl{<{}<{}}\sethlcolor{yellow!30!white}\hl{8}\sethlcolor{gray!30!white}\hl{0}\sethlcolor{red!30!white}\hl{0}\sethlcolor{blue!30!white}\hl{0}\sethlcolor{orange!30!white}\hl{0}\sethlcolor{green!30!white}\hl{*}\sethlcolor{yellow!30!white}\hl{1}\sethlcolor{gray!30!white}\hl{5}\sethlcolor{red!30!white}\hl{0}\sethlcolor{blue!30!white}\hl{/}\sethlcolor{orange!30!white}\hl{1}\sethlcolor{green!30!white}\hl{0}\sethlcolor{yellow!30!white}\hl{0}\sethlcolor{gray!30!white}\hl{=}\sethlcolor{red!30!white}\hl{1}\sethlcolor{blue!30!white}\hl{2}\sethlcolor{orange!30!white}\hl{0}\sethlcolor{green!30!white}\hl{0}\sethlcolor{yellow!30!white}\hl{0}\sethlcolor{gray!30!white}\hl{0}\sethlcolor{red!30!white}\hl{>{}>{}}\sethlcolor{blue!30!white}\hl{1}\sethlcolor{orange!30!white}\hl{2}\sethlcolor{green!30!white}\hl{0}\sethlcolor{yellow!30!white}\hl{,}\sethlcolor{gray!30!white}\hl{0}\sethlcolor{red!30!white}\hl{0}\sethlcolor{blue!30!white}\hl{0}\sethlcolor{orange!30!white}\hl{.}\sethlcolor{green!30!white}\hl{\textbackslash{}n}\sethlcolor{yellow!30!white}\hl{So}\sethlcolor{gray!30!white}\hl{\,\,\,t}\sethlcolor{red!30!white}\hl{he}\sethlcolor{blue!30!white}\hl{\,\,\,h}\sethlcolor{orange!30!white}\hl{ou}\sethlcolor{green!30!white}\hl{se}} \\ 
 & \texttt{\sethlcolor{yellow!30!white}\hl{\,\,\,}\sethlcolor{gray!30!white}\hl{is}\sethlcolor{red!30!white}\hl{\,\,\,n}\sethlcolor{blue!30!white}\hl{ow}\sethlcolor{orange!30!white}\hl{\,\,\,w}\sethlcolor{green!30!white}\hl{or}\sethlcolor{yellow!30!white}\hl{th}\sethlcolor{gray!30!white}\hl{\,\,\,}\sethlcolor{red!30!white}\hl{\$}\sethlcolor{blue!30!white}\hl{8}\sethlcolor{orange!30!white}\hl{0}\sethlcolor{green!30!white}\hl{,}\sethlcolor{yellow!30!white}\hl{0}\sethlcolor{gray!30!white}\hl{0}\sethlcolor{red!30!white}\hl{0}\sethlcolor{blue!30!white}\hl{\,\,\,}\sethlcolor{orange!30!white}\hl{+}\sethlcolor{green!30!white}\hl{\,\,\,}\sethlcolor{yellow!30!white}\hl{\$}\sethlcolor{gray!30!white}\hl{1}\sethlcolor{red!30!white}\hl{2}\sethlcolor{blue!30!white}\hl{0}\sethlcolor{orange!30!white}\hl{,}\sethlcolor{green!30!white}\hl{0}\sethlcolor{yellow!30!white}\hl{0}\sethlcolor{gray!30!white}\hl{0}\sethlcolor{red!30!white}\hl{\,\,\,}\sethlcolor{blue!30!white}\hl{=}\sethlcolor{orange!30!white}\hl{\,\,\,}\sethlcolor{green!30!white}\hl{\$}\sethlcolor{yellow!30!white}\hl{<{}<{}}\sethlcolor{gray!30!white}\hl{8}\sethlcolor{red!30!white}\hl{0}\sethlcolor{blue!30!white}\hl{0}\sethlcolor{orange!30!white}\hl{0}\sethlcolor{green!30!white}\hl{0}\sethlcolor{yellow!30!white}\hl{+}\sethlcolor{gray!30!white}\hl{1}\sethlcolor{red!30!white}\hl{2}\sethlcolor{blue!30!white}\hl{0}\sethlcolor{orange!30!white}\hl{0}\sethlcolor{green!30!white}\hl{0}\sethlcolor{yellow!30!white}\hl{0}\sethlcolor{gray!30!white}\hl{=}\sethlcolor{red!30!white}\hl{2}\sethlcolor{blue!30!white}\hl{0}\sethlcolor{orange!30!white}\hl{0}\sethlcolor{green!30!white}\hl{0}\sethlcolor{yellow!30!white}\hl{0}\sethlcolor{gray!30!white}\hl{0}\sethlcolor{red!30!white}\hl{>{}>{}}\sethlcolor{blue!30!white}\hl{2}\sethlcolor{orange!30!white}\hl{0}\sethlcolor{green!30!white}\hl{0}\sethlcolor{yellow!30!white}\hl{,}\sethlcolor{gray!30!white}\hl{0}\sethlcolor{red!30!white}\hl{0}\sethlcolor{blue!30!white}\hl{0}\sethlcolor{orange!30!white}\hl{.}\sethlcolor{green!30!white}\hl{\textbackslash{}n}\sethlcolor{yellow!30!white}\hl{J}\sethlcolor{gray!30!white}\hl{os}\sethlcolor{red!30!white}\hl{h}\sethlcolor{blue!30!white}\hl{\,\,\,s}\sethlcolor{orange!30!white}\hl{p}\sethlcolor{green!30!white}\hl{en}\sethlcolor{yellow!30!white}\hl{t}\sethlcolor{gray!30!white}\hl{\,\,\,}\sethlcolor{red!30!white}\hl{\$}\sethlcolor{blue!30!white}\hl{8}\sethlcolor{orange!30!white}\hl{0}\sethlcolor{green!30!white}\hl{,}\sethlcolor{yellow!30!white}\hl{0}\sethlcolor{gray!30!white}\hl{0}\sethlcolor{red!30!white}\hl{0}\sethlcolor{blue!30!white}\hl{\,\,\,}\sethlcolor{orange!30!white}\hl{+}\sethlcolor{green!30!white}\hl{\,\,\,}\sethlcolor{yellow!30!white}\hl{\$}\sethlcolor{gray!30!white}\hl{5}\sethlcolor{red!30!white}\hl{0}\sethlcolor{blue!30!white}\hl{,}\sethlcolor{orange!30!white}\hl{0}\sethlcolor{green!30!white}\hl{0}\sethlcolor{yellow!30!white}\hl{0}\sethlcolor{gray!30!white}\hl{\,\,\,}\sethlcolor{red!30!white}\hl{=}\sethlcolor{blue!30!white}\hl{\,\,\,}\sethlcolor{orange!30!white}\hl{\$}} \\ 
 & \texttt{\sethlcolor{green!30!white}\hl{<{}<{}}\sethlcolor{yellow!30!white}\hl{8}\sethlcolor{gray!30!white}\hl{0}\sethlcolor{red!30!white}\hl{0}\sethlcolor{blue!30!white}\hl{0}\sethlcolor{orange!30!white}\hl{0}\sethlcolor{green!30!white}\hl{+}\sethlcolor{yellow!30!white}\hl{5}\sethlcolor{gray!30!white}\hl{0}\sethlcolor{red!30!white}\hl{0}\sethlcolor{blue!30!white}\hl{0}\sethlcolor{orange!30!white}\hl{0}\sethlcolor{green!30!white}\hl{=}\sethlcolor{yellow!30!white}\hl{1}\sethlcolor{gray!30!white}\hl{3}\sethlcolor{red!30!white}\hl{0}\sethlcolor{blue!30!white}\hl{0}\sethlcolor{orange!30!white}\hl{0}\sethlcolor{green!30!white}\hl{0}\sethlcolor{yellow!30!white}\hl{>{}>{}}\sethlcolor{gray!30!white}\hl{1}\sethlcolor{red!30!white}\hl{3}\sethlcolor{blue!30!white}\hl{0}\sethlcolor{orange!30!white}\hl{,}\sethlcolor{green!30!white}\hl{0}\sethlcolor{yellow!30!white}\hl{0}\sethlcolor{gray!30!white}\hl{0}\sethlcolor{red!30!white}\hl{\,\,\,}\sethlcolor{blue!30!white}\hl{on}\sethlcolor{orange!30!white}\hl{\,\,\,t}\sethlcolor{green!30!white}\hl{he}\sethlcolor{yellow!30!white}\hl{\,\,\,h}\sethlcolor{gray!30!white}\hl{ou}\sethlcolor{red!30!white}\hl{se}\sethlcolor{blue!30!white}\hl{.}\sethlcolor{orange!30!white}\hl{\textbackslash{}n}\sethlcolor{green!30!white}\hl{So}\sethlcolor{yellow!30!white}\hl{\,\,\,h}\sethlcolor{gray!30!white}\hl{is}\sethlcolor{red!30!white}\hl{\,\,\,p}\sethlcolor{blue!30!white}\hl{ro}\sethlcolor{orange!30!white}\hl{f}\sethlcolor{green!30!white}\hl{it}\sethlcolor{yellow!30!white}\hl{\,\,\,}\sethlcolor{gray!30!white}\hl{is}\sethlcolor{red!30!white}\hl{\,\,\,}\sethlcolor{blue!30!white}\hl{\$}\sethlcolor{orange!30!white}\hl{2}\sethlcolor{green!30!white}\hl{0}\sethlcolor{yellow!30!white}\hl{0}\sethlcolor{gray!30!white}\hl{,}\sethlcolor{red!30!white}\hl{0}\sethlcolor{blue!30!white}\hl{0}\sethlcolor{orange!30!white}\hl{0}\sethlcolor{green!30!white}\hl{\,\,\,}\sethlcolor{yellow!30!white}\hl{-}\sethlcolor{gray!30!white}\hl{\,\,\,}\sethlcolor{red!30!white}\hl{\$}\sethlcolor{blue!30!white}\hl{1}\sethlcolor{orange!30!white}\hl{3}\sethlcolor{green!30!white}\hl{0}\sethlcolor{yellow!30!white}\hl{,}\sethlcolor{gray!30!white}\hl{0}\sethlcolor{red!30!white}\hl{0}\sethlcolor{blue!30!white}\hl{0}\sethlcolor{orange!30!white}\hl{\,\,\,}\sethlcolor{green!30!white}\hl{=}\sethlcolor{yellow!30!white}\hl{\,\,\,}\sethlcolor{gray!30!white}\hl{\$}\sethlcolor{red!30!white}\hl{<{}<{}}\sethlcolor{blue!30!white}\hl{2}\sethlcolor{orange!30!white}\hl{0}\sethlcolor{green!30!white}\hl{0}\sethlcolor{yellow!30!white}\hl{0}\sethlcolor{gray!30!white}\hl{0}\sethlcolor{red!30!white}\hl{0}\sethlcolor{blue!30!white}\hl{-}\sethlcolor{orange!30!white}\hl{1}\sethlcolor{green!30!white}\hl{3}\sethlcolor{yellow!30!white}\hl{0}\sethlcolor{gray!30!white}\hl{0}\sethlcolor{red!30!white}\hl{0}\sethlcolor{blue!30!white}\hl{0}\sethlcolor{orange!30!white}\hl{=}} \\ 
 & \texttt{\sethlcolor{green!30!white}\hl{7}\sethlcolor{yellow!30!white}\hl{0}\sethlcolor{gray!30!white}\hl{0}\sethlcolor{red!30!white}\hl{0}\sethlcolor{blue!30!white}\hl{0}\sethlcolor{orange!30!white}\hl{>{}>{}}\sethlcolor{green!30!white}\hl{7}\sethlcolor{yellow!30!white}\hl{0}\sethlcolor{gray!30!white}\hl{,}\sethlcolor{red!30!white}\hl{0}\sethlcolor{blue!30!white}\hl{0}\sethlcolor{orange!30!white}\hl{0}\sethlcolor{green!30!white}\hl{.}\sethlcolor{yellow!30!white}\hl{\textbackslash{}n}\sethlcolor{gray!30!white}\hl{\#\#}\sethlcolor{red!30!white}\hl{\#\#}\sethlcolor{blue!30!white}\hl{\,\,\,}\sethlcolor{orange!30!white}\hl{7}\sethlcolor{green!30!white}\hl{0}\sethlcolor{yellow!30!white}\hl{0}\sethlcolor{gray!30!white}\hl{0}\sethlcolor{red!30!white}\hl{0}\sethlcolor{blue!30!white}\hl{\textbackslash{}n}\sethlcolor{orange!30!white}\hl{\textbackslash{}n}\sethlcolor{green!30!white}\hl{Qu}\sethlcolor{yellow!30!white}\hl{es}\sethlcolor{gray!30!white}\hl{on}\sethlcolor{red!30!white}\hl{s}\sethlcolor{blue!30!white}\hl{:}\sethlcolor{orange!30!white}\hl{\,\,\,A}\sethlcolor{green!30!white}\hl{\,\,\,c}\sethlcolor{yellow!30!white}\hl{ar}\sethlcolor{gray!30!white}\hl{\,\,\,t}\sethlcolor{red!30!white}\hl{ra}\sethlcolor{blue!30!white}\hl{v}\sethlcolor{orange!30!white}\hl{el}\sethlcolor{green!30!white}\hl{s}\sethlcolor{yellow!30!white}\hl{\,\,\,}\sethlcolor{gray!30!white}\hl{1}\sethlcolor{red!30!white}\hl{2}\sethlcolor{blue!30!white}\hl{0}\sethlcolor{orange!30!white}\hl{\,\,\,m}\sethlcolor{green!30!white}\hl{il}\sethlcolor{yellow!30!white}\hl{es}\sethlcolor{gray!30!white}\hl{\,\,\,}\sethlcolor{red!30!white}\hl{in}\sethlcolor{blue!30!white}\hl{\,\,\,}\sethlcolor{orange!30!white}\hl{2}\sethlcolor{green!30!white}\hl{\,\,\,h}\sethlcolor{yellow!30!white}\hl{ou}\sethlcolor{gray!30!white}\hl{rs}} \vspace{5pt} \\ 
4-Bytes  & \texttt{\sethlcolor{red!30!white}\hl{\,\,\,The}\sethlcolor{blue!30!white}\hl{\,\,\,h}\sethlcolor{orange!30!white}\hl{ouse}\sethlcolor{green!30!white}\hl{\,\,\,was}\sethlcolor{yellow!30!white}\hl{\,\,\,in}\sethlcolor{gray!30!white}\hl{cre}\sethlcolor{red!30!white}\hl{ased}\sethlcolor{blue!30!white}\hl{\,\,\,by}\sethlcolor{orange!30!white}\hl{\,\,\,}\sethlcolor{green!30!white}\hl{1}\sethlcolor{yellow!30!white}\hl{5}\sethlcolor{gray!30!white}\hl{0}\sethlcolor{red!30!white}\hl{\%}\sethlcolor{blue!30!white}\hl{\,\,\,of}\sethlcolor{orange!30!white}\hl{\,\,\,}\sethlcolor{green!30!white}\hl{\$}\sethlcolor{yellow!30!white}\hl{8}\sethlcolor{gray!30!white}\hl{0}\sethlcolor{red!30!white}\hl{,}\sethlcolor{blue!30!white}\hl{0}\sethlcolor{orange!30!white}\hl{0}\sethlcolor{green!30!white}\hl{0}\sethlcolor{yellow!30!white}\hl{,}\sethlcolor{gray!30!white}\hl{\,\,\,wh}\sethlcolor{red!30!white}\hl{ich}\sethlcolor{blue!30!white}\hl{\,\,\,is}\sethlcolor{orange!30!white}\hl{\,\,\,}\sethlcolor{green!30!white}\hl{\$}\sethlcolor{yellow!30!white}\hl{<{}<{}}\sethlcolor{gray!30!white}\hl{8}\sethlcolor{red!30!white}\hl{0}\sethlcolor{blue!30!white}\hl{0}\sethlcolor{orange!30!white}\hl{0}\sethlcolor{green!30!white}\hl{0}\sethlcolor{yellow!30!white}\hl{*}\sethlcolor{gray!30!white}\hl{1}\sethlcolor{red!30!white}\hl{5}\sethlcolor{blue!30!white}\hl{0}\sethlcolor{orange!30!white}\hl{/}\sethlcolor{green!30!white}\hl{1}\sethlcolor{yellow!30!white}\hl{0}\sethlcolor{gray!30!white}\hl{0}\sethlcolor{red!30!white}\hl{=}\sethlcolor{blue!30!white}\hl{1}\sethlcolor{orange!30!white}\hl{2}\sethlcolor{green!30!white}\hl{0}\sethlcolor{yellow!30!white}\hl{0}\sethlcolor{gray!30!white}\hl{0}\sethlcolor{red!30!white}\hl{0}\sethlcolor{blue!30!white}\hl{>{}>{}}\sethlcolor{orange!30!white}\hl{1}\sethlcolor{green!30!white}\hl{2}\sethlcolor{yellow!30!white}\hl{0}\sethlcolor{gray!30!white}\hl{,}\sethlcolor{red!30!white}\hl{0}\sethlcolor{blue!30!white}\hl{0}\sethlcolor{orange!30!white}\hl{0}\sethlcolor{green!30!white}\hl{.}\sethlcolor{yellow!30!white}\hl{\textbackslash{}n}\sethlcolor{gray!30!white}\hl{So}\sethlcolor{red!30!white}\hl{\,\,\,the}\sethlcolor{blue!30!white}\hl{\,\,\,h}\sethlcolor{orange!30!white}\hl{ouse}} \\ 
 & \texttt{\sethlcolor{green!30!white}\hl{\,\,\,is}\sethlcolor{yellow!30!white}\hl{\,\,\,now}\sethlcolor{gray!30!white}\hl{\,\,\,wor}\sethlcolor{red!30!white}\hl{th}\sethlcolor{blue!30!white}\hl{\,\,\,}\sethlcolor{orange!30!white}\hl{\$}\sethlcolor{green!30!white}\hl{8}\sethlcolor{yellow!30!white}\hl{0}\sethlcolor{gray!30!white}\hl{,}\sethlcolor{red!30!white}\hl{0}\sethlcolor{blue!30!white}\hl{0}\sethlcolor{orange!30!white}\hl{0}\sethlcolor{green!30!white}\hl{\,\,\,}\sethlcolor{yellow!30!white}\hl{+}\sethlcolor{gray!30!white}\hl{\,\,\,}\sethlcolor{red!30!white}\hl{\$}\sethlcolor{blue!30!white}\hl{1}\sethlcolor{orange!30!white}\hl{2}\sethlcolor{green!30!white}\hl{0}\sethlcolor{yellow!30!white}\hl{,}\sethlcolor{gray!30!white}\hl{0}\sethlcolor{red!30!white}\hl{0}\sethlcolor{blue!30!white}\hl{0}\sethlcolor{orange!30!white}\hl{\,\,\,}\sethlcolor{green!30!white}\hl{=}\sethlcolor{yellow!30!white}\hl{\,\,\,}\sethlcolor{gray!30!white}\hl{\$}\sethlcolor{red!30!white}\hl{<{}<{}}\sethlcolor{blue!30!white}\hl{8}\sethlcolor{orange!30!white}\hl{0}\sethlcolor{green!30!white}\hl{0}\sethlcolor{yellow!30!white}\hl{0}\sethlcolor{gray!30!white}\hl{0}\sethlcolor{red!30!white}\hl{+}\sethlcolor{blue!30!white}\hl{1}\sethlcolor{orange!30!white}\hl{2}\sethlcolor{green!30!white}\hl{0}\sethlcolor{yellow!30!white}\hl{0}\sethlcolor{gray!30!white}\hl{0}\sethlcolor{red!30!white}\hl{0}\sethlcolor{blue!30!white}\hl{=}\sethlcolor{orange!30!white}\hl{2}\sethlcolor{green!30!white}\hl{0}\sethlcolor{yellow!30!white}\hl{0}\sethlcolor{gray!30!white}\hl{0}\sethlcolor{red!30!white}\hl{0}\sethlcolor{blue!30!white}\hl{0}\sethlcolor{orange!30!white}\hl{>{}>{}}\sethlcolor{green!30!white}\hl{2}\sethlcolor{yellow!30!white}\hl{0}\sethlcolor{gray!30!white}\hl{0}\sethlcolor{red!30!white}\hl{,}\sethlcolor{blue!30!white}\hl{0}\sethlcolor{orange!30!white}\hl{0}\sethlcolor{green!30!white}\hl{0}\sethlcolor{yellow!30!white}\hl{.}\sethlcolor{gray!30!white}\hl{\textbackslash{}n}\sethlcolor{red!30!white}\hl{Josh}\sethlcolor{blue!30!white}\hl{\,\,\,sp}\sethlcolor{orange!30!white}\hl{ent}\sethlcolor{green!30!white}\hl{\,\,\,}\sethlcolor{yellow!30!white}\hl{\$}\sethlcolor{gray!30!white}\hl{8}\sethlcolor{red!30!white}\hl{0}\sethlcolor{blue!30!white}\hl{,}\sethlcolor{orange!30!white}\hl{0}\sethlcolor{green!30!white}\hl{0}\sethlcolor{yellow!30!white}\hl{0}\sethlcolor{gray!30!white}\hl{\,\,\,}\sethlcolor{red!30!white}\hl{+}\sethlcolor{blue!30!white}\hl{\,\,\,}\sethlcolor{orange!30!white}\hl{\$}\sethlcolor{green!30!white}\hl{5}\sethlcolor{yellow!30!white}\hl{0}\sethlcolor{gray!30!white}\hl{,}\sethlcolor{red!30!white}\hl{0}\sethlcolor{blue!30!white}\hl{0}\sethlcolor{orange!30!white}\hl{0}\sethlcolor{green!30!white}\hl{\,\,\,}\sethlcolor{yellow!30!white}\hl{=}\sethlcolor{gray!30!white}\hl{\,\,\,}\sethlcolor{red!30!white}\hl{\$}} \\ 
 & \texttt{\sethlcolor{blue!30!white}\hl{<{}<{}}\sethlcolor{orange!30!white}\hl{8}\sethlcolor{green!30!white}\hl{0}\sethlcolor{yellow!30!white}\hl{0}\sethlcolor{gray!30!white}\hl{0}\sethlcolor{red!30!white}\hl{0}\sethlcolor{blue!30!white}\hl{+}\sethlcolor{orange!30!white}\hl{5}\sethlcolor{green!30!white}\hl{0}\sethlcolor{yellow!30!white}\hl{0}\sethlcolor{gray!30!white}\hl{0}\sethlcolor{red!30!white}\hl{0}\sethlcolor{blue!30!white}\hl{=}\sethlcolor{orange!30!white}\hl{1}\sethlcolor{green!30!white}\hl{3}\sethlcolor{yellow!30!white}\hl{0}\sethlcolor{gray!30!white}\hl{0}\sethlcolor{red!30!white}\hl{0}\sethlcolor{blue!30!white}\hl{0}\sethlcolor{orange!30!white}\hl{>{}>{}}\sethlcolor{green!30!white}\hl{1}\sethlcolor{yellow!30!white}\hl{3}\sethlcolor{gray!30!white}\hl{0}\sethlcolor{red!30!white}\hl{,}\sethlcolor{blue!30!white}\hl{0}\sethlcolor{orange!30!white}\hl{0}\sethlcolor{green!30!white}\hl{0}\sethlcolor{yellow!30!white}\hl{\,\,\,on}\sethlcolor{gray!30!white}\hl{\,\,\,the}\sethlcolor{red!30!white}\hl{\,\,\,h}\sethlcolor{blue!30!white}\hl{ouse}\sethlcolor{orange!30!white}\hl{.}\sethlcolor{green!30!white}\hl{\textbackslash{}n}\sethlcolor{yellow!30!white}\hl{So}\sethlcolor{gray!30!white}\hl{\,\,\,his}\sethlcolor{red!30!white}\hl{\,\,\,pro}\sethlcolor{blue!30!white}\hl{fit}\sethlcolor{orange!30!white}\hl{\,\,\,is}\sethlcolor{green!30!white}\hl{\,\,\,}\sethlcolor{yellow!30!white}\hl{\$}\sethlcolor{gray!30!white}\hl{2}\sethlcolor{red!30!white}\hl{0}\sethlcolor{blue!30!white}\hl{0}\sethlcolor{orange!30!white}\hl{,}\sethlcolor{green!30!white}\hl{0}\sethlcolor{yellow!30!white}\hl{0}\sethlcolor{gray!30!white}\hl{0}\sethlcolor{red!30!white}\hl{\,\,\,}\sethlcolor{blue!30!white}\hl{-}\sethlcolor{orange!30!white}\hl{\,\,\,}\sethlcolor{green!30!white}\hl{\$}\sethlcolor{yellow!30!white}\hl{1}\sethlcolor{gray!30!white}\hl{3}\sethlcolor{red!30!white}\hl{0}\sethlcolor{blue!30!white}\hl{,}\sethlcolor{orange!30!white}\hl{0}\sethlcolor{green!30!white}\hl{0}\sethlcolor{yellow!30!white}\hl{0}\sethlcolor{gray!30!white}\hl{\,\,\,}\sethlcolor{red!30!white}\hl{=}\sethlcolor{blue!30!white}\hl{\,\,\,}\sethlcolor{orange!30!white}\hl{\$}\sethlcolor{green!30!white}\hl{<{}<{}}\sethlcolor{yellow!30!white}\hl{2}\sethlcolor{gray!30!white}\hl{0}\sethlcolor{red!30!white}\hl{0}\sethlcolor{blue!30!white}\hl{0}\sethlcolor{orange!30!white}\hl{0}\sethlcolor{green!30!white}\hl{0}\sethlcolor{yellow!30!white}\hl{-}\sethlcolor{gray!30!white}\hl{1}\sethlcolor{red!30!white}\hl{3}\sethlcolor{blue!30!white}\hl{0}\sethlcolor{orange!30!white}\hl{0}\sethlcolor{green!30!white}\hl{0}\sethlcolor{yellow!30!white}\hl{0}\sethlcolor{gray!30!white}\hl{=}} \\ 
 & \texttt{\sethlcolor{red!30!white}\hl{7}\sethlcolor{blue!30!white}\hl{0}\sethlcolor{orange!30!white}\hl{0}\sethlcolor{green!30!white}\hl{0}\sethlcolor{yellow!30!white}\hl{0}\sethlcolor{gray!30!white}\hl{>{}>{}}\sethlcolor{red!30!white}\hl{7}\sethlcolor{blue!30!white}\hl{0}\sethlcolor{orange!30!white}\hl{,}\sethlcolor{green!30!white}\hl{0}\sethlcolor{yellow!30!white}\hl{0}\sethlcolor{gray!30!white}\hl{0}\sethlcolor{red!30!white}\hl{.}\sethlcolor{blue!30!white}\hl{\textbackslash{}n}\sethlcolor{orange!30!white}\hl{\#\#\#\#}\sethlcolor{green!30!white}\hl{\,\,\,}\sethlcolor{yellow!30!white}\hl{7}\sethlcolor{gray!30!white}\hl{0}\sethlcolor{red!30!white}\hl{0}\sethlcolor{blue!30!white}\hl{0}\sethlcolor{orange!30!white}\hl{0}\sethlcolor{green!30!white}\hl{\textbackslash{}n}\sethlcolor{yellow!30!white}\hl{\textbackslash{}n}\sethlcolor{gray!30!white}\hl{Qu}\sethlcolor{red!30!white}\hl{est}\sethlcolor{blue!30!white}\hl{ion}\sethlcolor{orange!30!white}\hl{:}} \vspace{5pt} \\ 
8-Bytes  & \texttt{\sethlcolor{red!30!white}\hl{\,\,\,The}\sethlcolor{blue!30!white}\hl{\,\,\,house}\sethlcolor{orange!30!white}\hl{\,\,\,was}\sethlcolor{green!30!white}\hl{\,\,\,worth}\sethlcolor{yellow!30!white}\hl{\,\,\,}\sethlcolor{gray!30!white}\hl{\$}\sethlcolor{red!30!white}\hl{8}\sethlcolor{blue!30!white}\hl{0}\sethlcolor{orange!30!white}\hl{,}\sethlcolor{green!30!white}\hl{0}\sethlcolor{yellow!30!white}\hl{0}\sethlcolor{gray!30!white}\hl{0}\sethlcolor{red!30!white}\hl{\,\,\,}\sethlcolor{blue!30!white}\hl{+}\sethlcolor{orange!30!white}\hl{\,\,\,}\sethlcolor{green!30!white}\hl{\$}\sethlcolor{yellow!30!white}\hl{5}\sethlcolor{gray!30!white}\hl{0}\sethlcolor{red!30!white}\hl{,}\sethlcolor{blue!30!white}\hl{0}\sethlcolor{orange!30!white}\hl{0}\sethlcolor{green!30!white}\hl{0}\sethlcolor{yellow!30!white}\hl{\,\,\,}\sethlcolor{gray!30!white}\hl{=}\sethlcolor{red!30!white}\hl{\,\,\,}\sethlcolor{blue!30!white}\hl{\$}\sethlcolor{orange!30!white}\hl{<{}<{}}\sethlcolor{green!30!white}\hl{8}\sethlcolor{yellow!30!white}\hl{0}\sethlcolor{gray!30!white}\hl{0}\sethlcolor{red!30!white}\hl{0}\sethlcolor{blue!30!white}\hl{0}\sethlcolor{orange!30!white}\hl{+}\sethlcolor{green!30!white}\hl{5}\sethlcolor{yellow!30!white}\hl{0}\sethlcolor{gray!30!white}\hl{0}\sethlcolor{red!30!white}\hl{0}\sethlcolor{blue!30!white}\hl{0}\sethlcolor{orange!30!white}\hl{=}\sethlcolor{green!30!white}\hl{1}\sethlcolor{yellow!30!white}\hl{3}\sethlcolor{gray!30!white}\hl{0}\sethlcolor{red!30!white}\hl{0}\sethlcolor{blue!30!white}\hl{0}\sethlcolor{orange!30!white}\hl{0}\sethlcolor{green!30!white}\hl{>{}>{}}\sethlcolor{yellow!30!white}\hl{1}\sethlcolor{gray!30!white}\hl{3}\sethlcolor{red!30!white}\hl{0}\sethlcolor{blue!30!white}\hl{,}\sethlcolor{orange!30!white}\hl{0}\sethlcolor{green!30!white}\hl{0}\sethlcolor{yellow!30!white}\hl{0}\sethlcolor{gray!30!white}\hl{\,\,\,after}\sethlcolor{red!30!white}\hl{\,\,\,the}\sethlcolor{blue!30!white}\hl{\,\,\,repairs}\sethlcolor{orange!30!white}\hl{.}\sethlcolor{green!30!white}\hl{\textbackslash{}n}\sethlcolor{yellow!30!white}\hl{The}\sethlcolor{gray!30!white}\hl{\,\,\,value}} \\ 
 & \texttt{\sethlcolor{red!30!white}\hl{\,\,\,of}\sethlcolor{blue!30!white}\hl{\,\,\,the}\sethlcolor{orange!30!white}\hl{\,\,\,house}\sethlcolor{green!30!white}\hl{\,\,\,incre}\sethlcolor{yellow!30!white}\hl{ased}\sethlcolor{gray!30!white}\hl{\,\,\,by}\sethlcolor{red!30!white}\hl{\,\,\,}\sethlcolor{blue!30!white}\hl{1}\sethlcolor{orange!30!white}\hl{5}\sethlcolor{green!30!white}\hl{0}\sethlcolor{yellow!30!white}\hl{\%,}\sethlcolor{gray!30!white}\hl{\,\,\,so}\sethlcolor{red!30!white}\hl{\,\,\,the}\sethlcolor{blue!30!white}\hl{\,\,\,house}\sethlcolor{orange!30!white}\hl{\,\,\,is}\sethlcolor{green!30!white}\hl{\,\,\,now}\sethlcolor{yellow!30!white}\hl{\,\,\,worth}\sethlcolor{gray!30!white}\hl{\,\,\,}\sethlcolor{red!30!white}\hl{\$}\sethlcolor{blue!30!white}\hl{1}\sethlcolor{orange!30!white}\hl{3}\sethlcolor{green!30!white}\hl{0}\sethlcolor{yellow!30!white}\hl{,}\sethlcolor{gray!30!white}\hl{0}\sethlcolor{red!30!white}\hl{0}\sethlcolor{blue!30!white}\hl{0}\sethlcolor{orange!30!white}\hl{\,\,\,}\sethlcolor{green!30!white}\hl{*}\sethlcolor{yellow!30!white}\hl{\,\,\,}\sethlcolor{gray!30!white}\hl{1}\sethlcolor{red!30!white}\hl{5}\sethlcolor{blue!30!white}\hl{0}\sethlcolor{orange!30!white}\hl{/}\sethlcolor{green!30!white}\hl{1}\sethlcolor{yellow!30!white}\hl{0}\sethlcolor{gray!30!white}\hl{0}\sethlcolor{red!30!white}\hl{\,\,\,}\sethlcolor{blue!30!white}\hl{=}\sethlcolor{orange!30!white}\hl{\,\,\,}\sethlcolor{green!30!white}\hl{\$}\sethlcolor{yellow!30!white}\hl{<{}<{}}\sethlcolor{gray!30!white}\hl{1}\sethlcolor{red!30!white}\hl{3}\sethlcolor{blue!30!white}\hl{0}\sethlcolor{orange!30!white}\hl{0}\sethlcolor{green!30!white}\hl{0}\sethlcolor{yellow!30!white}\hl{0}\sethlcolor{gray!30!white}\hl{*}\sethlcolor{red!30!white}\hl{1}\sethlcolor{blue!30!white}\hl{5}\sethlcolor{orange!30!white}\hl{0}\sethlcolor{green!30!white}\hl{/}\sethlcolor{yellow!30!white}\hl{1}\sethlcolor{gray!30!white}\hl{0}\sethlcolor{red!30!white}\hl{0}\sethlcolor{blue!30!white}\hl{=}\sethlcolor{orange!30!white}\hl{1}\sethlcolor{green!30!white}\hl{9}\sethlcolor{yellow!30!white}\hl{5}\sethlcolor{gray!30!white}\hl{0}\sethlcolor{red!30!white}\hl{0}\sethlcolor{blue!30!white}\hl{0}} \\ 
 & \texttt{\sethlcolor{orange!30!white}\hl{>{}>{}}\sethlcolor{green!30!white}\hl{1}\sethlcolor{yellow!30!white}\hl{9}\sethlcolor{gray!30!white}\hl{5}\sethlcolor{red!30!white}\hl{,}\sethlcolor{blue!30!white}\hl{0}\sethlcolor{orange!30!white}\hl{0}\sethlcolor{green!30!white}\hl{0}\sethlcolor{yellow!30!white}\hl{.}\sethlcolor{gray!30!white}\hl{\textbackslash{}n}\sethlcolor{red!30!white}\hl{Josh}\sethlcolor{blue!30!white}\hl{\,\,\,made}\sethlcolor{orange!30!white}\hl{\,\,\,a}\sethlcolor{green!30!white}\hl{\,\,\,profit}\sethlcolor{yellow!30!white}\hl{\,\,\,of}\sethlcolor{gray!30!white}\hl{\,\,\,}\sethlcolor{red!30!white}\hl{\$}\sethlcolor{blue!30!white}\hl{1}\sethlcolor{orange!30!white}\hl{9}\sethlcolor{green!30!white}\hl{5}\sethlcolor{yellow!30!white}\hl{,}\sethlcolor{gray!30!white}\hl{0}\sethlcolor{red!30!white}\hl{0}\sethlcolor{blue!30!white}\hl{0}\sethlcolor{orange!30!white}\hl{\,\,\,}\sethlcolor{green!30!white}\hl{-}\sethlcolor{yellow!30!white}\hl{\,\,\,}\sethlcolor{gray!30!white}\hl{\$}\sethlcolor{red!30!white}\hl{1}\sethlcolor{blue!30!white}\hl{3}\sethlcolor{orange!30!white}\hl{0}\sethlcolor{green!30!white}\hl{,}\sethlcolor{yellow!30!white}\hl{0}\sethlcolor{gray!30!white}\hl{0}\sethlcolor{red!30!white}\hl{0}\sethlcolor{blue!30!white}\hl{\,\,\,}\sethlcolor{orange!30!white}\hl{=}\sethlcolor{green!30!white}\hl{\,\,\,}\sethlcolor{yellow!30!white}\hl{\$}\sethlcolor{gray!30!white}\hl{<{}<{}}\sethlcolor{red!30!white}\hl{1}\sethlcolor{blue!30!white}\hl{9}\sethlcolor{orange!30!white}\hl{5}\sethlcolor{green!30!white}\hl{0}\sethlcolor{yellow!30!white}\hl{0}\sethlcolor{gray!30!white}\hl{0}\sethlcolor{red!30!white}\hl{-}\sethlcolor{blue!30!white}\hl{1}\sethlcolor{orange!30!white}\hl{3}\sethlcolor{green!30!white}\hl{0}\sethlcolor{yellow!30!white}\hl{0}\sethlcolor{gray!30!white}\hl{0}\sethlcolor{red!30!white}\hl{0}\sethlcolor{blue!30!white}\hl{=}\sethlcolor{orange!30!white}\hl{6}\sethlcolor{green!30!white}\hl{5}\sethlcolor{yellow!30!white}\hl{0}\sethlcolor{gray!30!white}\hl{0}\sethlcolor{red!30!white}\hl{0}\sethlcolor{blue!30!white}\hl{>{}>{}}\sethlcolor{orange!30!white}\hl{6}\sethlcolor{green!30!white}\hl{5}\sethlcolor{yellow!30!white}\hl{,}\sethlcolor{gray!30!white}\hl{0}\sethlcolor{red!30!white}\hl{0}\sethlcolor{blue!30!white}\hl{0}\sethlcolor{orange!30!white}\hl{.}\sethlcolor{green!30!white}\hl{\textbackslash{}n}\sethlcolor{yellow!30!white}\hl{\#\#\#\#}\sethlcolor{gray!30!white}\hl{\,\,\,}\sethlcolor{red!30!white}\hl{6}\sethlcolor{blue!30!white}\hl{5}\sethlcolor{orange!30!white}\hl{0}\sethlcolor{green!30!white}\hl{0}\sethlcolor{yellow!30!white}\hl{0}} \\ 
 & \texttt{\sethlcolor{gray!30!white}\hl{\textbackslash{}n}\sethlcolor{red!30!white}\hl{\textbackslash{}n}\sethlcolor{blue!30!white}\hl{Question}\sethlcolor{orange!30!white}\hl{:}} \\ 
    \bottomrule
    \end{tabular}
}
\caption{A cherry-picked example where we found the vocabulary-reduced models (of Falcon3-7B) do not agree with each other in the final answer. Among them, only the 2-bytes and 4-bytes models arrived at the correct answer, though the 2-bytes model made a spelling mistake soon after the answer. The 1-byte model made a calculation error in the middle, and the 8-byte model's answer was wrong from the beginning. }
\end{table}

\subsection{Extended Results of Ensemble}

\subsubsection{Evaluation with Additional Model Pairs}

\begin{table}[H]
\hfill
\begin{minipage}[t]{0.245\textwidth}
\hfill
\resizebox{1.0\linewidth}{!}{
\begin{tabular}{lc}\toprule
\it{Single}  & GSM8K   \\
\midrule
Qwen2.5-3B & 71.27  \\
OLMo2-13B & {\bf 71.87}  \\
\midrule
\textit{Ensemble (PoE)} &  \\
\midrule
\proposed{} (1-Bytes) & 74.00  \\
\proposed{} (MCV)     & {\bf 75.13}  \\
\bottomrule
\end{tabular}
}
\hfill
\end{minipage}
\begin{minipage}[t]{0.245\textwidth}
\hfill
\resizebox{1.0\linewidth}{!}{
\begin{tabular}{lc}\toprule
\it{Single}  & GSM8K   \\
\midrule
OLMo2-13B & 71.87 \\
Falcon3-7B & {\bf 76.65}  \\
\midrule
\textit{Ensemble (PoE)} &  \\
\midrule
\proposed{} (1-Bytes) & {\bf 72.18}  \\
\proposed{} (MCV)     & 69.37 \\
\bottomrule
\end{tabular}
}
\hfill
\end{minipage}
\begin{minipage}[t]{0.245\textwidth}
\hfill
\resizebox{1.0\linewidth}{!}{
\begin{tabular}{lc}\toprule
\it{Single}  & GSM8K   \\
\midrule
Phi2-3B & {\bf 56.71}  \\
Llama3.1-8B & 50.64  \\
\midrule
\textit{Ensemble (PoE)} &  \\
\midrule
\proposed{} (1-Bytes) & 57.62  \\
\proposed{} (MCV)     & {\bf 58.53}  \\
\bottomrule
\end{tabular}
}
\hfill
\end{minipage}
\begin{minipage}[t]{0.245\textwidth}
\hfill
\resizebox{1.0\linewidth}{!}{
\begin{tabular}{lc}\toprule
\it{Single}  & GSM8K   \\
\midrule
Phi2-3B & {\bf 56.71}  \\
Yi1.5-6B & 51.86  \\
\midrule
\textit{Ensemble (PoE)} &  \\
\midrule
\proposed{} (1-Bytes) & 61.56 \\
\proposed{} (MCV)     & {\bf 61.64}  \\
\bottomrule
\end{tabular}
}
\hfill
\end{minipage}

\caption{ Results of ensemble by product of experts, with various model pairs. }
\end{table}

\subsubsection{Evaluation on Translation Tasks}

\begin{table}[H]
\centering
\begin{minipage}[t]{0.55\textwidth}
\hfill
\resizebox{1.0\linewidth}{!}{
\begin{tabular}{lcccc}\toprule
\it{Single}  & En$\to$Fr & Fr$\to$En & En$\to$De & De$\to$En  \\
\midrule
Qwen2.5-3B & 25.09 &  35.26 & 17.23 & \textbf{36.72} \\
Falcon3-7B &  \textbf{33.47} & \textbf{36.05}  & \textbf{17.43} & 33.53  \\
\midrule
\textit{Ensemble (PoE)} & & & & \\
\midrule
\proposed{} (1-Bytes) & \textbf{34.18} & 35.90 & \textbf{22.21} & \textbf{36.88}  \\
\proposed{} (MCV)     & 33.77 & \textbf{36.46} & 20.46 & 36.75  \\
\bottomrule
\end{tabular}
}
\hfill
\end{minipage}
\caption{Results of ensemble by product of experts on translation benchmark. English-French translation is evaluated on the WMT14 dataset~\citep{wmt14}, and English-German translation is evaluated on the WMT16 dataset~\citep{wmt16}. The BLEU scores are reported. }
\end{table}

\subsubsection{Summary of Common Vocabularies}

\begin{table}[H]
    \centering
    \hfill
    \begin{minipage}{0.32\textwidth}
    \resizebox{1.0\linewidth}{!}{
    \begin{tabular}{rc}\toprule
      & Vocabulary Size   \\
    \midrule
    Qwen2.5-3B & 151,665  \\
    Falcon3-7B & 131,072 \\
    \midrule
    MCV     & 63,552  \\
    \bottomrule
    \end{tabular}
    }
    \end{minipage}
    \hfill
    \begin{minipage}{0.32\textwidth}
    \resizebox{1.0\linewidth}{!}{
    \begin{tabular}{rc}\toprule
      & Vocabulary Size   \\
    \midrule
    Qwen2.5-3B & 151,665  \\
    OLMo2-13B & 100,278 \\
    \midrule
    MCV     & 99,162  \\
    \bottomrule
    \end{tabular}
    }
    \end{minipage}
    \hfill
    \begin{minipage}{0.32\textwidth}
    \resizebox{1.0\linewidth}{!}{
    \begin{tabular}{rc}\toprule
      & Vocabulary Size   \\
    \midrule
      OLMo2-13B &  100,278   \\
      Falcon3-7B &   131,072 \\
    \midrule
    MCV     &   62,538  \\
    \bottomrule
    \end{tabular}

    }
    \end{minipage}
    \hfill
    \\
    \vspace{8pt}
    \hfill
    \begin{minipage}{0.295\textwidth}
    \resizebox{1.0\linewidth}{!}{
    \begin{tabular}{rc}\toprule
      & Vocabulary Size   \\
    \midrule
      Phi2-3B &   50,295  \\
      Yi1.5-6B &   63,992  \\
    \midrule
    MCV     &   32,932  \\
    \bottomrule
    \end{tabular}
    }
    \end{minipage}
    \hspace{4pt}
    \begin{minipage}{0.33\textwidth}
    \resizebox{1.0\linewidth}{!}{
    \begin{tabular}{rc}\toprule
      & Vocabulary Size   \\
    \midrule
     Phi2-3B  &  50,295   \\
     Llama3.1-8B  &  128,256  \\
    \midrule
    MCV     &   43,247  \\
    \bottomrule
    \end{tabular}
    }
    \end{minipage}
    \hfill
    \caption{Number of tokens in each vocabulary and the maximal common vocabulary (MCV) used in the ensemble experiments.}
\end{table}

\subsection{Evaluation of Vocabulary Reduction with a Positive Temperature}

\begin{table}[H]
    \centering

    \begin{minipage}[t]{0.78\textwidth}
    \hfill
    \resizebox{1.0\linewidth}{!}{

    \begin{tabular}{@{}cc|c|cccc@{}}\toprule
    Models & Methods & Full & $1$ Bytes &$\leq2$ Bytes & $\leq4$ Bytes & $\leq8$ Bytes \\
    \midrule
    \multirow{2}{*}{Phi2-3B} & Naive &  \multirow{2}{*}{33.66} & 0.00 & 0.00 &  0.75 & 29.19 \\
     & \proposed{} &  & 33.06 & 33.43 & 31.99 & 34.04 \\
    \midrule
    \multirow{2}{*}{Qwen2.5-3B} & Naive &  \multirow{2}{*}{45.72} & 0.00 & 0.00 & 18.35 & 43.06 \\
     & \proposed{} &  & 45.72 & 46.02 & 45.03 & 46.63 \\
    \midrule
    \multirow{2}{*}{Falcon3-7B} & Naive &  \multirow{2}{*}{47.54} & 0.00 & 0.00 & 16.68 & 45.64 \\
     & \proposed{} &  & 47.61 & 49.73 & 50.04 & 48.60 \\
     \bottomrule
    \end{tabular}
    
    }
    \end{minipage}
    \caption{ Results of vocabulary reduction with random sampling with the temperature $1.0$. The closeness of accuracy to the {\bf Full} accuracy implies how well the vocabulary-reduced model approximates the original model as text distributions. }
\end{table}

\subsection{Ablation Study on the Choice of $K$}\label{app:sec:ablation study on K}

\subsubsection{Effects on Vocabulary Reduction}

\begin{table}[H]
    \centering
    \begin{tabular}{rcccccc}
    \toprule
          & $K=1$ & $K=2$ & $K=10$ & $K=50$ & $K=250$ & $K=1250$ \\
     \midrule
        1 Bytes  &   79.30  &  79.08  &  78.92  & 79.00  &  79.15  & 79.30  \\
     \midrule
        2 Bytes  &   79.30  &  79.38  &  79.45  & 79.61  &  79.38  &  79.38 \\
     \midrule
        4 Bytes  &   78.77  &  79.08  &  79.15  & 79.15  &  79.15  &  79.15 \\
     \midrule
        8 Bytes  &   79.30  &  79.38  &  79.23  & 79.30  &  79.30  &  79.30 \\
     \midrule
     \midrule
     Original & \multicolumn{6}{c}{76.65}  \\
     \bottomrule
    \end{tabular}
    \caption{ Accuracy on GSM8K by greedy decoding of Falcon3-7B  with various top-$K$ approximation of Algorithm~\ref{alg:efficient version}. From the results, we can see that even $K=1$ suffices to achieve comparable accuracy to the original model. This is because the top-1 probability from the original model is dominant in computing the top-1 probability for the vocabulary-reduced model in \Cref{alg:efficient version}. }
    \label{app:tab:greedy decoding with various K}
\end{table}

\begin{table}[H]
    \centering
    \begin{tabular}{rcccccc}
    \toprule
                 & $K=1$ & $K=2$ & $K=10$ & $K=50$ & $K=250$ & $K=1250$ \\
     \midrule
        1 Bytes  &   78.09  &  66.34  &  53.30  & 48.82  &  50.19  &  \textbf{47.61}  \\
     \midrule
        2 Bytes  &  79.30  & 67.70  &  51.48 & 50.11  &  50.19  &  \textbf{49.66} \\
     \midrule
        4 Bytes  &   78.54  &  67.85  &  53.45  & 51.18  &  49.20  &  \textbf{48.37} \\
     \midrule
        8 Bytes  &   79.08  &  68.01  &  54.13  & 49.43  &  49.43  &  \textbf{49.36} \\
     \midrule
     \midrule
     Original & \multicolumn{6}{c}{\textbf{47.54}}  \\
     \bottomrule
    \end{tabular}
    \caption{ Accuracy on GSM8K by random sampling (with temperature 1.0) from Falcon3-7B  with various top-$K$ approximation of Algorithm~\ref{alg:efficient version}. The accuracy nearest to the original one is shown in bold. In contrast to the case of greedy decoding (\Cref{app:tab:greedy decoding with various K}), we can see that a larger $K$ is required to approximate the behavior of the original model as a distribution.}
\end{table}

\subsubsection{Effects on Ensemble Results}

\begin{table}[H]
    \centering

    \begin{minipage}[t]{0.98\textwidth}
    \hfill
    \resizebox{1.0\linewidth}{!}{

    \begin{tabular}{c|cccccc}
    \toprule
          & $K=1$ & $K=2$ & $K=10$ & $K=50$ & $K=250$ & $K=1250$ \\
     \midrule
        Qwen2.5-0.5B \& OLMo2-1B  &   29.49  &  38.06  &  39.42  &  39.35 &  39.50  &  39.42 \\
        Qwen2.5-3B \& Falcon3-7B  &   75.66  &  80.59  &  80.67  & 80.89  &  80.82  &  80.97  \\
        Phi2-3B \& Yi1.5-6B  &   45.56  &  59.59 &  61.71  & 61.56  &  61.87  &  61.79 \\
     \bottomrule
    \end{tabular}
    }
    \end{minipage}

    \caption{ Ensemble results on GSM8K with various top-$K$ approximation of Algorithm~\ref{alg:efficient version}. Here we employ greedy decoding in the same way as Section~\ref{sec:experiments}. }
\end{table}

\newpage

\subsubsection{Effects on Computational Overhead in \Cref{alg:efficient version}}

\begin{figure}[H]
    \centering
    \begin{minipage}[h]{0.7\textwidth}
        \includegraphics[width=1.0\linewidth]{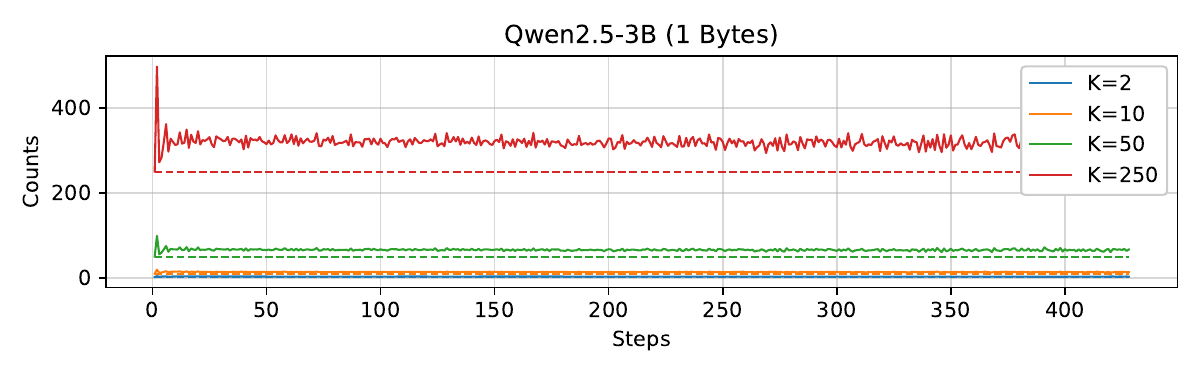}
    \end{minipage}
    \begin{minipage}[h]{0.7\textwidth}
        \includegraphics[width=1.0\linewidth]{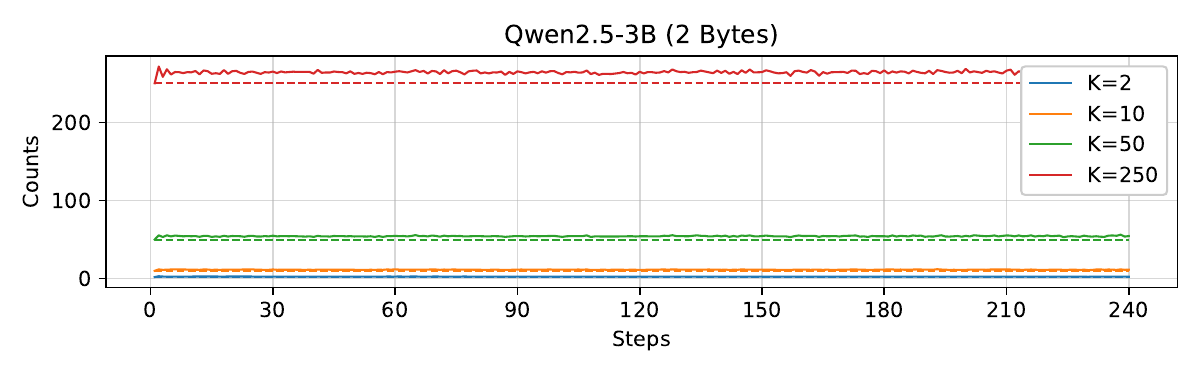}
    \end{minipage}
    \begin{minipage}[h]{0.7\textwidth}
        \includegraphics[width=1.0\linewidth]{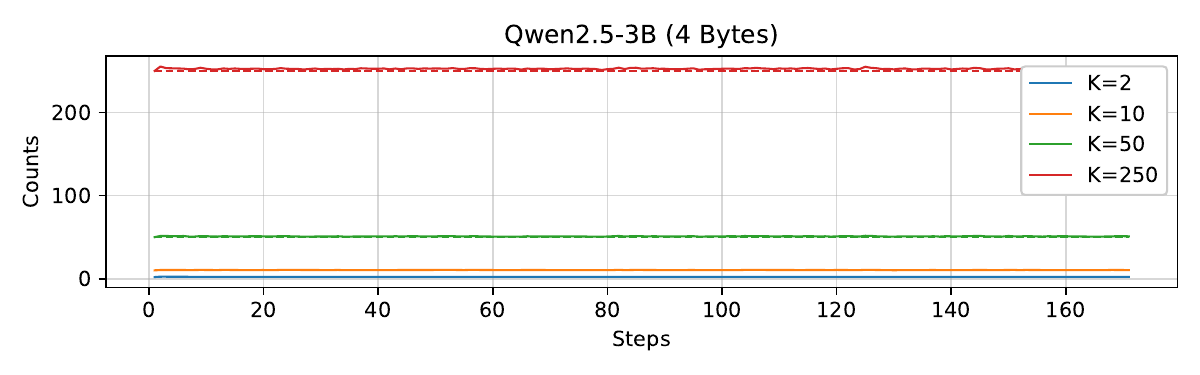}
    \end{minipage}
    \begin{minipage}[h]{0.7\textwidth}
        \includegraphics[width=1.0\linewidth]{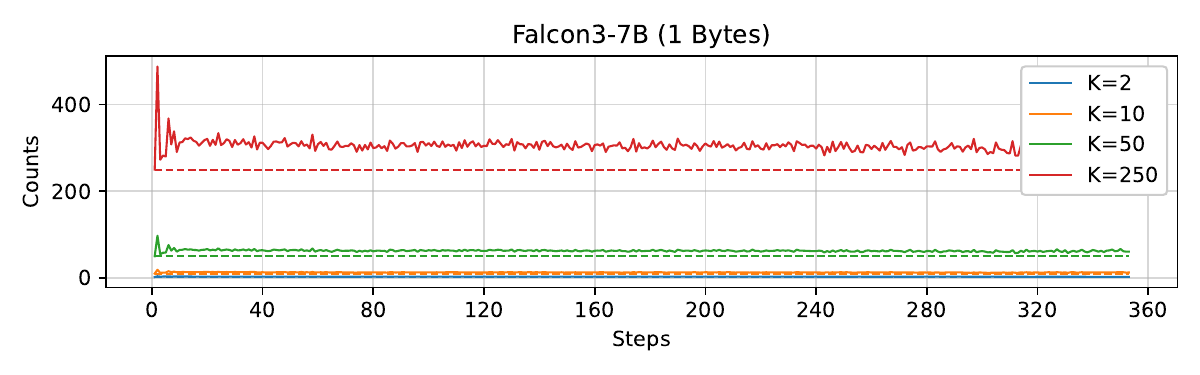}
    \end{minipage}
    \begin{minipage}[h]{0.7\textwidth}
        \includegraphics[width=1.0\linewidth]{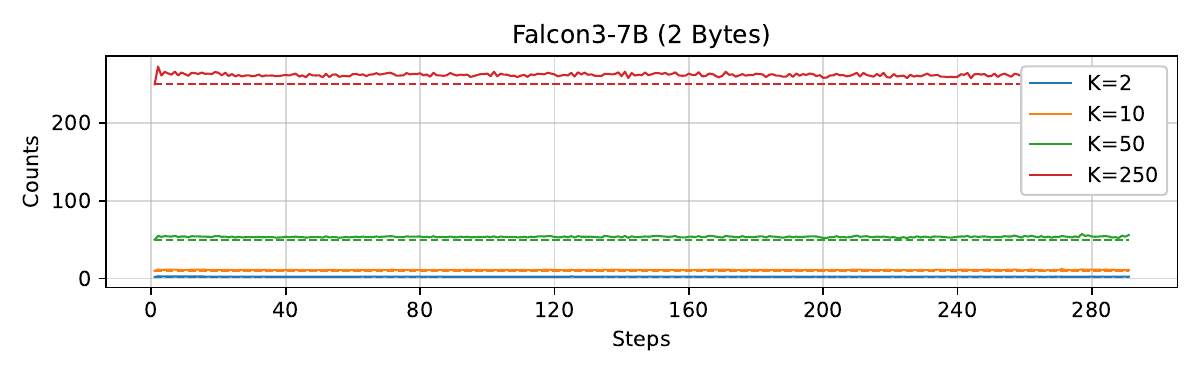}
    \end{minipage}
    \begin{minipage}[h]{0.7\textwidth}
        \includegraphics[width=1.0\linewidth]{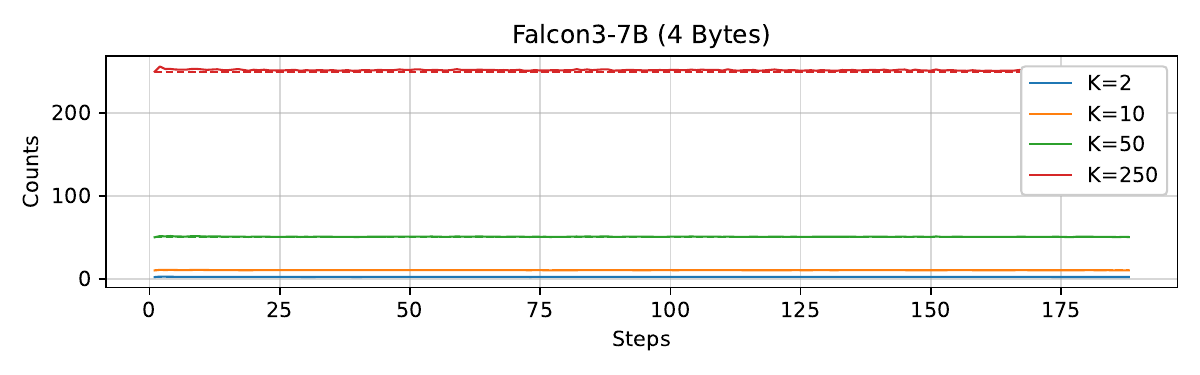}
    \end{minipage}
    \caption{We plotted the numbers of all summands (Counts) for computing each next-token distribution of the vocabulary-reduced model in \Cref{alg:efficient version} (lines 11 and 16), averaged over $100$ generations for each token position (Steps). In other words, these results indicate how the actual computational overhead of \Cref{alg:efficient version} varies by the choice of the hyperparameter $K$. The dotted line shows the number of top-$K$ probabilities (line 16) which exactly equals to $K$ by definition, and the solid line shows the total number of added probabilities including the past ones (line 11). The results imply the overhead is bounded by a relatively small number with respect to $K$, except for the first few steps in generation. }
\end{figure}

\end{document}